\documentclass{article}


\usepackage[nonatbib,final]{neurips_2025}

\usepackage[backend=biber,style=numeric,sorting=ynt]{biblatex}
\usepackage[dvipsnames]{xcolor}
\addbibresource{references.bib}




\usepackage[utf8]{inputenc} 
\usepackage[T1]{fontenc}    
\usepackage{hyperref}       
\usepackage{url}            

\usepackage[table]{xcolor}

\usepackage{booktabs}       
\usepackage{amsfonts}       
\usepackage{nicefrac}       
\usepackage{microtype}      
\usepackage{xcolor}         

\usepackage{tcolorbox}

\usepackage{amsmath,amsthm,amssymb,bm,refcount}
\usepackage[font=small]{caption}
\usepackage{float}
\usepackage{graphicx}%
\usepackage{tabularx}
\usepackage{subcaption}
\graphicspath{{images/}}

\usepackage{appendix}
\usepackage{multirow}
\usepackage{adjustbox}

\title{Masked Diffusion Models as Energy Minimization}

\theoremstyle{plain}
\newtheorem{definition}{Definition}[section]
\newtheorem{theorem}{Theorem}[section]
\newtheorem{lemma}[theorem]{Lemma}

\newtheorem{proposition}[theorem]{Proposition}

\newtheorem{condition}[theorem]{Condition}
\newtheorem{example}[theorem]{Example}

\newenvironment{appendixTheorem}[1]{%
  \begin{theorem}}
  {\end{theorem}%
  }

\newenvironment{appendixLemma}[1]{%
  \begin{lemma}}
  {\end{lemma}%
  }

\newenvironment{appendixProposition}[1]{%
  \begin{proposition}}
  {\end{proposition}%
  }

\newenvironment{appendixExample}[1]{%
  \begin{example}}
  {\end{example}%
  }

%

\author{%
  Sitong Chen$^{1,2,3}$, Shen Nie$^{1,2,3}$, Jiacheng Sun$^{4}$, \\ \textbf{Zijin Feng$^{4}$, Zhenguo Li$^{4}$, Ji-Rong Wen$^{1,2,3}$, Chongxuan Li$^{1,2,3}$}\thanks{Correspondence to Chongxuan Li.} \\
  $^1$ Gaoling School of Artificial Intelligence, Renmin University of China \\
  $^2$ Beijing Key Laboratory of Research on Large Models and Intelligent Governance \\
  $^3$ Engineering Research Center of Next-Generation Intelligent Search and Recommendation, MOE \\
  $^4$ Huawei Noah's Ark Lab \\
  \texttt{\{chensitong0809,nieshen,jrwen,chongxuanli\}@ruc.edu.cn;}\\ \texttt{\{sunjiacheng1,zijin.feng,Li.Zhenguo\}@huawei.com}  
}

\begin{document}

\maketitle

\begin{abstract}
We present a systematic theoretical framework that interprets masked diffusion models (MDMs) as solutions to energy minimization problems in discrete optimal transport. Specifically, we prove that three distinct energy formulations—kinetic, conditional kinetic, and geodesic energy—are mathematically equivalent under the structure of MDMs, and that MDMs minimize all three when the mask schedule satisfies a closed-form optimality condition. This unification not only clarifies the theoretical foundations of MDMs, but also motivates practical improvements in sampling. By parameterizing interpolation schedules via Beta distributions, we reduce the schedule design space to a tractable 2D search, enabling efficient post-training tuning without model modification. Experiments on synthetic and real-world benchmarks demonstrate that our energy-inspired schedules outperform hand-crafted baselines, particularly in low-step sampling settings.
\end{abstract}

\section{Introduction} 

Masked diffusion models (MDMs)~\cite{sohl2015deep,austin2021structured,sedd,radd,mdlm,shi2024simplified} have emerged as a powerful class of generative models for discrete data. By reversing a stochastic masking process, MDMs iteratively generate sequences through a series of unmasking steps, guided by learned denoising functions. This simple yet flexible architecture has shown promising empirical performance across text generation~\cite{llada}, protein generation~\cite{protein1,protein2}, and image generation~\cite{chang2022maskgit, chang2023muse, you2025effective}.  


Despite their empirical success, the underlying principles that govern the sampling efficiency of MDMs—particularly in few-step regimes—remain poorly understood. Existing works typically adopt manually designed mask schedules (e.g., linear or sine) without theoretical justification. In contrast, in continuous domains~\cite{KineticOptimal,albergo2023building,liu2023flow} and discrete flow matching~\cite{DiscretePaths}, recent work has drawn deep connections between diffusion models and optimal transport, motivating questions such as: Can MDMs be similarly understood through a similar lens? Can we characterize optimal sampling schedules in a principled way? How do these schedules relate to the geometry of the underlying probability space?
 
This paper answers these questions by establishing a theoretical framework that interprets MDMs as minimizing energy functionals over discrete probability flows (DPFs). We prove that three natural formulations of transport cost—kinetic energy, conditional kinetic energy, and geodesic energy—are equivalent under the structure of MDMs. More importantly, we show that MDMs minimize these energies when the mask schedule $\alpha_t$ is coupled to a geometric interpolation schedule (also the weight function in the energeis) $\gamma_t$ via a simple closed-form relation: $\alpha_t^\star = \sin^2(\frac{\pi}{2} \gamma_t)$. This result unifies seemingly disparate formulations and reveals that MDMs not only follow geodesics on the probability simplex, but also implicitly optimize sampling rate matrices despite its structural constraints.

Building on this insight, we propose an efficient parameterization of schedule functions via the cumulative distribution function (CDF) of Beta distributions. This reparameterization reduces the high-dimensional schedule design problem to a 2-dimensional scalar search, enabling task-adaptive tuning with minimal overhead. We validate our theory through extensive experiments on both synthetic and large-scale real-world benchmarks, including language, code, and mathematical reasoning tasks. Our results demonstrate that energy-inspired schedules outperform commonly used manually designed schedules in few-step sampling settings for certain tasks.
 
In summary, our contributions are:
\begin{itemize}
    \item We establish a theoretical framework that interprets MDM as optimal transport processes, and prove the equivalence of three distinct energy formulations.
    \item We derive a closed-form condition for energy-optimal mask schedules, showing that MDMs minimize sampling cost under this condition.
    \item We introduce a Beta-CDF parameterization that enables efficient and task-adaptive schedule tuning in a 2-dimensional space.
    \item We empirically validate our theory across synthetic and real-world benchmarks, demonstrating consistent improvements in few-step sampling performance.
\end{itemize}









\section{Background}
\label{sec:background}

\subsection{Discrete Probability Flows and Masked Diffusion Models}
\label{sub:dpf}

\textbf{Discrete Probability Flows (DPFs).} DPFs~\cite{dieleman2022continuous,campbell2022continuous,DiscreteFlowMatching} define continuous-time transformations over structured distributions in finite state spaces. In text generation tasks, a state $z = (z^1, ..., z^n) \in \mathcal{D}^n$ typically denotes a token sequence of fixed length $n$ drawn from a vocabulary $\mathcal{D}$ of size $d$. Formally, DPFs introduce a family of parameterized distributions $(p_t(z))_{t \in [0,1]}$ that interpolate between a tractable base distribution $p_0(z)$ (e.g., a uniform distribution) and a target distribution $p_1(z)$.  

A DPF is governed by a time-dependent transition rate matrix $(Q_t)_{t\in[0,1]}$,\footnote{We adopt the MDM terminology rather than the “velocity functions” used in the flow matching literature.} which specifies transition probabilities between states. However, the reverse implication does not hold: different rate matrices can induce the same DPF~\cite{DiscreteFlowMatching}. We will omit the range of the time index $t\in[0,1]$ for brevity.

\textbf{Masked Diffusion Models (MDMs).}  
MDMs~\cite{sohl2015deep,austin2021structured} represent a subclass of DPFs built upon absorbing Markov chains. These models capture two temporal processes: a \emph{masking} process (operating backward in time, $t = 1 \rightarrow 0$) and its reverse, the \emph{unmasking} process ($t = 0 \rightarrow 1$) as follows.
\begin{align}
    \text{Masking process:}\quad \overleftarrow{Q_{t}}(x, z) &= 
    \begin{cases}
        \sigma_t & z \leftarrow x \\
        0 & \text{otherwise}
    \end{cases}\ (z\neq x),\\
    \text{Unmasking process:}\quad Q_t(z, x) &= \overleftarrow{Q_{t}}(x, z) \frac{p_t(x)}{p_t(z)} =
    \begin{cases}
        \sigma_t \frac{p_t(x)}{p_t(z)} & z \to x \\
        0 & \text{otherwise}
    \end{cases}\ (x\neq z), \label{eq:score}
\end{align}
where transitions in the masking process involve single-token masking operations: from $x$ to $z$ where a token is replaced by a special mask token $[\mathrm{M}]$, i.e., $z=(z^1, ..., z^i = [\mathrm{M}], ..., z^n) \leftarrow x=(z^1, ..., x^i \neq [\mathrm{M}], ..., z^n)$. The unmasking process reverses this transition. 

This absorbing structure yields a closed-form \emph{conditional probability flow}, realized as independent per-token interpolation between the data and mask states:
\begin{equation}
\label{eq:condp}
    p_{t|1}(x^i | x_1) = (1-\alpha_t) \delta_{[\mathrm{M}]}(x^i) +  \alpha_t \delta_{x_1^i}(x^i),
\end{equation}
where the \emph{mask schedule} $\alpha_t \in [0,1]$ is a smooth, strictly increasing function satisfying $\alpha_0 = 0$ and $\alpha_1 = 1$. It determines the progression of masking over time. The schedule relates to the rate $\sigma_t$ via:
\begin{align}
\label{eq:alpha&sigma}
    \alpha_t = \exp\left(-\int_t^1 \sigma_s \, ds\right),
\end{align}
as shown in existing work~\cite{radd,shi2024simplified}. Existing methods model the likelihood ratio $\frac{p_t(x)}{p_t(z)}$ in Eq.~(\ref{eq:score})—also known as the \emph{concrete score}~\cite{meng2022concrete,lou2023discrete} or its equivalent formulations~\cite{shi2024simplified,mdlm,radd}—and optimize evidence lower bounds (ELBO) of the log-likelihood~\cite{sohl2015deep}. In particular, the ELBO is invariant to the choice of $\alpha_t$~\cite{kingma2021variational,shi2024simplified,mdlm} by change of variables. 
We provide proofs for Eq.~(\ref{eq:alpha&sigma}) in Appendix~\ref{pf:alpha&sigma} and the invariance of ELBO in Appendix~\ref{pf:invariant} for completeness. 

\subsection{Kinetic and Conditional Kinetic Energy}
\label{sub:energy}

Kinetic energy~\cite{mccann1997convexity, villani2021topics, KineticOptimal, DiscretePaths} provides a principled framework for quantifying the transport cost of probability flows, forming an optimization objective that yield improved sampling trajectories.

\begin{definition}[Weighted kinetic energy]
\label{def:Ek}
Given a weight function $\gamma_t$ and a DPF $p_t$ governed by rate matrix $Q_t$, the \emph{weighted kinetic energy} is defined as
\begin{equation}
\label{eq:ek}
\mathcal{E}_k(p_t, Q_t; \gamma_t) = \mathbb{E}_{t, p_t(z)} \sum_{x: x \neq z} \frac{1}{\dot{\gamma}_t p_t(x)} Q_t(z, x)^2,
\end{equation}
where $\dot{\gamma}_t$ denotes the temporal derivative of $\gamma_t$.
\end{definition}

The function $\gamma_t$ is a smooth, strictly increasing schedule satisfying the boundary conditions $\gamma_0 = 0$ and $\gamma_1 = 1$. Different choices of $\gamma_t$ implement various temporal weighting schemes for the DPF. This generalizes the unweighted kinetic energy~\cite{DiscretePaths}, where $\gamma_t = t$, and facilitates our theoretical analysis.

The quadratic dependence on the transition rates $Q_t(z, x)$ in Eq.~(\ref{eq:ek}) mirrors the classical velocity-squared form of kinetic energy. As $Q_t$ controls sampling behavior, minimizing $\mathcal{E}_k$ seeks minimal-cost sampling paths between distributions~\cite{FlowMatching,albergo2023building,liu2023flow}, a central goal in efficient generative modeling and few-step sampling~\cite{mccann1997convexity, villani2021topics, KineticOptimal}.

However, direct computation of $\mathcal{E}_k$ is generally intractable, owing to the absence of closed-form solutions for $p_t$ in most practical cases. To address this, we introduce a conditional surrogate.
\begin{definition}[Weighted conditional kinetic energy]
\label{def:Ec}
Let $p_{t|1}$ denote a conditional flow governed by the conditional rate matrix $Q_{t|1}$, which satisfies the marginal consistency condition:
\begin{equation}
\label{eq:consistency}
Q_t(z, x) = \sum_{x_1} Q_{t|1}(z, x | x_1) \, p_{1|t}(x_1 | z).
\end{equation}
The weighted conditional kinetic energy with weight function $\gamma_t$ is defined as 
\begin{equation}
\mathcal{E}_c(p_{t|1}, Q_{t|1}; \gamma_t) = \mathbb{E}_{t, p_1(x_1), p_{t|1}(z | x_1)} \sum_{x: x \neq z} \frac{1}{\dot{\gamma}_t p_{t|1}(x | x_1)} Q_{t|1}(z, x | x_1)^2.
\end{equation}
\end{definition}

Definition~\ref{def:Ec} extends Definition~\ref{def:Ek} to the conditional setting, in line with analogous energy formulations in the continuous domain~\cite{KineticOptimal}. While the two energies differ in general, they are equivalent under the MDM framework (see Sec.~\ref{sub:equivalenceofthree}).

\subsection{MDM as Geodesic Curve}
\label{sub:geo}

To understand the relationship between MDM and kinetic energy from a geometric perspective, we draw on the geodesic interpretation~\cite{jo2025}. This view emerges from a key geometric embedding: for each token, the conditional distribution $p_{t|1} = (p_{t|1}^1, ..., p_{t|1}^D) \in \Delta^{D-1}$ can be mapped isometrically onto the unit sphere via square-root parameterization:
\begin{equation}
\label{eq:embedding}
    y_t = (\sqrt{p_{t|1}^1}, ..., \sqrt{p_{t|1}^D}) \in \mathbb{S}^{D-1} = \{ y : \sum_i (y^i)^2 = 1 \}.
\end{equation}

Through this embedding,~\cite{jo2025} proved geometrically that the per-token MDM conditional flow Eq.~(\ref{eq:condp}) corresponds to geodesic motion on $\mathbb{S}^{D-1}$—that is, movement along the great circle connecting the masked initial state $y_0$ and the target distribution $y_1$. The \emph{interpolation schedule} $\gamma_t$ (a smooth increasing function satisfying $\gamma_0 = 0$, $\gamma_1 = 1$) governs the temporal progression, and its derivative $\dot{\gamma}_t$ represents instantaneous velocity. We retain the $\gamma_t$ notation from Definition~\ref{def:Ek}, as it also serves as the weight function for energy minimization as detailed in Sec.~\ref{sec:method}. A more comprehensive and intuitive introduction to the geodesic curve and its relationship with MDM is provided in Appendix~\ref{app:geo}.

This geometric framework unifies MDM sampling dynamics with energy minimization principles, since geodesics simultaneously minimize both path length and energy functionals~\cite{spivak1970comprehensive}. To formalize this link, we define the \emph{weighted geodesic energy}, which MDMs minimize under suitable scheduling.
\begin{definition}[Weighted geodesic energy]
\label{def:Eg} Given a weight function $\gamma_t$ and a conditional flow $p_{t|1}$ governed by a conditional rate matrix $Q_{t|1}$, the weighted geodesic energy is defined as:
\begin{equation}
\mathcal{E}_g(p_{t|1}; \gamma_t) = \sum_{i=1}^{n} \mathbb{E}_{t, p_1(x_1), p_{t|1}(z^i | x_1)} \frac{4}{\dot{\gamma}_t p_{t|1}(z^i | x_1)} \, \dot{y}_{t|1}(z^i | x_1)^2,
\end{equation}
where $y_{t|1}(z^i | x_1) := \sqrt{p_{t|1}(z^i | x_1)}$ is derived from the embedding in Eq.~(\ref{eq:embedding}).
\end{definition}

The term $\dot{y}^2$ in $\mathcal{E}_g$ again reflects the velocity-squared form of kinetic energy, capturing geometric transport costs. Although $\mathcal{E}_g$ lacks explicit dependence on a rate matrix, we demonstrate in Sec.~\ref{sub:equivalenceofthree} that it is equivalent to both $\mathcal{E}_k$ and $\mathcal{E}_c$ under MDM structure. 

Finally, since mask schedule $\alpha_t$ determines both the rate matrix and the probability flow in MDM, we reparameterize the energy functionals as $\mathcal{E}_k(\alpha_t, \gamma_t)$, $\mathcal{E}_c(\alpha_t, \gamma_t)$, and $\mathcal{E}_g(\alpha_t, \gamma_t)$ for the rest of paper.

\section{Main Results}
\label{sec:method}

\begin{figure}[t]
\centering
\includegraphics[width=0.98\linewidth]{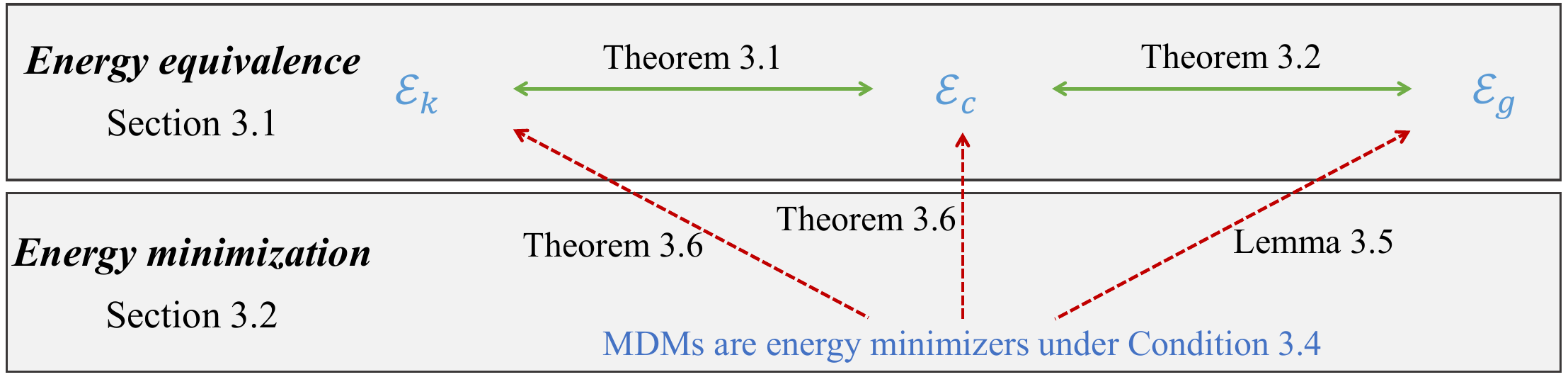}
\caption{\textbf{Illustration of the theoretical results of this paper.}}
\label{fig:theory-relation}
\end{figure}

\subsection{Equivalence of Energies in MDMs}
\label{sub:equivalenceofthree}

We first formally establish that the three energy functionals defined in Sec.~\ref{sec:background}—the kinetic energy $\mathcal{E}_k$, the conditional kinetic energy $\mathcal{E}_c$, and the geodesic energy $\mathcal{E}_g$—are mathematically equivalent under the MDM framework. This equivalence is captured by the following theorems. 

\begin{theorem}[Kinetic-conditional equivalence in MDMs]
\label{thm:ck_equivalence}
For any weight function $\gamma_t$ and MDM with mask schedule $\alpha_t$, the marginal and conditional kinetic energies are proportional:
\begin{equation}
    \mathcal{E}_k(\alpha_t, \gamma_t) = C_1 \mathcal{E}_c(\alpha_t, \gamma_t),
\end{equation}
where $C_1$ is a scalar depending only on the sequence length $n$ and vocabulary size $d$. As a result, the two objectives share the same minimizers:
\begin{equation}
\arg\min_{\alpha_t} \mathcal{E}_k(\alpha_t, \gamma_t) = \arg\min_{\alpha_t} \mathcal{E}_c(\alpha_t, \gamma_t).
\end{equation}
\end{theorem}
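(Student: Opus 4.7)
The plan is to compute both $\mathcal{E}_k$ and $\mathcal{E}_c$ in closed form by exploiting the single-token transition structure of MDMs, and to show that both reduce to the same $t$-integral up to a combinatorial prefactor depending only on $n$ and $d$. Proportionality will then give the arg-min equality for free.

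First I would pin down the conditional rate matrix. Applying the reverse-rate formula of Eq.~(\ref{eq:score}) conditionally yields $Q_{t|1}(z,x|x_1) = \sigma_t\, p_{t|1}(x|x_1)/p_{t|1}(z|x_1)$, which is nonzero only on single-token unmaskings $z^i=[\mathrm{M}] \to x^i = x_1^i$. The per-token factorization in Eq.~(\ref{eq:condp}) forces the ratio $p_{t|1}(x|x_1)/p_{t|1}(z|x_1) = (1-\alpha_t)/\alpha_t$ uniformly on the transition support, independent of $z^{-i}$ and $x_1$. Substituting into $\mathcal{E}_c$, the summand collapses to $\sigma_t^2(1-\alpha_t)/(\alpha_t\, p_{t|1}(z|x_1))$, so the expectation $\mathbb{E}_{z\sim p_{t|1}(\cdot|x_1)}$ exactly cancels the $1/p_{t|1}(z|x_1)$ and leaves a purely combinatorial count over the $2^n$-element support of $p_{t|1}(\cdot|x_1)$. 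This count equals $n\cdot 2^{n-1}$ for every $x_1$, yielding, after $\sigma_t = \dot\alpha_t/\alpha_t$, the closed form $\mathcal{E}_c = n\cdot 2^{n-1}\int_0^1 \sigma_t^2(1-\alpha_t)/(\alpha_t\dot\gamma_t)\,dt$, with no $p_1$ dependence surviving.

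For $\mathcal{E}_k$ the analogue proceeds in the same spirit but requires an extra marginal identity, because $p_t$ does not factor across tokens. I would first establish $\sum_{a\neq[\mathrm{M}]} p_t(x^{i,a}) = p_t(z)(1-\alpha_t)/\alpha_t$ for every $z$ with $z^i=[\mathrm{M}]$; this follows by direct marginalization of position $i$ from the closed-form MDM marginal $p_t(z) = \alpha_t^{|M_z|}(1-\alpha_t)^{n-|M_z|}\, p_1(z^{-M_z})$. Combined with $Q_t(z,x) = \sigma_t p_t(x)/p_t(z)$, this collapses $\sum_x Q_t(z,x)^2/p_t(x)$ to $\sigma_t^2|M_z|(1-\alpha_t)/(\alpha_t\, p_t(z))$; taking $\mathbb{E}_{z\sim p_t}$ once again cancels $1/p_t(z)$ and leaves $\sum_{z\in\mathrm{supp}(p_t)} |M_z|$, which under the mild full-support assumption on $p_1$ equals $n\cdot d^{n-1}$.

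Combining the two closed forms, both energies equal the common $t$-integral $\int_0^1 \sigma_t^2(1-\alpha_t)/(\alpha_t\dot\gamma_t)\,dt$ scaled by different combinatorial constants, so $C_1 = (d/2)^{n-1}$ depends only on $n$ and $d$, and the arg-min equality is immediate since $C_1 > 0$ is independent of $\alpha_t$. The main obstacle I expect is the marginal identity $\sum_{a\neq[\mathrm{M}]} p_t(x^{i,a}) = p_t(z)(1-\alpha_t)/\alpha_t$: without it, the $p_t$-dependent contribution to $\mathcal{E}_k$ cannot be peeled off to expose the same $t$-integrand as in $\mathcal{E}_c$. Secondary care is needed to justify the cancellation of $p_t(z)$ against $1/p_t(z)$ under the expectation, and to confirm that $p_1$-support subtleties do not contaminate the combinatorial prefactor.
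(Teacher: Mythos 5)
Your proposal is correct and follows essentially the same route as the paper's own proof: both arguments reduce $\mathcal{E}_k$ and $\mathcal{E}_c$ to the same $t$-integrand by using the factorized marginal $p_t(z)=\alpha_t^{m(z)}(1-\alpha_t)^{\,n-m(z)}p_1(z^{UM})$ (the paper invokes this as the RADD concrete-score decomposition) so that the single-token transition ratios collapse, the $1/p_t(z)$ (resp.\ $1/p_{t|1}(z|x_1)$) cancels against the outer expectation, and only the combinatorial constants $\sum_z m(z)$ and $n\,2^{n-1}$ remain. The only immaterial discrepancies are your marginal count $n\,d^{n-1}$, which equals $\sum_z m(z)$ only if the mask token is counted inside the vocabulary size $d$ (otherwise it is $n(d+1)^{n-1}$), and your consistent use of the convention of Eq.~(\ref{eq:condp}) versus the appendix's flipped $\alpha_t \leftrightarrow 1-\alpha_t$; in either case the constant depends only on $n$ and $d$ and the conclusion is unchanged.
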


As discussed in Sec.~\ref{sec:background}, while $\mathcal{E}_k$ and $\mathcal{E}_c$ share a similar structure, their inherent differences between marginal and conditional formulations typically lead to divergent values. However, our proof in Appendix~\ref{pf:ck_equivalence} reveals that by decomposing the concrete score in the rate matrix of MDM (see Eq.~(\ref{eq:score})) into temporal components and clean-data conditional probabilities~\cite{radd} and leveraging the inherent simple closed-form of MDM's conditional rate matrix (characterized in Appendix~\ref{pf:condQ}), equivalence between $\mathcal{E}_k$ and $\mathcal{E}_c$ are established, even in high-dimensional regimes.

Remarkbly, Theorem~\ref{thm:ck_equivalence} establishes $\mathcal{E}_c$ as a theoretically sound surrogate for $\mathcal{E}_k$ in MDMs. Crucially, while $\mathcal{E}_k$ suffers from intractability due to the absence of closed-form $p_t(z)$, $\mathcal{E}_c$ remains computationally tractable in MDMs since both the conditional flow and rate matrix admit closed-form expressions. Furthermore, unlike $\mathcal{E}_g$, $\mathcal{E}_c$ decomposes along sequence dimensions (see Appendix~\ref{pf:decom}), enabling per-token analysis and creating a natural bridge to $\mathcal{E}_g$ -- an energy inherently defined through per-token conditional probability flows. This connection leads to our following theorem.
\begin{theorem}[Conditional-geodesic equivalence in MDMs]
\label{thm:cg_equivalence}
For any weight function $\gamma_t$ and MDM with mask schedule $\alpha_t$, the conditional and geodesic energies are proportional:
\begin{equation}
\mathcal{E}_c(\alpha_t, \gamma_t) = C_2 \mathcal{E}_g(\alpha_t, \gamma_t),
\end{equation}
where $C_2$ is a scalar depending only on the sequence length $n$. This implies that they share the same minimizers:
\begin{equation}
\arg\min_{\alpha_t} \mathcal{E}_c(\alpha_t, \gamma_t) = \arg\min_{\alpha_t} \mathcal{E}_g(\alpha_t, \gamma_t).
\end{equation}
\end{theorem}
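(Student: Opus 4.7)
The plan is to compute both energies in closed form by exploiting the product structure of MDM's conditional flow $p_{t|1}(z|x_1) = \prod_j p_{t|1}(z^j|x_1)$ and then read off the proportionality constant. First, using the per-position decomposition of $\mathcal{E}_c$ from Appendix~\ref{pf:decom}, I would write $\mathcal{E}_c = \sum_{i=1}^n \mathcal{E}_c^i$, where each $\mathcal{E}_c^i$ isolates contributions from unmasking position $i$. The MDM conditional rate matrix from Appendix~\ref{pf:condQ} allows exactly one non-trivial transition out of any $z$ with $z^i=[\mathrm{M}]$, namely to $x=(z^{-i},x_1^i)$, with squared rate depending only on $\alpha_t$ and $\dot\alpha_t$ (concretely, $\dot\alpha_t^2/\alpha_t^2$ in magnitude).

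Inside each $\mathcal{E}_c^i$ the central step is an algebraic cancellation. Because $z$ and $x$ agree at every position $j\neq i$, the factor $\prod_{j\neq i} p_{t|1}(z^j|x_1)$ that appears in $p_{t|1}(z|x_1)$ (weighting the outer expectation) telescopes against the same factor inside $p_{t|1}(x|x_1)$ in the denominator of the integrand, leaving an expression that depends only on $\alpha_t$, $\dot\alpha_t$ and the indicator $\mathbb{1}[z^i=[\mathrm{M}]]$. Summing this constant over the $z^{-i}$ configurations in the support of $p_{t|1}(\cdot|x_1)$ --- a set of size $2^{n-1}$ since each $z^j$ ranges only over $\{[\mathrm{M}], x_1^j\}$ --- introduces a combinatorial prefactor depending only on $n$, yielding $\mathcal{E}_c^i = 2^{n-1}\,\mathbb{E}_t[\dot\alpha_t^2/(\dot\gamma_t\,\alpha_t(1-\alpha_t))]$.

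For $\mathcal{E}_g$, I would use the embedding $y_{t|1}(z^i|x_1)=\sqrt{p_{t|1}(z^i|x_1)}$ to obtain $\dot y^2 \in \{\dot\alpha_t^2/(4\alpha_t),\ \dot\alpha_t^2/(4(1-\alpha_t))\}$ at the two support points, and then take the per-token expectation, which collapses via $1/\alpha_t + 1/(1-\alpha_t) = 1/(\alpha_t(1-\alpha_t))$ to $\mathcal{E}_g^i = \mathbb{E}_t[\dot\alpha_t^2/(\dot\gamma_t\,\alpha_t(1-\alpha_t))]$. Summing both expressions over $i$ gives $\mathcal{E}_c = 2^{n-1}\,\mathcal{E}_g$, so $C_2 = 2^{n-1}$ depends only on $n$, and the equality of minimizers is immediate since $C_2 > 0$ is independent of $\alpha_t$. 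The main obstacle is managing the asymmetry between $\mathcal{E}_c$'s joint expectation over $z\in\mathcal{D}^n$ and $\mathcal{E}_g$'s per-token expectation over $z^i$: the combinatorial factor $2^{n-1}$ is a genuine consequence of marginalizing the other $n-1$ positions after the $p(z)/p(x)$ telescoping collapses to $\alpha_t/(1-\alpha_t)$, and making this rigorous requires carefully restricting to the support of $p_{t|1}(\cdot|x_1)$ so that the $1/p(x)$ factor never produces a $0/0$ ambiguity.
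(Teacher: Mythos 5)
Your proposal is correct and follows essentially the same route as the paper's proof: token-wise decomposition of $\mathcal{E}_c$ with the combinatorial factor $2^{n-1}$ (Appendix~\ref{pf:decom}), the closed-form conditional rate from Lemma~\ref{lem:Q}, and the observation that both per-token energies collapse to $\mathbb{E}_t\bigl[\dot{\alpha}_t^2/(\dot{\gamma}_t\,\alpha_t(1-\alpha_t))\bigr]$, giving $C_2 = 2^{n-1}$. The only nit is your parenthetical for the squared rate: under the convention used in the appendix ($Q_{t|1} = \dot{\alpha}_t/(1-\alpha_t)$) it is $\dot{\alpha}_t^2/(1-\alpha_t)^2$ rather than $\dot{\alpha}_t^2/\alpha_t^2$, though combined with the weighting ratio the final per-token expression you obtain is the same.
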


Our proof in Appendix~\ref{pf:cg_equivalence} reveals that this equivalence originates from the token-wise decomposition of $\mathcal{E}_c$ (see Appendix~\ref{pf:decom}) -- a property inherently enabled by MDM's per-token structure of $p_{t|1}$ and $Q_{t|1}$. This extends the DFM framework~\cite{DiscretePaths} beyond its original one-dimensional geodesic-kinetic energy correspondence. Moreover, more flexible $\gamma_t$ choices are considered in our settings while energy functionals in~\cite{DiscretePaths} use a fixed weight function $\gamma_t = t$. Therefore, our Theorem~\ref{thm:cg_equivalence} establishes interpretations applicable to real-world text generation with adaptive weighting schemes.



The insight that the sampling-oriented kinetic energy and the geometrically-motivated geodesic energy constitute two equivalent viewpoints suggests MDMs may simultaneously achieve optimality in both probability flow and rate matrix characteristics despite its structural constraints, which we formally prove in Sec.~\ref{sub:minimize}. To exemplify these equivalences, we consider the single-token case ($n = 1$) in Example~\ref{ex:1}, where all three energy functionals collapse to an identical closed-form expression. See Appendix~\ref{pf:example1} for the proof.
\begin{example}
\label{ex:1}
When $n=1$, the kinetic, conditional kinetic, and geodesic energies all reduce to:
\begin{equation}
\label{eq:energy_n=1}
\mathcal{E}(\alpha_t, \gamma_t) = \int_0^1 \frac{1}{\dot{\gamma}_t} \cdot \frac{\dot{\alpha}_t^2}{\alpha_t(1 - \alpha_t)} \, dt.
\end{equation}
\end{example}

\subsection{MDMs with the Optimal Mask Schedule Are Energy Minimizers}
\label{sub:minimize}
 
Building on the energy equivalence established in Sec.~\ref{sub:equivalenceofthree}, we now turn to the core optimization question: \textit{Can an appropriately chosen MDM schedule simultaneously minimize the primary objective—kinetic energy $\mathcal{E}_k$—or, by equivalence, all three energy functionals?}

To resolve this question, we initiate our analysis from the geodesic perspective. As discussed in Sec.~\ref{sec:background}, $\alpha_t$ governs the unmasking process in MDMs, while $\gamma_t$ parametrizes the corresponding continuous interpolation along geodesic curves~\cite{jo2025} and also serves as the weight function in the geodesic energy. This schedule duality—$\alpha_t$ for discrete dynamics, $\gamma_t$ for geometric flow—naturally necessitates an optimal parametric relationship between them, as formalized in the following condition.
\begin{condition}[Optimal scheduling condition]
\label{cond:optimal_schedule}
We say that the optimal scheduling condition between the mask schedule $\alpha_t^\star$ and the interpolation schedule $\gamma_t$ is satisfied when
\begin{equation}
\label{eq:optimal_schedule}
\alpha_t^\star = \sin^2\left(\frac{\pi}{2} \gamma_t\right).
\end{equation}
\end{condition}

The monotonic bijection in the condition, namely $
f: [0,1] \to [0,1], \quad x \mapsto \sin^2\left(\frac{\pi}{2}x\right),$ creates an \emph{one-to-one schedule correspondence} between $\alpha_t$ and $\gamma_t$. 

While this relationship was geometrically established in~\cite{jo2025}, demonstrating the relationship between the interpolation schedule of the geodesic curve and the mask schedule of MDM that generates the curve, its profound implications for energy minimization remain uncharacterized -- a gap our subsequent lemma addresses.

\begin{lemma}[Geodesic energy minimization]
\label{lem:geodesic_min}
Under Condition~\ref{cond:optimal_schedule}, the schedule $\alpha_t^\star$ minimizes the geodesic energy.
\end{lemma}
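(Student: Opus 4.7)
The plan is to reduce $\mathcal{E}_g$ to a one-dimensional integral in $\alpha_t$ and $\gamma_t$, and then minimize it by a trigonometric substitution together with Cauchy--Schwarz. First, I would expand the geodesic energy on the MDM conditional flow. For each token $i$, the conditional distribution $p_{t|1}(x^i \mid x_1) = \alpha_t \delta_{[\mathrm{M}]}(x^i) + (1-\alpha_t) \delta_{x_1^i}(x^i)$ is supported on only two states, with embedding values $\sqrt{\alpha_t}$ and $\sqrt{1-\alpha_t}$ and time-derivatives $\dot{\alpha}_t/(2\sqrt{\alpha_t})$ and $-\dot{\alpha}_t/(2\sqrt{1-\alpha_t})$ respectively. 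The inner expectation over $p_{t|1}(z^i \mid x_1)$ cancels the $1/p_{t|1}$ factor in the integrand, leaving a bare sum of $\dot{y}^2$ over the two nonzero states. A direct computation then yields $\sum_{z^i} \dot{y}_{t|1}(z^i \mid x_1)^2 = \dot{\alpha}_t^2 / [4\alpha_t(1-\alpha_t)]$, which is independent of both $i$ and $x_1$, and hence
\begin{equation}
\mathcal{E}_g(\alpha_t, \gamma_t) = n \int_0^1 \frac{\dot{\alpha}_t^2}{\dot{\gamma}_t\, \alpha_t (1 - \alpha_t)}\, dt,
\end{equation}
matching the single-token expression in Example~\ref{ex:1} up to the overall factor $n$.

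Next, I would substitute $\alpha_t = \sin^2 \phi_t$ with $\phi_t \in [0, \pi/2]$ and boundary conditions $\phi_0 = 0$, $\phi_1 = \pi/2$. A short trigonometric identity (using $\dot{\alpha}_t = \sin(2\phi_t)\dot{\phi}_t$ and $\alpha_t(1-\alpha_t) = \tfrac{1}{4}\sin^2(2\phi_t)$) gives $\dot{\alpha}_t^2 / [\alpha_t(1-\alpha_t)] = 4\dot{\phi}_t^2$, so the functional reduces to $4n \int_0^1 \dot{\phi}_t^2 / \dot{\gamma}_t \, dt$ under the affine constraint $\int_0^1 \dot{\phi}_t \, dt = \pi/2$. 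Applying Cauchy--Schwarz with factors $\dot{\phi}_t/\sqrt{\dot{\gamma}_t}$ and $\sqrt{\dot{\gamma}_t}$ and using $\int_0^1 \dot{\gamma}_t\, dt = 1$ produces $\int_0^1 \dot{\phi}_t^2 / \dot{\gamma}_t \, dt \ge (\pi/2)^2$, with equality iff $\dot{\phi}_t \propto \dot{\gamma}_t$. The boundary conditions then force $\phi_t = (\pi/2)\gamma_t$, i.e., $\alpha_t^\star = \sin^2(\tfrac{\pi}{2}\gamma_t)$, so Condition~\ref{cond:optimal_schedule} is precisely the unique minimizer.

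The main obstacle is the initial closed-form reduction: one must carefully carry the $1/p_{t|1}$ weight through the inner expectation, verify the cancellation that removes both $x_1$ and the token index $i$ from the integrand, and check that the sequence-dimensional sum collapses into $n$ identical scalar integrals. Once this closed form is in hand, the minimization itself is essentially a textbook Cauchy--Schwarz exercise. As a conceptual sanity check, changing the time variable via $s = \gamma_t$ rewrites $\int (\|\dot{y}_t\|^2/\dot{\gamma}_t)\,dt$ as the standard kinetic functional $\int \|y'_s\|^2\, ds$ on $\mathbb{S}^{D-1}$, whose minimizer between fixed endpoints is the constant-speed great-circle arc; reading off the $\alpha_t$ induced by that geodesic recovers exactly $\alpha_t^\star = \sin^2(\tfrac{\pi}{2}\gamma_t)$, consistent with the geometric characterization of~\cite{jo2025}.
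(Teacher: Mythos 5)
Your proof is correct, but it takes a genuinely different route from the paper's. You collapse $\mathcal{E}_g$ to the scalar functional $n\int_0^1 \dot{\alpha}_t^2/[\dot{\gamma}_t\,\alpha_t(1-\alpha_t)]\,dt$ (the same closed form the paper obtains in its equivalence proofs), then substitute $\alpha_t=\sin^2\phi_t$ and apply Cauchy--Schwarz under the constraints $\int_0^1\dot{\phi}_t\,dt=\pi/2$, $\int_0^1\dot{\gamma}_t\,dt=1$, which pins down $\phi_t=\tfrac{\pi}{2}\gamma_t$ as the unique minimizer and even gives the explicit optimal value $n\pi^2$. The paper instead works directly with the spherical embedding: it poses $\min_{y(t)}\int_0^1 \|\dot{y}_t\|^2/\dot{\gamma}_t\,dt$ subject to $\|y(t)\|=1$, solves the Euler--Lagrange equations with a Lagrange multiplier to show the optimum is uniform great-circle motion, handles general $\gamma_t$ by the time reparameterization $s=\gamma_t$ (essentially your "sanity check"), and reads off $\alpha_t^\star=\sin^2(\tfrac{\pi}{2}\gamma_t)$ by squaring the optimal curve. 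The trade-off: your argument is more elementary and directly establishes optimality (with uniqueness) over the class of mask schedules $\alpha_t$, which is exactly what the lemma is used for in Theorem~\ref{thm:tri}; the paper's variational argument proves something stronger — optimality among all conditional probability paths on the sphere connecting the mask state to the data point, not just MDM-style interpolations — which is what lets it re-derive the geodesic characterization of~\cite{jo2025} as a by-product. Both are sound; just note that your version, being confined to the $\alpha_t$-parameterized family, does not by itself recover that broader geometric statement.
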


Our proof in Appendix~\ref{pf:geodesic_min} demonstrates through energy-theoretic analysis that Condition~\ref{cond:optimal_schedule} not only guarantees generation of minimal-length geodesic paths (thereby providing a proof for existing geometric conclusions~\cite{jo2025} in an alternative perspective) but also formally establishes the attainment of minimal geodesic energy. These results enrich our understanding of MDM optimality. One may further be curious about the practical implications: does discretizing the continuous trajectory into a finite-step sampling process affect the attainment of optimality? We address this in Appendix~\ref{pf:discretize}, demonstrating that the discretized trajectory remains optimal in a well-defined sense. Therefore, our subsequent discussion will continue to focus on the theoretical continuous case.


This energy minimization perspective provides a crucial theoretical bridge connecting MDMs' geometric properties with their sampling dynamics. Combining Lemma~\ref{lem:geodesic_min} with our equivalence theorems in Sec.~\ref{sub:equivalenceofthree}, we extend the minimization result to $\mathcal{E}_k$ and $\mathcal{E}_c$ -- energy functionals of primary practical interest due to their direct connection to MDM sampling efficiency, arriving at the following central result under identical optimal scheduling conditions.

\begin{theorem}[Kinetic energy minimization]
\label{thm:tri}
Under Condition~\ref{cond:optimal_schedule}, the MDM schedule $\alpha_t^\star$ simultaneously minimizes all three energy functionals.
\end{theorem}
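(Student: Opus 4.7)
The plan is to assemble Theorem~\ref{thm:tri} as a direct corollary of the three preceding results: Lemma~\ref{lem:geodesic_min}, Theorem~\ref{thm:ck_equivalence}, and Theorem~\ref{thm:cg_equivalence}. Since the heavy analytic content (the variational argument on the sphere giving the $\sin^2(\frac{\pi}{2}\gamma_t)$ form, and the two proportionality identities) has already been established, the present step reduces to a short chaining argument about $\arg\min$.

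First I would invoke Lemma~\ref{lem:geodesic_min} to assert that, under Condition~\ref{cond:optimal_schedule}, $\alpha_t^\star$ minimizes $\mathcal{E}_g(\alpha_t,\gamma_t)$ over the admissible class of smooth, strictly increasing schedules with the correct boundary values. Then I would apply Theorem~\ref{thm:cg_equivalence}: because $\mathcal{E}_c(\alpha_t,\gamma_t)=C_2\,\mathcal{E}_g(\alpha_t,\gamma_t)$ with $C_2>0$ depending only on $n$ (and in particular not on $\alpha_t$), multiplying by a positive constant preserves the set of minimizers, so $\alpha_t^\star$ also minimizes $\mathcal{E}_c$. Finally, by Theorem~\ref{thm:ck_equivalence}, $\mathcal{E}_k(\alpha_t,\gamma_t)=C_1\,\mathcal{E}_c(\alpha_t,\gamma_t)$ with $C_1>0$ depending only on $n$ and $d$, so the same $\alpha_t^\star$ minimizes $\mathcal{E}_k$. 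Equivalently, I can write the one-line identity
\begin{equation*}
\mathcal{E}_k(\alpha_t,\gamma_t) \;=\; C_1 C_2 \, \mathcal{E}_g(\alpha_t,\gamma_t),
\end{equation*}
and read off that any minimizer of the right-hand side is a minimizer of the left-hand side and of the intermediate $\mathcal{E}_c$.

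The only subtlety I need to flag is that the equivalence constants $C_1,C_2$ are genuinely independent of the optimization variable $\alpha_t$ (they are stated to depend only on $n$ and $d$); otherwise rescaling would not preserve $\arg\min$. I would explicitly cite this from the statements of Theorems~\ref{thm:ck_equivalence} and~\ref{thm:cg_equivalence} and note that both constants are strictly positive, so the correspondence of minimizers is two-sided. I should also make sure the admissible class of schedules used in Lemma~\ref{lem:geodesic_min} coincides with the one implicit in the two equivalence theorems; this is immediate since all three energies are reparameterized as functions $(\alpha_t,\gamma_t)$ over the same space of smooth strictly increasing schedules with $\alpha_0=0,\alpha_1=1$.

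The main (and essentially only) obstacle is therefore bookkeeping rather than mathematics: confirming that the three argmin problems are posed over a common domain and that the proportionality constants have no hidden dependence on $\alpha_t$. Once that is in place, the theorem is a one-paragraph consequence, and no further computation beyond what is already done in Appendices~\ref{pf:ck_equivalence}, \ref{pf:cg_equivalence}, and~\ref{pf:geodesic_min} is required.
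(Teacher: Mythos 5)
Your chaining argument is correct as far as it goes, and it reproduces the first paragraph of the paper's own proof: since $\mathcal{E}_k = C_1\mathcal{E}_c = C_1C_2\,\mathcal{E}_g$ with strictly positive constants depending only on $n$ and $d$, Lemma~\ref{lem:geodesic_min} implies that $\alpha_t^\star$ minimizes all three functionals over the family of MDM mask schedules $\alpha_t$. But the paper deliberately does not stop there, and the main text states explicitly that Theorem~\ref{thm:tri} does \emph{not} trivially follow from Lemma~\ref{lem:geodesic_min}. The reason is that $\mathcal{E}_k$ and $\mathcal{E}_c$ are functionals of both the probability flow and the rate matrix, whereas in an MDM the single schedule $\alpha_t$ rigidly couples the two. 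The substantive content of the theorem is that this coupling costs nothing: with the flow induced by $\alpha_t^\star$ held fixed, the MDM conditional rate matrix $Q_{t|1}(\alpha_t^\star)$ of Appendix~\ref{pf:condQ} is itself a minimizer of $\mathcal{E}_c$ over \emph{all} conditional rate matrices generating that flow, for arbitrary weight $\gamma_t$. Your proposal never touches this, because it treats the energies purely as functions of $\alpha_t$; it therefore proves only minimization within the $\alpha_t$-parameterized family, which is the weaker, essentially bookkeeping part of the claim.

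Concretely, the missing step is the one carried out in Appendix~\ref{pf:tri}: adapt the optimal-velocity result of~\cite{DiscretePaths} (Lemma~\ref{thm:velocity}), which identifies $\frac{\dot{\alpha}_t}{1-\alpha_t}\left(\delta_{x_1}(x)-\delta_{z}(x)\right)$ as the $\mathcal{E}_c$-minimizing conditional rate for a fixed flow when $\gamma_t=t$; check via $\dot{\alpha}_t=\alpha_t\sigma_t$ that the MDM rate matrix attains exactly this form in that case; and then prove a reparameterization lemma showing that if $Q^\star(t)$ is optimal for $\gamma_t=t$ then $Q^\star(\gamma_t)\dot{\gamma}_t$ is optimal for general $\gamma_t$, together with the verification that under Condition~\ref{cond:optimal_schedule} one has $Q_{t|1}(\alpha_t^\star)=\pi\tan\left(\frac{\pi}{2}\gamma_t\right)\dot{\gamma}_t=Q^\star(\gamma_t)\dot{\gamma}_t$. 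Without this rate-matrix half of the argument, the statement you would have established is strictly weaker than the one the paper intends and proves.
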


Theorem~\ref{thm:tri} does not trivially follow from Lemma~\ref{lem:geodesic_min}, as $\mathcal{E}_k$ and $\mathcal{E}_c$ require simultaneous optimization of probability flows and rate matrices -- a fundamental departure from $\mathcal{E}_g$'s exclusive dependence on the probability flow. Our proof in Appendix~\ref{pf:tri} crucially relies on the fact that although the mask schedule $\alpha_t$ governs $p_t$ and $Q_t$ jointly in MDMs, introducing parametric constraints, the Markov structure of MDM still intrinsically co-optimizes the probability flow and the rate matrix. This resolves a long-standing conceptual paradox in discrete diffusion: MDM's simple coupled framework in fact preserves optimal transport properties through its intrinsic design of Markovian transitions.


Furthermore, Theorem~\ref{thm:tri} establishes that Condition~\ref{cond:optimal_schedule} not only dictates optimal probability paths but also governs optimal sampling rates. This theoretical insight directly motivated our energy-inspired schedule tuning method in Sec.~\ref{sub:Energy-Inspired}, where we select $\alpha^\star_t$ that minimizes energies given fixed $\gamma_t$. To empirically validate how distinct weight functions $\gamma_t$ shape different energy landscapes and consequently yield unique optimal mask schedules, we revisit the single-token case ($n = 1$) in Example~\ref{ex:1}. Fig.~\ref{fig:toy_minimization} quantitatively demonstrates this relationship by visualizing how varying $\gamma_t$ induces corresponding $\alpha_t^\star$ schedules that minimize the energy functional defined in Eq.~(\ref{eq:energy_n=1}).

\begin{figure}[t]
    \centering
    \includegraphics[width=0.98\linewidth]{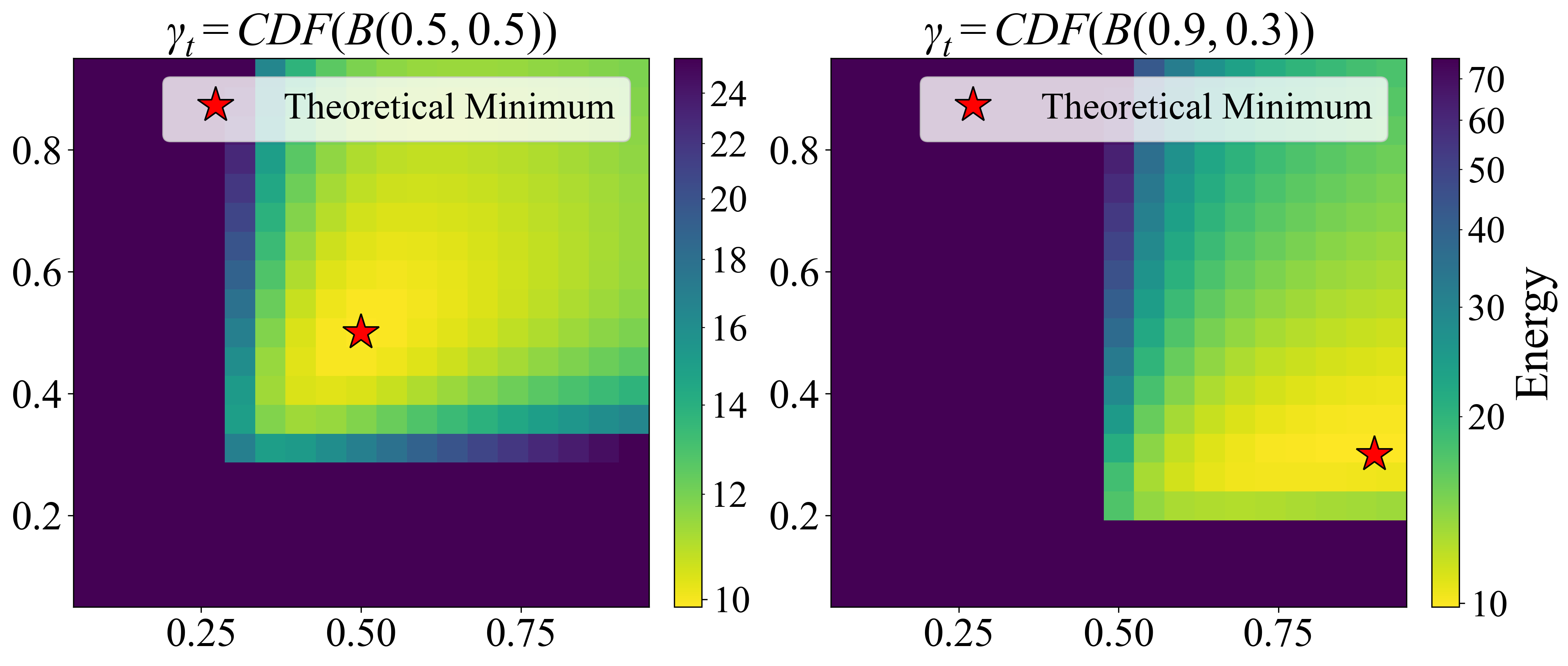}
    \caption{\textbf{Distinct weight functions $\gamma_t$ shape different energy landscapes and consequently yield different optimal mask schedules $\alpha_t^\star$.} Axes represent the beta-parameterization of $\alpha_t$ (see Sec.~\ref{sub:Energy-Inspired}). Color intensity indicates energy values from Eq.~(\ref{eq:energy_n=1}). Red stars mark the theoretical minima under the optimal schedule condition.}
    \label{fig:toy_minimization}
\end{figure}

\subsection{Energy-Inspired Fast Samplers}
\label{sub:Energy-Inspired}


Our energy minimization perspective introduced in Sec.~\ref{sub:minimize} shows the importance of the weight function $\gamma_t$. Especially, the term $(\dot{\gamma}_t)^{-1}$ in the energy functionals (See Definition~\ref{def:Ek}, \ref{def:Ec} and \ref{def:Eg}) plays a central role: it downweights regions of rapid temporal change in $\gamma_t$, effectively focusing optimization on slower regions. Different choices of $\gamma_t$ thus encode different emphases in the diffusion process—for instance, whether to spend more computational budget early (coarse structure) or late (fine detail) in the unmasking trajectory. This observation suggests that task-specific tuning of $\gamma_t$ may yield performance improvements, as different tasks may benefit from different temporal allocations.

Moreover, since MDM training objectives are invariant to the choice of schedule $\alpha_t$ (as discussed in Sec.~\ref{sub:dpf}), schedule optimization can be performed post hoc—after model training—without requiring re-training. This enables lightweight adaptation of pretrained models to new distributions or generation objectives by simply modifying the sampling schedule.

However, direct optimization over the space of all possible schedules remains intractable due to its infinite-dimensional nature. Existing approaches therefore rely on a small set of manually designed schedules, such as linear ($\alpha_t = t$)~\cite{austin2021structured,lou2023discrete,sohl2015deep}, sine ($\alpha_t = \sin\left(\frac{\pi}{2} t\right)$)~\cite{mdlm}, or squared sine schedules ($\alpha_t = \sin^2\left(\frac{\pi}{2} t\right)$)~\cite{han2022ssd}. To bridge the gap between theoretical flexibility and practical feasibility, we propose parameterizing $\gamma_t$ as the cumulative distribution function (CDF) of a beta distribution with two parameters $(a, b)$:
\begin{align}
\gamma_t = \text{CDF}_{\mathcal{B}(a, b)}(t),
\end{align}
which, via the condition in Eq.~(\ref{eq:optimal_schedule}), yields a corresponding $\alpha_t$ schedule. This parametric framework generates diverse schedule topologies including convexity and inflection points through just two parameters (see Fig.~\ref{fig:gammas&alphas}) and it is motivated by a simple observation formalized in the following proposition, proved in Appendix~\ref{pf:beta}.

\begin{proposition}
\label{prop:beta}
Linear and squared sine schedules correspond to specific beta parameterizations:
\begin{align}
\alpha_t = t &\quad\Leftrightarrow\quad \gamma_t = \text{CDF}_{\mathcal{B}(0.5, 0.5)}(t), \\
\alpha_t = \sin^2\left(\frac{\pi}{2} t\right) &\quad\Leftrightarrow\quad \gamma_t = t = \text{CDF}_{\mathcal{B}(1, 1)}(t).
\end{align}
\end{proposition}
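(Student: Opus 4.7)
The plan is to simply invert the optimal scheduling condition $\alpha_t = \sin^2(\tfrac{\pi}{2}\gamma_t)$ to express $\gamma_t$ as a function of $\alpha_t$, and then recognize the resulting function as a Beta CDF for the two choices of $\alpha_t$. Concretely, since $\sin^2(\tfrac{\pi}{2}\cdot)$ is a monotone bijection on $[0,1]$, the condition gives
\begin{equation*}
\gamma_t = \frac{2}{\pi}\arcsin\!\bigl(\sqrt{\alpha_t}\bigr).
\end{equation*}
So the entire content of the proposition reduces to identifying $\tfrac{2}{\pi}\arcsin\sqrt{\alpha_t}$ with a specific Beta CDF in each of the two cases.

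For the second identity, take $\alpha_t = \sin^2(\tfrac{\pi}{2}t)$; then $\gamma_t = \tfrac{2}{\pi}\arcsin(\sin(\tfrac{\pi}{2}t)) = t$. Since $\mathcal{B}(1,1)$ is the uniform distribution on $[0,1]$, its CDF is exactly $t$, giving $\gamma_t = \text{CDF}_{\mathcal{B}(1,1)}(t)$. For the first identity, take $\alpha_t = t$; then $\gamma_t = \tfrac{2}{\pi}\arcsin(\sqrt{t})$. I would then recall (or verify by direct integration) that the density of $\mathcal{B}(0.5,0.5)$ is $\tfrac{1}{\pi}t^{-1/2}(1-t)^{-1/2}$, whose antiderivative is $\tfrac{2}{\pi}\arcsin(\sqrt{t})$ — this is the standard arcsine distribution. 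Since the antiderivative vanishes at $t=0$ and equals $1$ at $t=1$, it is exactly the CDF, yielding $\gamma_t = \text{CDF}_{\mathcal{B}(0.5,0.5)}(t)$.

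There is no real obstacle here: the proof is essentially a substitution plus a textbook fact about the arcsine/Beta$(0.5,0.5)$ distribution. The only minor point worth stating explicitly is that $\sin^2(\tfrac{\pi}{2}\cdot):[0,1]\to[0,1]$ is a strictly increasing bijection, so the inversion defining $\gamma_t$ is unambiguous and the boundary conditions $\gamma_0 = 0$, $\gamma_1 = 1$ are preserved, ensuring $\gamma_t$ is a valid interpolation schedule in the sense of Sec.~\ref{sub:geo}.
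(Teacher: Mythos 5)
Your proposal is correct and follows essentially the same route as the paper: both arguments hinge on the identity $\text{CDF}_{\mathcal{B}(0.5,0.5)}(t) = \tfrac{2}{\pi}\arcsin\sqrt{t}$ (the paper derives it by the substitution $x=\sin^2\theta$ in the Beta integral, you invoke/verify the same arcsine-distribution fact) together with the triviality of the $\mathcal{B}(1,1)$ case. The only difference is direction of presentation—you invert Condition~\ref{cond:optimal_schedule} to get $\gamma_t=\tfrac{2}{\pi}\arcsin\sqrt{\alpha_t}$ while the paper computes the CDF first and then applies the condition—which is not a substantive difference.
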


We begin with a toy model in Fig.~\ref{fig:ex3} to illustrate how tuning beta parameters $(a, b)$ affects sampling quality under low-step regimes, showing that different target distributions prefer different schedules, highlighting the need for task-specific adaptation. In Section~\ref{sec:experiments}, we further demonstrate that, certain beta schedules can outperform standard hand-crafted schedules, especially when the number of sampling steps is limited, on real benchmarks.

\begin{figure}[t]
    \centering
    \begin{minipage}[b]{0.48\textwidth}
        \includegraphics[width=\textwidth]{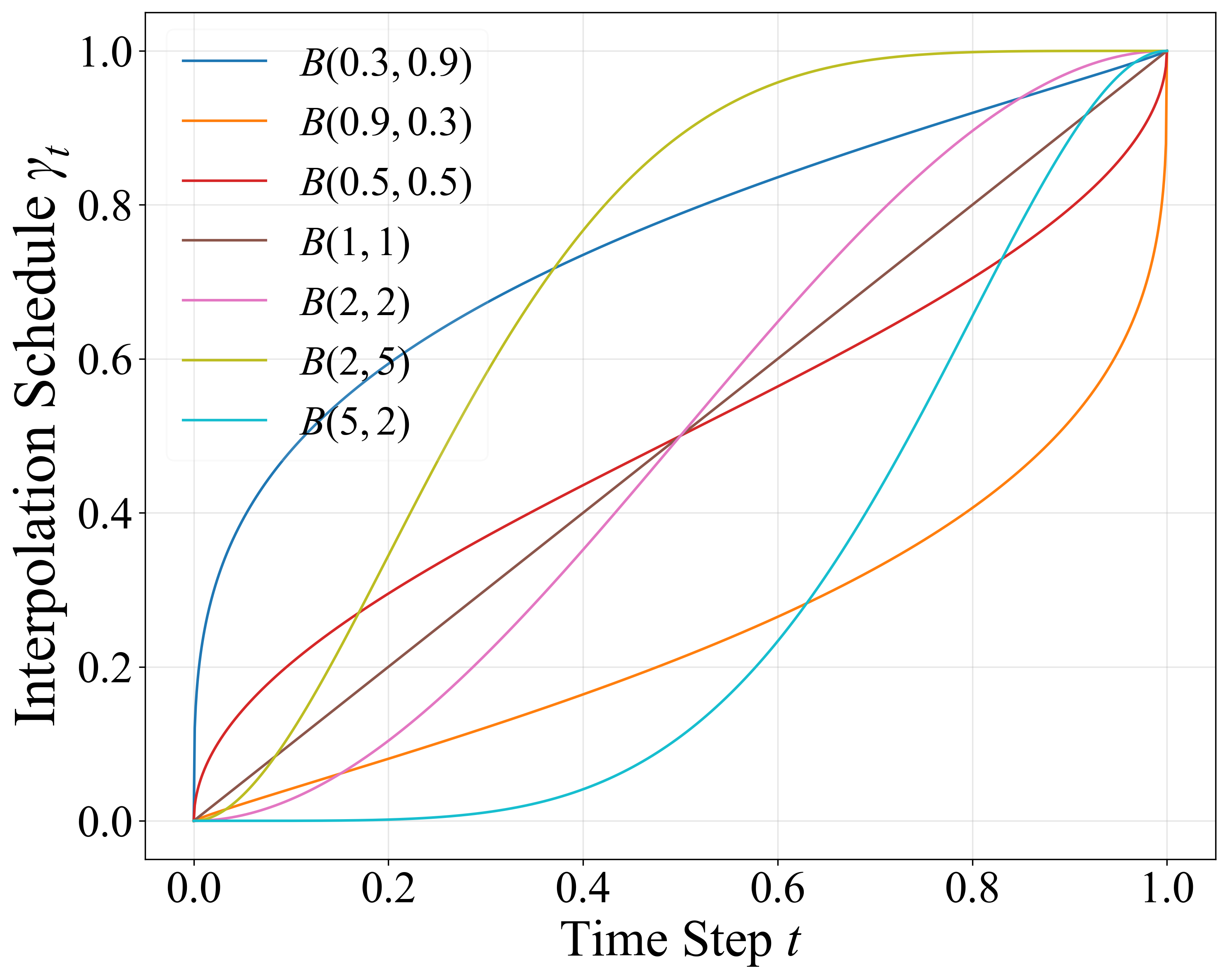}
    \end{minipage}
    \hfill
    \begin{minipage}[b]{0.48\textwidth}
        \includegraphics[width=\textwidth]{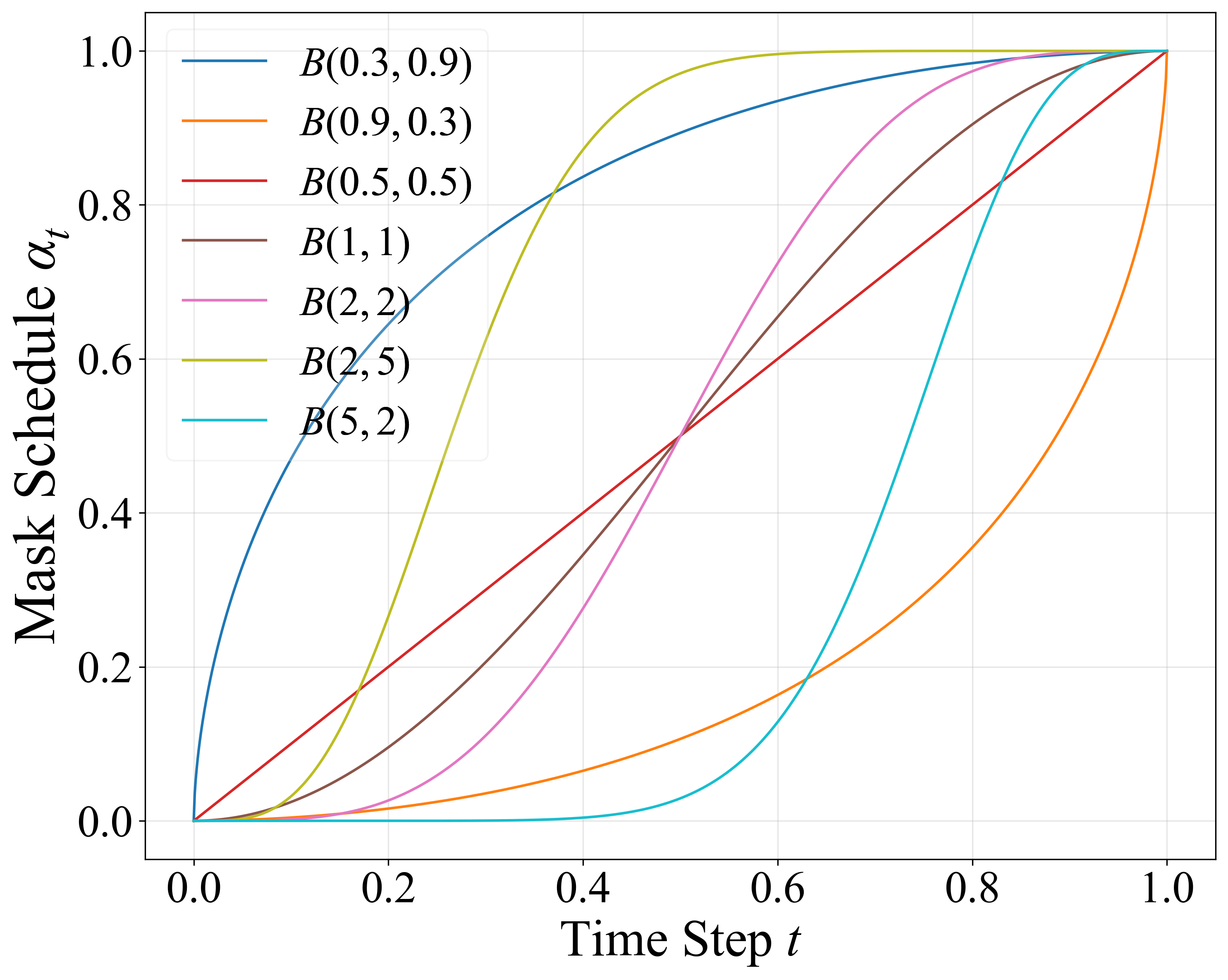}
    \end{minipage}

    \caption{\textbf{Beta-parameterized interpolation schedules and corresponding mask schedules.} The left panel demonstrates beta-parameterized interpolation schedule morphologies, while the right panel displays corresponding optimal $\alpha_t^\star$ schedules derived via Condition~\ref{cond:optimal_schedule}.}
    \label{fig:gammas&alphas}
\end{figure}

\begin{figure}[t]
    \centering
    \begin{minipage}[t]{0.98\textwidth}
        \includegraphics[width=\textwidth]{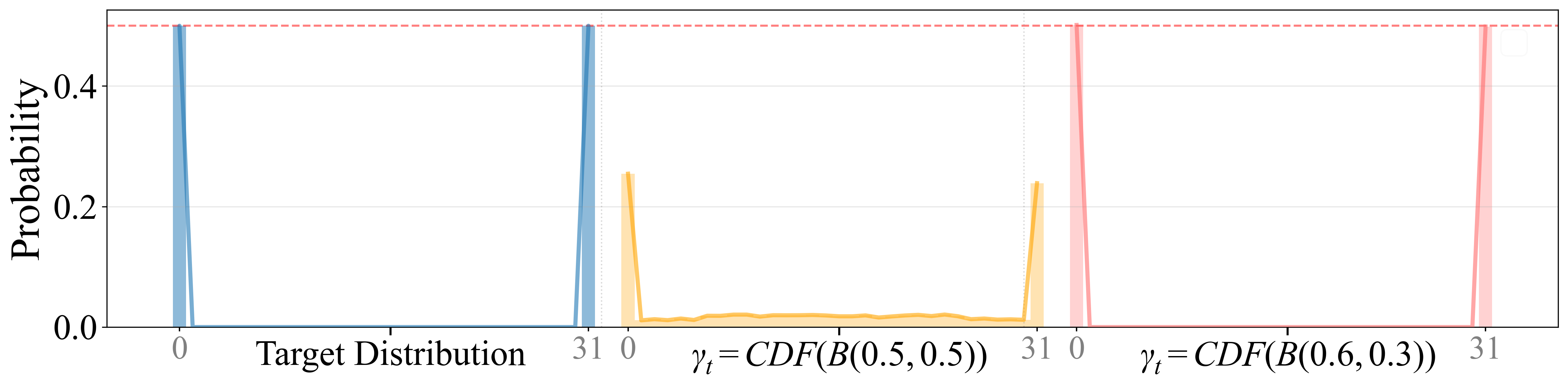}
    \end{minipage}
    \vspace{0.5cm}
    \begin{minipage}[t]{0.98\textwidth}
        \includegraphics[width=\textwidth]{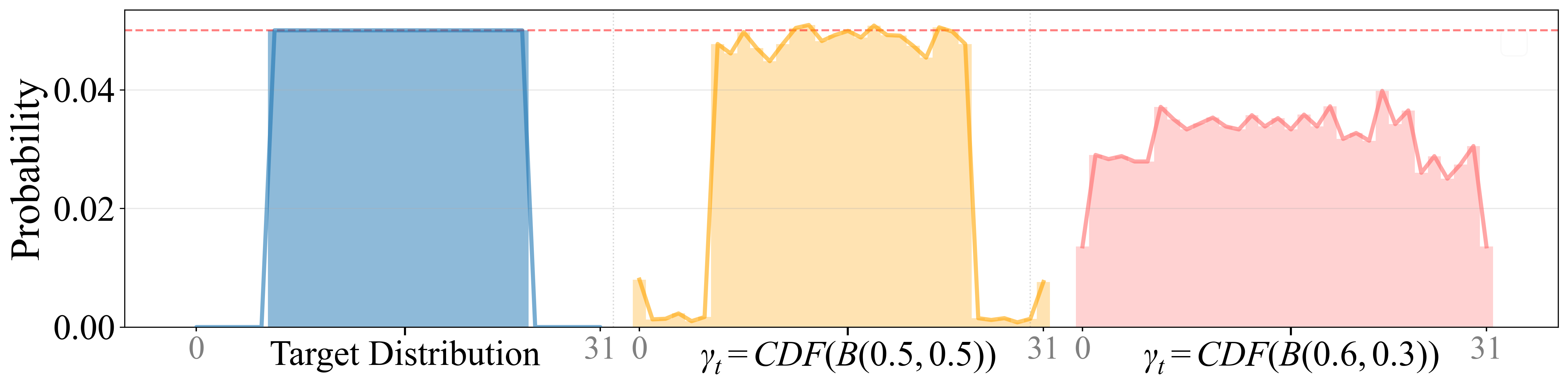}
    \end{minipage}

    \caption{\textbf{Toy experiments illustrating how different target distributions prefer different schedules.} Each panel visualizes the effect of beta parameter tuning on sampling quality under limited step budgets by showing a target distribution and two distributions sampled by different schedules. More details of this experiment are provided in Appendix~\ref{app:toyexp}.}
    \label{fig:ex3}
\end{figure}

\section{Experiments}
\label{sec:experiments}
In this section, we demonstrate that our energy-inspired task-specific tuning method introduced in Sec.~\ref{sub:Energy-Inspired} can be used to accelerate MDM sampling in practical applications such as mathematical reasoning and code generation. Crucially, the invariance of training loss of MDMs to the choice of mask schedules (see Section~\ref{sub:dpf}) allows us to efficiently optimize the mask schedule for specific downstream tasks without the computational burden of end-to-end model retraining. For details on how to identify a task-favorable schedule by tuning the beta parameters, please refer to Appendix~\ref{app:tuning}.

We evaluate our method using LLaDA 8B~\cite{llada}, an open-source MDM that achieves performance comparable to modern large language models such as LLaMA3~\cite{dubey2024llama}. We select six representative tasks: MBPP~\cite{metric:mbpp}, HumanEval~\cite{metric:humaneval}, BBH~\cite{metric:bbh}, GSM8K~\cite{metric:gsm8k}, Hendrycks Math~\cite{metric:math} and Minerva Math~\cite{lewkowycz2022solving}. These benchmarks comprehensively assess the model’s capabilities in general reasoning, mathematical problem solving, and code generation. Please refer to Appendix~\ref{app:exp_metric} for more experimental details.


Fig.~\ref{fig:mainexp} systematically evaluates sampling performance across diverse reasoning benchmarks under varying step budgets. Our analysis reveals that on code generation tasks (MBPP and HumanEval), beta-parameterized schedules match the generation quality of the linear baseline with 2$\times$ step reduction. For the Hendrycks Math mathematical reasoning task, our method achieves performance parity with the linear schedule using 4$\times$ fewer steps. There are also benchmarks on which beta-parameterized schedules exhibit comparable yet not better performance, such as BBH~\cite{metric:bbh} and GSM8K~\cite{metric:gsm8k}, and we provide the results on these two benchmarks in Appendix.~\ref{app:raw}.

On benchmarks where beta-parameterized schedules demonstrate profound empirical advantages over manual baselines, we observe a systematic preference for convex interpolation schedules. As analyzed in Sec.~\ref{sub:Energy-Inspired}, this empirical bias suggests that for certain tasks such as code generation, allocating computational resources to optimize early-stage sampling dynamics (coarse structure formation) may yield greater quality gains compared to fine-grained refinement phases. Further discussions on the task-specific schedule preferences are presented in Appendix~\ref{exp:preference}.

While more rigorous characterization of task-schedule correspondences remains open, constituting critical directions for future research, the schedule invariance property facilitates computationally efficient schedule exploration without model retraining. Practitioners can thus perform task-specific schedule tuning through our framework, requiring no additional training infrastructure.


Raw data of our experiments corresponding to Fig.~\ref{fig:mainexp} are presented in Appendix~\ref{app:raw}, and we provide additional samples in Appendix~\ref{app:additional} to offer a more comprehensive understanding.

\begin{figure}[t]
    \centering
    \begin{minipage}[b]{0.48\textwidth}
        \includegraphics[width=\textwidth]{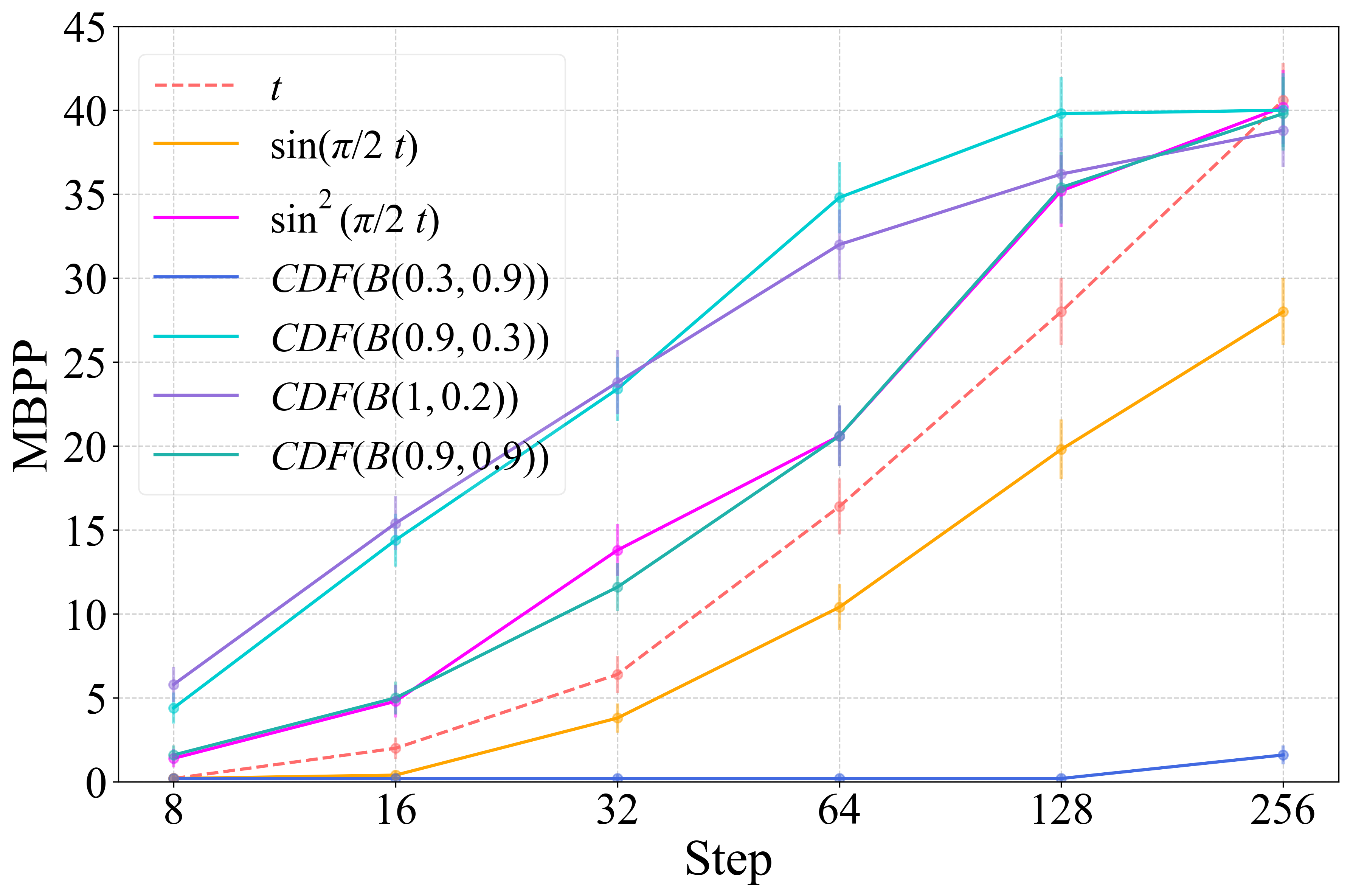}
    \end{minipage}
    \hfill
    \begin{minipage}[b]{0.48\textwidth}
        \includegraphics[width=\textwidth]{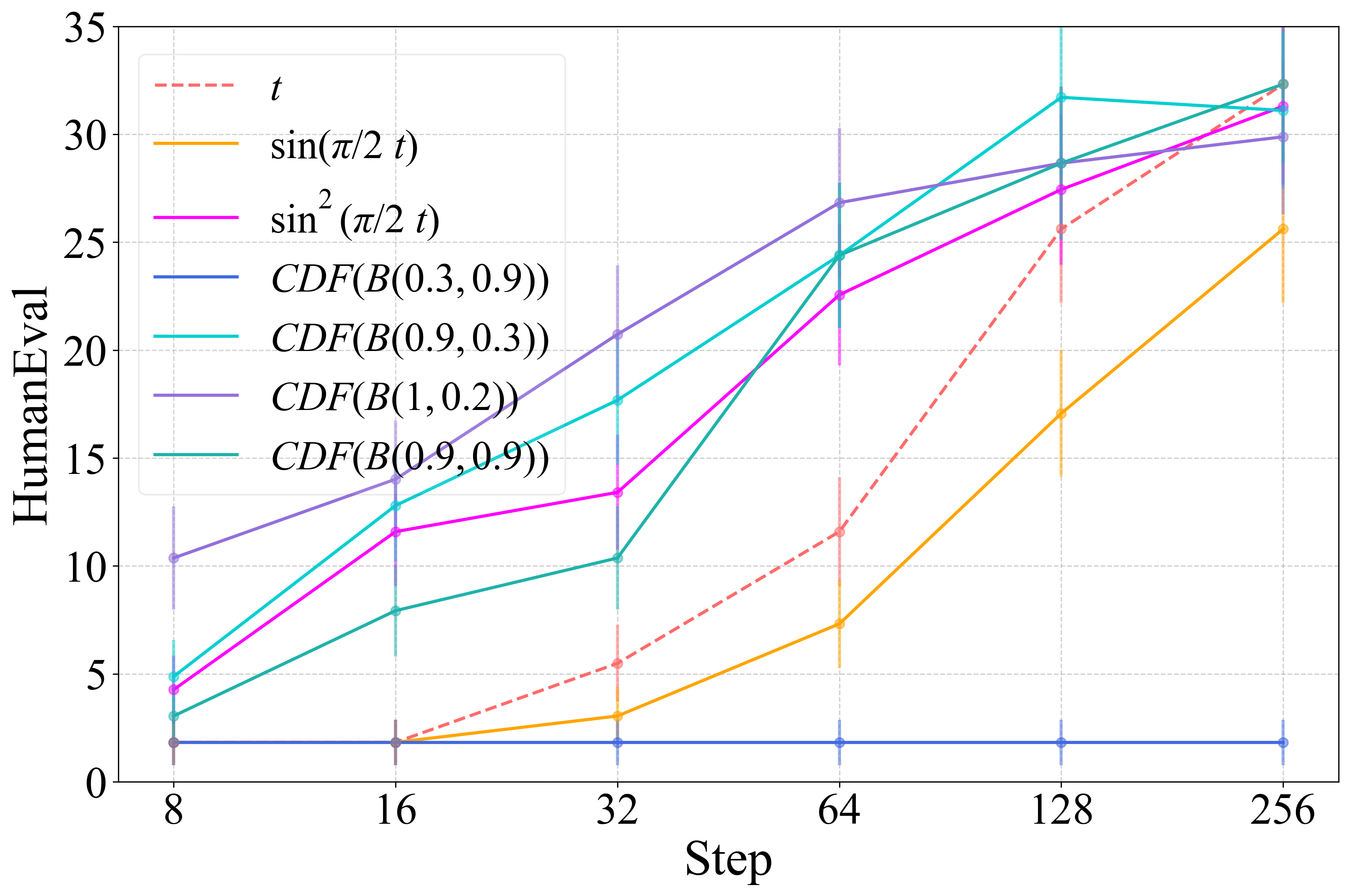}
    \end{minipage}

    \vspace{0.5cm} 

    \begin{minipage}[b]{0.48\textwidth}
        \includegraphics[width=\textwidth]{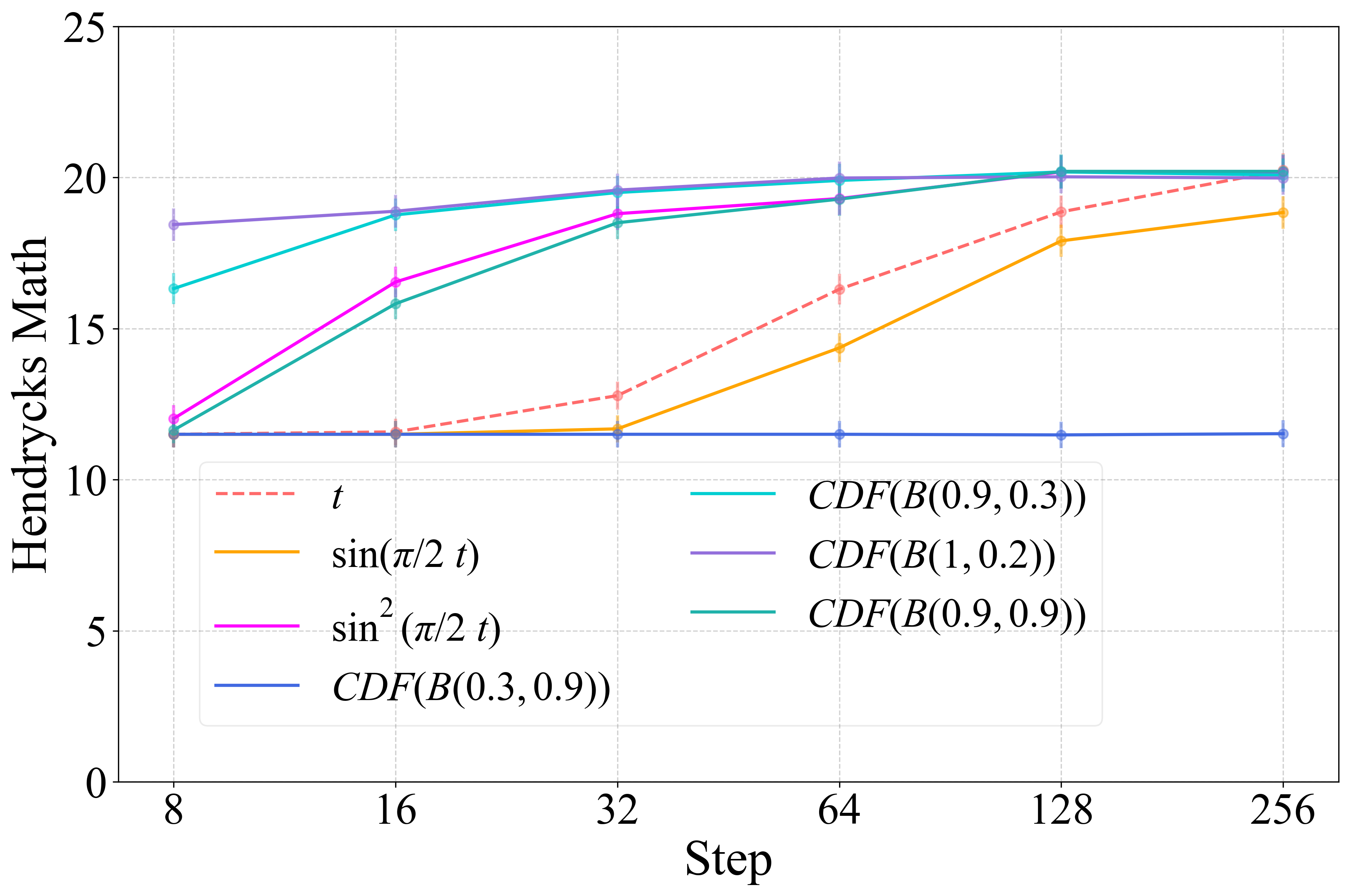}
    \end{minipage}
    \hfill
    \begin{minipage}[b]{0.48\textwidth}
        \includegraphics[width=\textwidth]{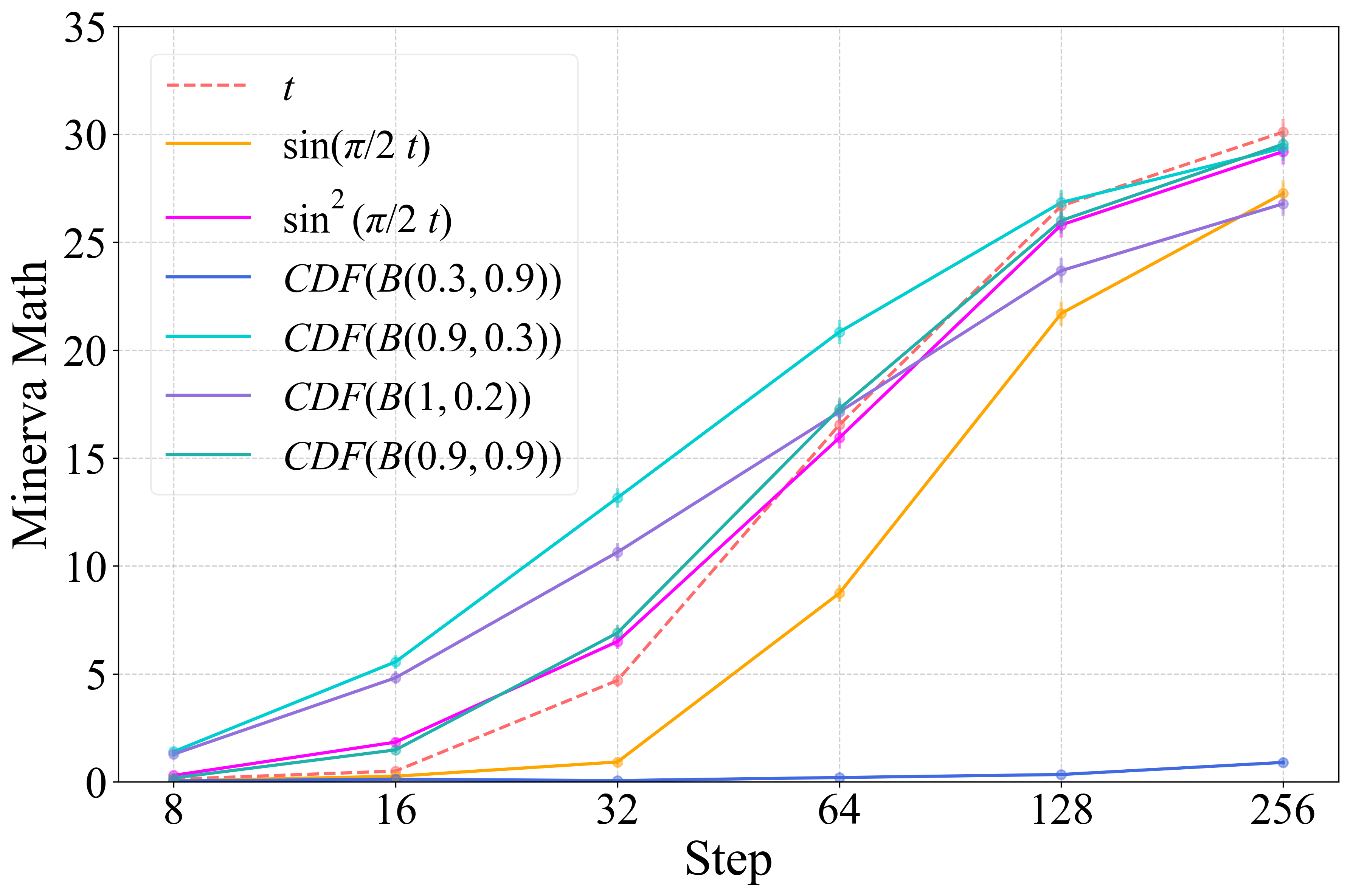}
    \end{minipage}

    \caption{\textbf{Performance evaluation of energy-optimized schedules on LLaDA 8B~\cite{llada}.} Each panel corresponds to a distinct benchmark. The x-axis displays sampling steps on a logarithmic scale, while the y-axis quantifies task performance, where higher values denote superior generation quality. Results on benchmarks where beta-parameterized schedules exhibit comparable yet not better performance are provided in Appendix.~\ref{app:raw}.}
    \label{fig:mainexp}
\end{figure}

\section{Related Work}
\label{sec:related}
\textbf{Mask Diffusion Models}. MDMs~\cite{sohl2015deep,austin2021structured,sedd,radd,mdlm,shi2024simplified} have established themselves as a prominent class of generative models for discrete data. While substantial progress has been made in understanding MDM training dynamics through theoretically equivalent objective formulations~\cite{kingma2021variational,shi2024simplified,mdlm,radd} and parameterization strategies~\cite{austin2021structured,campbell2022continuous,xie2022vector,sun2022score,meng2022concrete}, the sampling process remains relatively underexplored. Existing efforts primarily concentrate on developing advanced discrete sampling algorithms, including Tweedie $\tau$-sampling~\cite{gillespie2001approximate,eweinan,lou2023discrete}, k-Gillespie methods~\cite{zhao2025informed}, and higher-order solvers~\cite{FastSolver}, along with discretizing time step optimization techniques~\cite{JumpYourSteps} and distillation-based acceleration~\cite{kou2024cllms,xu2025show}. Notably, the critical mask schedules governing sampling trajectories have not received systematic investigation. Current implementations~\cite{austin2021structured,lou2023discrete,sohl2015deep,han2022ssd,mdlm} typically employ manually designed schedules (predominantly linear), even in state-of-the-art MDM-based large language models~\cite{llada}. This underscores the necessity for principled task-adaptive schedule optimization, which constitutes our primary contribution.

\textbf{Energy and Geodesic Perspective}. The connection between kinetic energy minimization~\cite{mccann1997convexity,villani2021topics} and efficient sampling via optimal transport trajectories~\cite{FlowMatching,albergo2023building,liu2023flow,FlowThroughTransport} has been well-established in continuous settings. However, existing literature predominantly examines continuous diffusion processes, limiting direct applicability to discrete domains. Recent geometric analyses~\cite{jo2025} reveal intrinsic links between MDM probability paths and geodesic curves under specific interpolation schedules, motivating our exploration of geodesic perspective as a bridge between MDMs and kinetic principles. However, the energy perspective—which plays a vital role in our optimal transport framework—is absent in~\cite{jo2025}, thus preventing it from establishing the optimality of Condition 3.4 in the context of sampling efficiency. The most relevant work~\cite{DiscretePaths} introduces energy perspectives to Discrete Flow Matching (DFM), a distinct discrete probability flow (DPF) variant. Crucially, their framework decouples probability flow and rate matrix optimization - an approach incompatible with MDMs where $\alpha_t$ jointly governs both components. Despite this architectural constraint, we demonstrate through Theorem~\ref{thm:tri} that MDMs inherently achieve optimal rate selection by leveraging a key lemma from~\cite{DiscretePaths} (see Appendix~\ref{pf:tri}). Our introduced $\gamma_t$ interpolation schedule also provides novel theoretical insights by establishing that every mask schedule optimizes a corresponding energy functional, thereby justifying task-specific schedule tuning - a capability absent in~\cite{DiscretePaths} where formulations reduce to the $\gamma_t=t$ special case. Furthermore, the training loss invariance to mask schedules represents a unique MDM property distinguishing it from DFM and conventional DPF frameworks, enabling exclusive post-training schedule optimization within our MDM paradigm.


\section{Conclusion}
\label{sec:conclusion}

We present a theoretical framework that establishes MDMs as optimal transport processes minimizing three distinct energy formulations, demonstrating that MDMs inherently achieve minimal sampling cost through energy-optimal mask schedules. Building upon this theoretical foundation, we develop a Beta-CDF parameterization scheme that facilitates efficient task-adaptive schedule optimization. Comprehensive empirical validation across synthetic and real-world benchmarks confirms our framework's effectiveness, showing consistent performance gains in few-step sampling scenarios.

\textbf{Limitation.} While our method enables practical task-specific schedule optimization, the intrinsic relationship between high-dimensional real-world tasks and their optimal schedules remains not fully understood - a fundamental challenge requiring further investigation. This limitation highlights promising directions for future research in interpretable schedule-task correlation analysis.

\textbf{Broader Impact.} Our schedule tuning framework could benefit real-world applications by reducing computational costs. Conversely, the acceleration capability may lower synthetic content generation barriers, potentially exacerbating misinformation risks.

\section*{Acknowledgement}

This work was supported by the National Natural Science Foundation of China (No. 92470118); the Beijing Natural Science Foundation (No. L247030); the Beijing Nova Program (No. 20230484416); the Public Computing Cloud of Renmin University of China; the Beijing Major Science and Technology Project under Contract no. Z251100008425002; the fund for building world-class universities (disciplines) of Renmin University of China; and the Huawei Research Fund.

\printbibliography

\clearpage

\newpage

\appendixtitleon
\appendixtitletocon
\begin{appendices}

\section{Detailed Notations and Definitions}
\begin{itemize}
    \item $n$: the sequence length.
    \item $x,z$: a $n$-dimensional vector representing states in a model.
    \item $\mathcal{D}$: the vocabulary with size $|\mathcal{D}|=d$.
    \item $x^i,z^i\in\mathcal{D}$: the $i$-th token of data $x,z$.
    \item $m(z)$: the number of mask tokens in $z$.
    \item $(p_t(z))_{t\in[0,1]}$: the DPF that connects a simple distribution $p_0(z)$ and a data distribution $p_1(z)=q(z)$.
    \item $p_{t|1}(z|x_1)$: the conditional probability flow conditioned on the data.
    \item $p_{1|t}(x_1|z)$: the posterior distribution conditioned on time $t$.
    \item $\alpha_t$: The mask schedule function.
    \item $Q_{t}$: The transition rate matrix of the unmasking process of MDM at time $t$.
    \item $Q_{t|1}$: The conditional rate matrix of the unmasking process of MDM at time $t$.
    \item $\overleftarrow{Q_{t}}$: The transition rate matrix of the masking process of MDM at time $t$.
    \item $\sigma_t$: The transition rate that uniquely determines $\overleftarrow{Q_{t}}$ in MDM settings.
    \item $\gamma_t$: The interpolation schedule function of the geodesic curve on the high-dimensional sphere. Also the weight function we choose in three energy functionals for energy minimization. 
    \item $B(a,b)$: Beta distribution with parameters $a$ and $b$.
    \item CDF: cumulative distribution function
\end{itemize}

\section{An Intuitive Explanation of Geodesics}
\label{app:geo}

This appendix provides an intuitive introduction to geodesics and exponential maps to clarify the geometric interpretation of MDM in~\cite{jo2025}. Readers seeking formal mathematical definitions of these differential geometry concepts may refer to~\cite{ay2017information} for complete technical specifications.

\textbf{Manifolds and Tangent Spaces}. A manifold is a smooth high-dimensional surface, such as a $\mathcal{D}$-dimensional sphere $\mathbb{S}^{D-1}$. In our scenario, the embedding Eq.~(\ref{eq:embedding}) maps the per-token conditional probability flow Eq.~(\ref{eq:condp}) onto $\mathbb{S}^{D-1}$, as shown in Sec.~\ref{sub:geo}. On every point $y_0$ on the manifold, there exists a tangent space $\mathcal{T}_{y_0}$ containing all vectors starting from $y_0$ and tangent to the manifold.

\textbf{Exponential Map and Geodesics}. The exponential map is denoted as $\exp_{y_0}(v)$, which maps a tangent vector $v \in \mathcal{T}_{y_0}$ to a point $y_1$ on the manifold. Geometrically, this represents moving from $y_0$ along the "direction" of $v$ at constant speed until reaching $y_1$. This movement follows a geodesic path, which is both the "shortest path" and the "straight path" between two points on a manifold, generalizing straight lines in Euclidean space. For example, great circles are geodesics on spheres.

\textbf{Inverse Exponential Map and Parameterized Geodesic Trajectory}: The inverse exponential map is denoted as $\exp_{y_0}^{-1}(y_1)$, which maps a manifold point $y_1$ back to a tangent vector $v \in \mathcal{T}_{y_0}$. This vector encodes both direction and distance from $y_0$ to $y_1$. Therefore, given start/end points $y_0$ and $y_1$ on the manifold, a geodesic trajectory parameterized by $\gamma_t$ (strictly increasing with $\gamma_0=0$, $\gamma_1=1$) can be expressed as:
\begin{equation}
    \exp_{y_0}^{-1}(y_t) = \gamma_t \cdot \exp_{y_0}^{-1}(y_1), \quad t \in [0,1]
    \label{eq:geodesic_param}
\end{equation}
This formulation implies:
\begin{itemize}
    \item $\exp_{y_0}^{-1}(y_1)$ is the tangent vector encoding the direction and scale that generates the geodesic curve from $y_0$ to $y_1$.
    \item $\gamma_t$ is the interpolation schedule and $\gamma_t = t$ means constant-speed motion along the geodesic.
\end{itemize}

Recent theoretical advances~\cite{jo2025} reveal that MDM's conditional probability flow in Eq.~(\ref{eq:condp}) forms exactly the geodesic curve in spherical geometry (see Fig.~\ref{fig:curve}). The interpolation schedule $\gamma_t$ is uniquely determined by the mask schedule $\alpha_t$ as $\gamma_t=\frac{2}{\pi}\arcsin\sqrt{\alpha_t}$, which is equivalent with Condition~\ref{cond:optimal_schedule}.

\begin{figure}[t]
    \centering
    \includegraphics[width=0.65\textwidth]{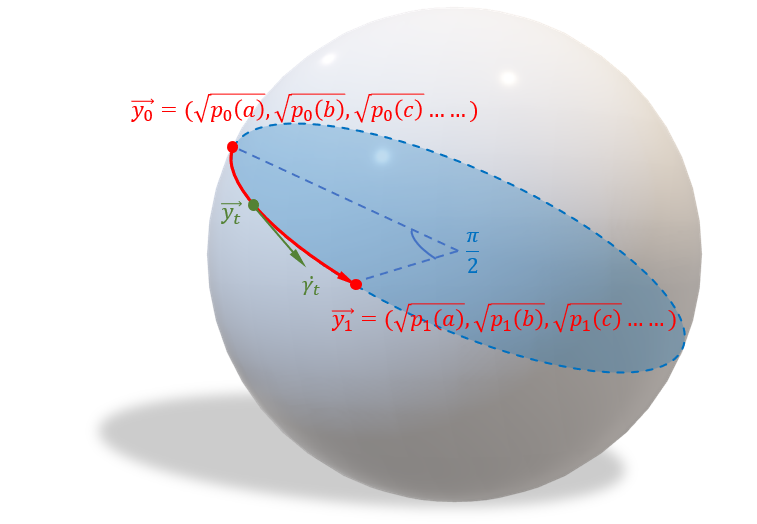} 
    \caption{\textbf{The per-token conditional probability flow in MDM generates exactly the geodesic curve.}}
    \label{fig:curve}
\end{figure}

\section{Proof of Auxillary Lemmas}
In this section, we provide complete proofs for the auxiliary lemmas referenced in the main text for completeness.

\subsection{Derivation of the Relationship between $\alpha_t$ and $\sigma_t$ in MDM}
\label{pf:alpha&sigma}

\begin{lemma}
    The mask schedule $\alpha_t$ relates to the rate $\sigma_t$ via expression
    \begin{align}
        \alpha_t = \exp\left(-\int_t^1 \sigma_s ds\right).
    \end{align}
\end{lemma}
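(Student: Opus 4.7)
The plan is to reduce the statement to a two-state absorbing Markov chain acting on a single token. Because the masking rate matrix $\overleftarrow{Q_t}$ contains only single-coordinate masking transitions, the marginal $p_{t|1}(x^i\mid x_1)$ evolves independently of the other tokens and is supported on the pair $\{x_1^i,[\mathrm{M}]\}$; within this pair the only nonzero rate is $x_1^i\to[\mathrm{M}]$ at rate $\sigma_t$, and $[\mathrm{M}]$ is absorbing. So I can ignore the joint structure entirely and work with a scalar ODE for each coordinate.

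Next I would write the Kolmogorov equation for $u_t:=p_{t|1}(x^i=x_1^i\mid x_1)$. Since no probability mass ever re-emerges from $[\mathrm{M}]$, the equation collapses to the statement that mass leaves the live state at instantaneous rate $\sigma_t u_t$. Tracking signs carefully --- $\overleftarrow{Q_t}$ is parameterized in the $t:1\to 0$ masking direction, so differentiating with respect to the forward variable $t$ picks up a sign flip --- this becomes the linear ODE $\dot u_t = \sigma_t u_t$ on $[0,1]$, together with the boundary condition $u_1 = 1$, which is forced by conditioning on $x_1$.

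Separation of variables then yields $u_t = \exp\!\bigl(-\int_t^1 \sigma_s\,ds\bigr)$, the usual survival probability for a time-inhomogeneous Poisson event. Matching $u_t$ to the coefficient structure of the closed-form conditional flow in Eq.~(\ref{eq:condp}) identifies it with $\alpha_t$ (equivalently with $1-\alpha_t$, depending on which coefficient is named $\alpha_t$), giving the stated formula.

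The only real obstacle is bookkeeping: making sure that the sign convention for $\overleftarrow{Q_t}$ (expressed in the backward-in-$t$ masking direction) is carried correctly into the forward-$t$ ODE, and that the boundary condition is imposed at $t=1$ rather than $t=0$ (the point at which the conditioning on $x_1$ becomes a Dirac mass). Past that, the argument is a one-line absorbing-chain computation and nothing more substantial than integration of a scalar linear ODE is needed.
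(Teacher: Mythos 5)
Your proposal is correct and follows essentially the same route as the paper: both reduce the claim to the per-token survival probability of the time-inhomogeneous masking clock with rate $\sigma_t$ and obtain $\exp\bigl(-\int_t^1\sigma_s\,ds\bigr)$, the only difference being that you integrate the Kolmogorov ODE $\dot u_t=\sigma_t u_t$ with $u_1=1$, whereas the paper squeezes the preservation probability between discrete products over infinitesimal intervals and passes to the Riemann-sum limit. Your hedge about whether $u_t$ should be matched to $\alpha_t$ or $1-\alpha_t$ is settled the way the paper's appendix does it: there $\alpha_t$ is taken as the coefficient of $\delta_{x_1^i}$ (the unmasked probability), i.e.\ exactly your $u_t$, so the stated formula follows.
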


\begin{proof}

From the definition of mask schedule in Eq.~(\ref{eq:condp}), it suffices to prove
\begin{align}
    \mathbb{P}(x_t^i=x_1^i|x_1) &= \exp\left(-\int_t^1 \sigma_s ds\right).
\end{align}

Consider infinitesimal time intervals $(t, t-\Delta t]$ where each token experiences masking probability $\sigma_t\Delta t + o(\Delta t)$. The preservation probability therefore satisfies the following product bounds:
\begin{align}
    \prod_{k=0}^{\lfloor(1-t)/\Delta t\rfloor+1} (1-\sigma_{(1-k\Delta t)}\Delta t +o(\Delta t)) &\leq\mathbb{P}(x_t^i=x_1^i|x_1)\\
    &\leq \prod_{k=0}^{\lfloor(1-t)/\Delta t\rfloor} (1-\sigma_{(1-k\Delta t)}\Delta t +o(\Delta t)).
\end{align}

Analyzing the upper bound through logarithmic transformation, we get
\begin{align}
    \prod_{k=0}^{\lfloor(1-t)/\Delta t\rfloor} (1-\sigma_{(1-k\Delta t)}\Delta t +o(\Delta t))  &= \exp\left(\sum_{k=0}^{\lfloor(1-t)/\Delta_t\rfloor} \log(1-\sigma_{1-k\Delta t}\Delta t +o(\Delta t))\right)\\
    &= \exp\left(\sum_{k=0}^{\lfloor(1-t)/\Delta_t\rfloor} \left(-\sigma_{1-k\Delta t}\Delta t +o(\Delta t)\right)\right)\\
    & \overset{(1)}{\rightarrow} \exp(-\int_0^{1-t} \sigma_{1-u} du)\\
    & \overset{(2)}{=} \exp\left(-\int_t^1 \sigma_s ds\right),
\end{align}

where in $(1)$ follows from Riemann sum convergence as $\Delta t \to 0$ and $(2)$ applies the variable substitution $s = 1 - u$ to align integration limits.

The lower bound converges identically through analogous arguments. This completes the proof.
\end{proof}

\subsection{Proof of the Invariance of Training Loss to the Mask Schedule}
\label{pf:invariant}

Different equivalence expressions of the training loss of MDM has been proved invariant to the choice of $\alpha_t$ in multiple works~\cite{kingma2021variational,shi2024simplified,mdlm}. We adapt a proof from~\cite{mdlm} by examining the negative evidence lower bound (NELBO) through token-level denoising components:
\begin{align}
    \mathcal{L}_{\mathrm{NELBO}} &= \mathbb{E}_{p_1(x_1),p_{t|1}(z|x_1)}\int_{0}^{1}\frac{-\dot{\alpha}_{t}}{1-\alpha_{t}}\mathrm{log}\langle x_{\theta}(z_{t},t),x_1\rangle\mathrm{d}t\\
    &=\mathbb{E}_{p_1(x_1),p_{t|1}(z|x_1)} \int_0^1 \frac{-\dot{\alpha}_t}{1-\alpha_t}\sum_{i=1}^n \log \langle x_{\theta}^i(z^{1:n},t),x_1^i\rangle dt.
\end{align}

Despite the apparent dependence on $\alpha_t$ in its parametric form, the loss exhibits fundamental invariance as formalized below.
\begin{proposition}[Schedule Invariance, Proof Adapted from~\cite{mdlm}]
\label{prop:invariant}
$\mathcal{L}_{\text{NELBO}}$ is invariant to $\alpha_t$'s functional form, depending only on its boundary values $\alpha_0=0,\alpha_1=1$.
\end{proposition}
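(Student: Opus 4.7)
The plan is to perform the change of variables $s = \alpha_t$ inside the integral, which trivializes the $t$-dependence of the integrand. The key structural observation is that every piece of the NELBO integrand that depends on $t$ does so only through $\alpha_t$: the coefficient $\tfrac{-\dot{\alpha}_t}{1-\alpha_t}$ depends only on $\alpha_t$ and $\dot{\alpha}_t$; the conditional $p_{t|1}(z\mid x_1)$ from Eq.~(\ref{eq:condp}) is linear in $\alpha_t$ with no other $t$-dependence; and (by the standard MDM convention) the denoiser $x_\theta$ is taken to condition on the noise level $\alpha_t$ rather than raw time. Under this convention the entire integrand can be written as $g(\alpha_t,z,x_1;\theta)\,\dot{\alpha}_t$ for some function $g$ that does not depend on the schedule.

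Next I would invoke the hypothesis that $\alpha_t$ is smooth and strictly increasing with $\alpha_0 = 0$ and $\alpha_1 = 1$, so $t \mapsto s = \alpha_t$ is a $C^1$-diffeomorphism of $[0,1]$ onto itself. Substituting $s = \alpha_t$ with $ds = \dot{\alpha}_t\,dt$, the measure $\tfrac{-\dot{\alpha}_t}{1-\alpha_t}\,dt$ becomes $\tfrac{-1}{1-s}\,ds$, the conditional becomes $s\,\delta_{[\mathrm{M}]}(z)+(1-s)\,\delta_{x_1}(z)$, and the limits transform to $s=\alpha_0=0$ and $s=\alpha_1=1$. The resulting integral has no remaining trace of the original schedule, so $\mathcal{L}_{\text{NELBO}}$ depends on $\alpha_t$ only through its boundary values.

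The main obstacle, and the one point that genuinely needs care, is justifying that the time argument of $x_\theta$ should be interpreted as the noise level $\alpha_t$ rather than raw $t$; without this convention the statement is false for arbitrary parameterizations, since different schedules would feed different $t$'s to the same network and produce different logits at the same noise level. I would therefore state explicitly at the outset, following the MDLM/SEDD convention, that the denoiser is reparameterized as $x_\theta(z,t) = \tilde{x}_\theta(z,\alpha_t)$, after which the change-of-variables argument goes through cleanly and the resulting invariance holds pointwise in $\theta$ rather than merely at the optimum.
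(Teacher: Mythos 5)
Your proposal is correct and follows essentially the same route as the paper: a change of variables driven by the schedule (the paper substitutes $\gamma = \log(1-\alpha_t)$ where you substitute $s = \alpha_t$; both are monotone reparameterizations that make the integration limits schedule-independent and remove any dependence on the trajectory of $\alpha_t$ between its endpoints). Your explicit insistence that the denoiser condition on the noise level rather than raw time is the same issue the paper handles implicitly by absorbing the schedule into a redefined network via $\tilde{x}_{\theta}(\tilde{z}_{\gamma},\gamma) \equiv x_{\theta}(\tilde{z}_{\gamma}, f^{-1}(\gamma))$, so stating that convention up front is a clarification of the same argument rather than a different one.
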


\begin{proof}
The invariance emerges through variable substitution via the chain rule. Let $\gamma \equiv \log(1-\alpha_t)$, then
\begin{align}
\mathcal{L}_{\mathrm{NELBO}}&=\mathbb{E}_{p_1(x_1),p_{t|1}(z|x_1)}\int_{t=0}^{t=1}\frac{-\alpha_{t}^{\prime}}{1-\alpha_{t}}\mathrm{log}\langle x_{\theta}(z_{t},t), x\rangle\mathrm{d}t\\
&\overset{(1)}{=}\mathbb{E}_{p_1(x_1),p_{t|1}(z|x_1)}\int_{t=0}^{t=1}\log\langle x_{\theta}(z_{t},t),x\rangle\mathrm{d}[f(t)]\\
&\overset{(2)}{=}\mathbb{E}_{p_1(x_1),p_{t|1}(z|x_1)}\int_{\gamma=0}^{\gamma=-\infty}\log\langle x_{\theta}(z_{f^{-1}(\gamma)},f^{-1}(\gamma)),x\rangle\mathrm{d}\gamma\\
&\overset{(3)}{=} -\mathbb{E}_{p_1(x_1),p_{t|1}(z|x_1)}\int_{\gamma=-\infty}^{\gamma=0}\log\langle\tilde{x}_{\theta}(\tilde{z}_{\gamma},\gamma),x\rangle\mathrm{d}\gamma
\end{align}

Here $(1)$ applies the substitution $f(t)=\log(1-\alpha_{t})$. $(2)$ applies change of variable $\gamma\equiv f(t)$. In (3) we let $\tilde{z}_{\gamma}\equiv z_{f^{-1}(\gamma)}$, $\tilde{x}_{\theta}(\tilde{z}_{\gamma},\gamma)\equiv x_{\theta}(\tilde{z}_{\gamma},f^{-1}(\gamma))$. The final expression contains no explicit dependence on $\alpha_t$'s trajectory between its fixed endpoints, thereby establishing the claimed invariance.
\end{proof}

\subsection{Derivation of Conditional Rate Matrix of MDM}
\label{pf:condQ}


Although explicit sampling through $Q_{t|1}(z,x|x_1)$ remains unnecessary in MDM, establishing closed-form representations proves valuable for theoretical characterization.

\begin{lemma}[Conditional Rate of MDM]
\label{lem:Q}
The following conditional rate matrix generates MDM's unmasking process:
\begin{equation}
    Q_t(z,x|x_1) = 
    \begin{cases}
    \overleftarrow{Q_{t}}(x,z)\frac{p_{t|1}(x|x_1)}{p_{t|1}(z|x_1)} & p_{1|t}(x_1|z)>0\\
    0 & \text{otherwise}
    \end{cases}
    =\begin{cases}
        \frac{\sigma_t\alpha_t}{1-\alpha_t} & z\to x \Rightarrow x_1\\
        -\sum_{x\neq z} Q_t(z,x|x_1) & x=z\\
        0 & \text{otherwise}
    \end{cases}.
\end{equation}
Here $z\to x$ denotes single-token unmasking transitions defined in Sec.\ref{sub:dpf} and $x\Rightarrow x_1$ denotes that $x_1$ can be generated from $x$ through one or several steps of unmasking.
\end{lemma}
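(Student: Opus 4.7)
The lemma's two parts --- the Bayesian-style formula in the first case split and the explicit MDM closed form in the second --- call for two complementary arguments: I would first verify that $Q_t(z,x\mid x_1) := \overleftarrow{Q_t}(x,z)\,p_{t\mid 1}(x\mid x_1)/p_{t\mid 1}(z\mid x_1)$ (extended by $0$ when $p_{1\mid t}(x_1\mid z)=0$) defines a rate matrix generating the conditional flow $p_{t\mid 1}(\cdot\mid x_1)$ and satisfying the marginal consistency condition (Eq.~(\ref{eq:consistency})), and then evaluate it in closed form.

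The key observation supporting the first step is specific to MDM: because the masking rate $\sigma_t$ is the same for every clean token regardless of its identity, the conditional backward (masking) rate satisfies $\overleftarrow{Q_t}(x,z\mid x_1) = \overleftarrow{Q_t}(x,z)$ whenever $x$ and $z$ are both consistent with $x_1$, and is $0$ otherwise. Applying the time-reversal relation (the conditional analog of Eq.~(\ref{eq:score}), obtained by detailed balance on the conditional joint at infinitesimal times) then yields the Bayesian formula. I would confirm correctness by two standard checks: (a) plug it into the Kolmogorov forward equation $\partial_t p_{t\mid 1}(z\mid x_1) = \sum_{x\neq z}[p_{t\mid 1}(x\mid x_1)\,Q_t(x,z\mid x_1) - p_{t\mid 1}(z\mid x_1)\,Q_t(z,x\mid x_1)]$, at which point the identity reduces to the backward Kolmogorov equation for the masking chain acting on $p_{t\mid 1}(\cdot\mid x_1)$; (b) verify marginal consistency by substituting the Bayes identity $p_{t\mid 1}(z\mid x_1)\,p_{1\mid t}(x_1\mid z) = p_{t,1}(z,x_1)/p_t(z)$ and summing over $x_1$ to recover $Q_t(z,x)$.

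For the closed form, I substitute $\overleftarrow{Q_t}(x,z) = \sigma_t$ for single-token mask transitions $z\leftarrow x$ (and $0$ otherwise, which directly produces the ``$0$ otherwise'' branch of the right-hand case split), and use the per-token factorization $p_{t\mid 1}(x\mid x_1) = \prod_{i=1}^n p_{t\mid 1}(x^i\mid x_1^i)$ implied by Eq.~(\ref{eq:condp}). On a valid transition $z\to x \Rightarrow x_1$, the two states differ in exactly one position $i$ (with $z^i=[\mathrm{M}]$ and $x^i=x_1^i$); all other per-token factors cancel in the ratio, leaving just the single-site ratio $\alpha_t/(1-\alpha_t)$ produced by the mask-data interpolation. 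Multiplying by $\sigma_t$ gives $\sigma_t\alpha_t/(1-\alpha_t)$, and the diagonal entry is fixed by the zero-row-sum convention for rate matrices.

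The main obstacle is justifying the collapse $\overleftarrow{Q_t}(\cdot\mid x_1) = \overleftarrow{Q_t}$, which is essential to the clean closed form yet specific to the MDM absorbing structure --- in general DPFs the conditional backward rate depends nontrivially on $x_1$, so the analogous formula would carry additional $x_1$-dependence. A secondary technical point is handling the degenerate case $p_{t\mid 1}(z\mid x_1) = 0$ (equivalently $p_{1\mid t}(x_1\mid z)=0$), in which $z$ is inconsistent with $x_1$: such $z$ lies off the support of the conditional flow, so setting $Q_t(z,\cdot\mid x_1):=0$ is both well-defined and automatically consistent with the marginal consistency check above.
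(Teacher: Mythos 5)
Your proposal is correct and follows essentially the same route as the paper: the paper likewise verifies the marginal consistency condition in Eq.~(\ref{eq:consistency}) by expanding $p_{1|t}(x_1|z)$ via Bayes, cancelling the $p_{t|1}(z|x_1)$ factors, and marginalizing over $x_1$ to recover $Q_t(z,x)$, and then obtains the closed form exactly as you do, from $\overleftarrow{Q_t}(x,z)=\sigma_t$ together with the per-token cancellation in the ratio $p_{t|1}(x|x_1)/p_{t|1}(z|x_1)$ (with the vanishing cases handled through $p_{t|1}(x|x_1)=0$ when $x\nRightarrow x_1$). Your additional ingredients—the conditional time-reversal derivation via $\overleftarrow{Q_t}(\cdot\mid x_1)=\overleftarrow{Q_t}$ and the forward Kolmogorov check—go beyond what the paper records but are sound; only note that your displayed Bayes identity should read $p_{1|t}(x_1|z)=p_{t|1}(z|x_1)\,p_1(x_1)/p_t(z)$, which is the cancellation your argument (and the paper's) actually uses.
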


Applying Bayes' theorem establishes the posterior relationship:
\begin{align}
    p_{1|t}(x_1|z) = \frac{p_{t|1}(z|x_1)p_1(x_1)}{p_t(z)},
\end{align} 

we know that the positivity condition $p_{1|t}(x_1|z)>0$ consequently requires $p_{t|1}(z|x_1)>0$ and $z\Rightarrow x_1$, ensuring the rate matrix's well-posedness.

\begin{proof}
To verify that $Q_t(z,x|x_1)$ generates MDM's demasking dynamics, we confirm the consistency condition in Eq.~(\ref{eq:consistency}) holds. We verify this through the following derivation:
\begin{align}
    \sum_{x_1} Q_{t}(z,x|x_1)p_{1|t}(x_1|z)&= \sum_{x_1} Q_{t}(z,x|x_1)\frac{p_{t|1}(z|x_1)p_1(x_1)}{p_t(z)} \\
    &= \sum_{x_1}\overleftarrow{Q_t}(x,z)\frac{p_{t|1}(x|x_1)}{p_{t|1}(z|x_1)}\frac{p_{t|1}(z|x_1)p_1(x_1)}{p_t(z)}\\
    &= \sum_{x_1}\overleftarrow{Q_t}(x,z)\frac{p_{t|1}(x|x_1)p_1(x_1)}{p_t(z)}\\
    &= \overleftarrow{Q_t}(x,z)\frac{p_t(x)}{p_t(z)} = Q_{t}(z,x).
\end{align}

We subsequently derive its closed-form of $Q_t(z,x|x_1)$. If $x \nRightarrow x_1$, then $p_{t|1}(x|x_1) = 0$, yielding $Q_{t}(z,x|x_1) = 0$. If $z\to x \Rightarrow x_1$, on the other hand, we have
\begin{equation}
\begin{aligned}
    Q_{t}(z,x|x_1) &= \overleftarrow{Q_t}(x,z)\frac{p_{t|1}(x^i|x_1^i)}{p_{t|1}(z^i|x_1^i)}\\   
    &= \sigma_t \frac{\alpha_t}{1-\alpha_t},
\end{aligned}
\end{equation}

thus completing the proof.
\end{proof}

\subsection{Decomposition of Conditional Kinetic Energy along Sequence Dimension}
\label{pf:decom}

In this section, we show that under many DPF frameworks such as MDMs, the conditional kinetic energy can be decomposed along sequence dimension as following:
\begin{align}
    \mathcal{E}_c(p_{t|1}, Q_{t|1}; \gamma_t) &= \mathbb{E}_{t, p_1(x_1), p_{t|1}(z | x_1)} \sum_{x: x \neq z} \frac{1}{\dot{\gamma}_t p_{t|1}(x | x_1)} Q_{t|1}(z, x | x_1)^2\\
    &= \mathbb{E}_{t,p_1(x_1)} \sum_{z,x: x \neq z} \frac{p_{t|1}(z | x_1)}{\dot{\gamma}_t p_{t|1}(x | x_1)} Q_{t|1}(z, x | x_1)^2\\
    &\overset{(1)}{=} \mathbb{E}_{t, p_1(x_1)}  \sum_{i=1}^{n} C\sum_{z^i,x^i: x^i \neq z^i} \frac{p_{t|1}(z^i | x_1)}{\dot{\gamma}_t p_{t|1}(x^i | x_1)} Q_{t|1}(z^i, x^i | x_1)^2\\
     &=  \sum_{i=1}^{n} C \mathbb{E}_{t, p_1(x_1),p_{t|1}(z^i|x_1)}   \sum_{x^i: x^i \neq z^i} \frac{1}{\dot{\gamma}_t p_{t|1}(x^i | x_1)} Q_{t|1}(z^i, x^i | x_1)^2
\end{align}

The pivotal decomposition in $(1)$ leverages two structural properties: (i) The conditional probability flow (Eq.~(\ref{eq:condp})) exhibits token-wise independence, and (ii) The conditional rate matrix (Lemma~\ref{lem:Q}) nullifies transitions altering multiple tokens simultaneously. These enable reduction of full-sequence transitions to single-token operations, with remaining $n-1$ tokens contributing constant combinatorial factors. This structural property persists across various DPF implementations including MDM and Discrete Flow Matching~\cite{DiscreteFlowMatching,DiscretePaths}, validating the conditional kinetic energy as theoretically sound surrogate objective. Notably, standard kinetic energy $\mathcal{E}$ lacks such decomposition due to $p_t(z)$'s dependence on cross-token correlations.

For MDM's binary mask dynamics ($x_1^i$ vs. [MASK]), the combinatorial constant becomes $C=2^{n-1}$:
\begin{equation}
    \mathcal{E}_c(\alpha_t, \gamma_t) =  2^{n-1}\cdot \sum_{i=1}^{n} \mathbb{E}_{t, p_1(x_1),p_{t|1}(z^i|x_1)}   \sum_{x^i: x^i \neq z^i} \frac{1}{\dot{\gamma}_t p_{t|1}(x^i | x_1)} Q_{t|1}(z^i, x^i | x_1)^2.
\end{equation}

This decomposition permits notational relaxation where $\mathcal{E}_c$ analysis considers tokens $z,x\in\mathcal{D}$ independently of sequence context, a slight abuse off notation adopted in Appendix~\ref{pf:cg_equivalence}'s equivalence proof.

\section{Proof of Main Results}

\subsection{Proof of Theorem~\ref{thm:ck_equivalence}}
\label{pf:ck_equivalence}
\begin{appendixTheorem}{thm:ck_equivalence}[Kinetic-conditional equivalence in MDMs]
For any weight function $\gamma_t$ and MDM with mask schedule $\alpha_t$, the marginal and conditional kinetic energies are proportional:
\begin{equation}
    \mathcal{E}_k(\alpha_t, \gamma_t) = C_1 \mathcal{E}_c(\alpha_t, \gamma_t),
\end{equation}
where $C_1$ is a scalar depending only on the sequence length $n$ and vocabulary size $d$. As a result, the two objectives share the same minimizers:
\begin{equation}
\arg\min_{\alpha_t} \mathcal{E}_k(\alpha_t, \gamma_t) = \arg\min_{\alpha_t} \mathcal{E}_c(\alpha_t, \gamma_t).
\end{equation}
\end{appendixTheorem}

Our argument leverages a foundational decomposition from~\cite{radd} regarding concrete score representations:
\begin{lemma}
\label{lem:radd}
    for $z=(z^1, ..., z^i=[\mathrm{M}],... z^n)$, $x=(z^1, ..., x^i\neq[\mathrm{M}],... z^n)$, we have
    \begin{align}
        \frac{p_t(x)}{p_t(z)} = \frac{\alpha_t}{1-\alpha_t}p_1(x^i|z^{UM}),
    \end{align}
    where $z^{UM}$ is the vector consists of all unmasked tokens of $z$.
\end{lemma}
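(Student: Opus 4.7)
The plan is to derive closed-form expressions for $p_t(z)$ and $p_t(x)$ separately by marginalizing the conditional flow over the clean data $x_1$, and then to read off the desired ratio. The key structural input is the per-token factorization of Eq.~(\ref{eq:condp}): since each coordinate is interpolated independently between $x_1^j$ and the mask state $[\mathrm{M}]$, one has $p_{t|1}(z \mid x_1) = \prod_{j=1}^{n} p_{t|1}(z^j \mid x_1^j)$, and each factor is either a pure schedule weight when $z^j = [\mathrm{M}]$ (independent of $x_1^j$) or the complementary schedule weight multiplied by the indicator $\mathbf{1}[x_1^j = z^j]$ when $z^j$ is unmasked.

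Substituting into $p_t(z) = \sum_{x_1} p_{t|1}(z \mid x_1)\, p_1(x_1)$, the product of schedule weights pulls outside the sum since it depends only on $z$, and the remaining indicator constraints collapse the $x_1$-sum into the marginal $p_1(z^{UM})$ of the data distribution over the unmasked positions of $z$. Doing the analogous calculation for $x$, which differs from $z$ only by carrying the data token $x^i$ at coordinate $i$, produces the same structure but with one fewer masked and one more unmasked coordinate, yielding a schedule prefactor whose ratio against the one for $z$ is a single net schedule factor, together with an extended data marginal $p_1(x^{UM})$.

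Finally, the quotient $p_t(x)/p_t(z)$ collapses to the product of that single schedule factor with $p_1(x^{UM})/p_1(z^{UM})$, and the latter is by the definition of conditional probability exactly $p_1(x^i \mid z^{UM})$, matching the statement. There is no genuine obstacle in this argument---it is purely careful bookkeeping across positions. The one subtlety worth flagging is the need to keep the $x_1$-independent schedule factors on the masked coordinates strictly separate from the $x_1$-dependent indicator factors on the unmasked coordinates, so that the marginalization over $x_1$ reduces cleanly to a genuine marginal of $p_1$ rather than to a residual sum over compatible data completions.
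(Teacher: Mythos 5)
Your argument is correct, and it is worth noting that the paper itself never proves Lemma~\ref{lem:radd}: it is imported verbatim from~\cite{radd} as a known decomposition of the concrete score, so your marginalization derivation is a self-contained filling-in rather than a rival to an in-paper proof. The route you take—factorize $p_{t|1}(\cdot\mid x_1)$ token-wise, pull the $x_1$-independent schedule weights on masked coordinates out of the sum $p_t(z)=\sum_{x_1}p_{t|1}(z\mid x_1)p_1(x_1)$, collapse the indicators on unmasked coordinates into the data marginal $p_1(z^{UM})$, and identify $p_1(x^{UM})/p_1(z^{UM})=p_1(x^i\mid z^{UM})$—is exactly the standard argument behind the cited result, and your flagged subtlety (keeping the $x_1$-independent masked factors separate from the indicator factors) is the right one. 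Two small points to tighten. First, you leave the "net schedule factor" unevaluated; to land on $\frac{\alpha_t}{1-\alpha_t}$ you must use the convention under which $\alpha_t$ is the probability that a token is \emph{unmasked} at time $t$ (this is the convention implicit in Eq.~(\ref{eq:alpha&sigma}) and in the appendix computations, e.g.\ Appendix~\ref{pf:ck_equivalence}), whereas a literal reading of Eq.~(\ref{eq:condp}) would flip the factor to $\frac{1-\alpha_t}{\alpha_t}$; the discrepancy is an inconsistency in the paper's notation, but your write-up should state explicitly which reading you adopt so the constant comes out as claimed. Second, the final step requires $p_1(z^{UM})>0$ (equivalently $p_t(z)>0$) for the conditional to be defined; this is implicitly assumed in the lemma and costs nothing, but deserves a sentence.
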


We now prove the main theorem:
\begin{proof}


Let $m(z)$ quantify the masked positions in $z$, assumed w.l.o.g. to occupy initial sequence positions. The key summation decomposes as:
\begin{align}
    \sum_{x:z\to x} \frac{p_t(x)}{p_t(z)}
    &= \sum_{i=1}^{m(z)} \sum_{\substack{x: z\to x\\ x^{i}\neq z^i=[\mathrm{M}] }} \frac{p_t(x)}{p_t(z)}\\
    &= \sum_{i=1}^{m(z)}\sum_{x^i\neq[\mathrm{M}]}\frac{\alpha_t}{1-\alpha_t}  p_0(x^i|z^{UM})\\
    &= \sum_{i=1}^{m(z)}\frac{\alpha_t}{1-\alpha_t} \sum_{x^i\neq[\mathrm{M}]} p_0(x^i|z^{UM})\\
    &= m(z)\frac{\alpha_t}{1-\alpha_t}.
\end{align}

Substituting the rate matrix from Eq.~(\ref{eq:score}), we therefore deduce
\begin{align}
    \mathcal{E}_k &= \mathbb{E}_{t,p_t(z)} \sum_{x:z\to x} \frac{1}{p_t(x)\dot{\gamma}_t} \left( \sigma_t\frac{p_t(x)}{p_t(z)} \right)^2\\
    &= \mathbb{E}_{t} \sum_{z}\sum_{x:z\to x} \frac{p_t(z)}{p_t(x)\dot{\gamma}_t} \sigma_t^2\frac{p_t^2(x)}{p_t^2(z)}\\
    &= \mathbb{E}_{t} \sum_{z}\sum_{x:z\to x} \frac{\sigma_t^2}{\dot{\gamma}_t}\frac{p_t(x)}{p_t(z)} \\
    &\overset{(1)}{=} \mathbb{E}_{t} \sum_{z}m(z)\frac{\alpha_t}{1-\alpha_t} \frac{\sigma_t^2}{\dot{\gamma}_t} \\
    &= \left( \sum_{z}m(z)\right) 
 \mathbb{E}_{t} \frac{\alpha_t}{1-\alpha_t} \frac{\sigma_t^2}{\dot{\gamma}_t}\\
 &\triangleq C_k\cdot\mathbb{E}_{t} \frac{\alpha_t}{1-\alpha_t} \frac{\sigma_t^2}{\dot{\gamma}_t}.
\end{align}

Here we define $C_k= \sum_{z}m(z)$. On the other hand, the conditional kinetic energy can also be break down as
\begin{align}  
    \mathcal{E}_c &= \mathbb{E}_{t,p_1(x_1),p_{t|1}(z|x_1)} \sum_{x:z\to x} \frac{1}{p_{t|1}(x|x_1)\dot{\gamma}_t} \left(\sigma_t\frac{p_{t|1}(x|x_1)}{p_{t|1}(z|x_1)}\right)^2\\
   &=\mathbb{E}_{t,p_1(x_1)} \sum_{z,x:z\to x\Rightarrow x_1} \frac{\sigma_t^2}{\dot{\gamma}_t}\frac{p_{t|1}(x|x_1)}{p_{t|1}(z|x_1)} \\
    &= \mathbb{E}_{t,p_1(x_1)} \sum_{z,x:z\to x\Rightarrow x_1}\frac{\alpha_t}{1-\alpha_t} \frac{\sigma_t^2}{\dot{\gamma}_t} \\
    &= \left(\sum_{x_1}\sum_{z,x:z\to x\Rightarrow x_1} p_1(x_1)\right)\mathbb{E}_{t} \frac{\alpha_t}{1-\alpha_t} \frac{\sigma_t^2}{\dot{\gamma}_t}\\
    &= n2^{n-1} \left(\sum_{x_1} p_1(x_1)\right)\mathbb{E}_{t} \frac{\alpha_t}{1-\alpha_t} \frac{\sigma_t^2}{\dot{\gamma}_t}\\
    &= n2^{n-1}\mathbb{E}_{t} \frac{\alpha_t}{1-\alpha_t} \frac{\sigma_t^2}{\dot{\gamma}_t}\\
    &\triangleq C_c \cdot\mathbb{E}_{t} \frac{\alpha_t}{1-\alpha_t} \frac{\sigma_t^2}{\dot{\gamma}_t}.
\end{align}

This expression of $\mathcal{E}_c$ is equivalent to the decomposition in Appendix~\ref{pf:decom} by plugging in the explicit form of $Q_{t|1}$. The proportionality constant $C_1 = C_k/C_c$ emerges from comparing both expressions, with $C_k,C_c$ depending solely on architectural parameters $n$ and $|\mathcal{D}|=d$. The minimizer equivalence follows directly from the strict positivity of scaling constants.

\end{proof}

\subsection{Proof of Theorem~\ref{thm:cg_equivalence}}
\label{pf:cg_equivalence}
\begin{appendixTheorem}{thm:cg_equivalence}[Conditional-geodesic equivalence in MDMs]
For any weight function $\gamma_t$ and MDM with mask schedule $\alpha_t$, the conditional and geodesic energies are proportional:
\begin{equation}
\mathcal{E}_c(\alpha_t, \gamma_t) = C_2 \mathcal{E}_g(\alpha_t, \gamma_t),
\end{equation}
where $C_2$ is a scalar depending only on the sequence length $n$. This implies that they share the same minimizers:
\begin{equation}
\arg\min_{\alpha_t} \mathcal{E}_c(\alpha_t, \gamma_t) = \arg\min_{\alpha_t} \mathcal{E}_g(\alpha_t, \gamma_t).
\end{equation}
\end{appendixTheorem}

\begin{proof}
The geodesic energy is inherently defined using the token-wise independent conditional flows in Eq.~(\ref{eq:condp}), therefore it admits straightforward decomposition along sequence dimensions. In MDM case where all tokens follows the same mask schedule $\alpha_t$, we further have
\begin{align}
    \mathcal{E}_g(p_{t|1}; \gamma_t) &= \sum_{i=1}^{n} \mathbb{E}_{t, p_1(x_1), p_{t|1}(z^i | x_1)} \frac{4}{\dot{\gamma}_t p_{t|1}(z^i | x_1)} \, \dot{y}_{t|1}(z^i | x_1)^2\\
    &= n\cdot \mathbb{E}_{t, p_1(x_1), p_{t|1}(z^i | x_1)} \frac{4}{\dot{\gamma}_t p_{t|1}(z^i | x_1)} \, \dot{y}_{t|1}(z^i | x_1)^2\\
    &\triangleq C_g\cdot \mathbb{E}_{t, p_1(x_1), p_{t|1}(z^i | x_1)} \frac{4}{\dot{\gamma}_t p_{t|1}(z^i | x_1)} \, \dot{y}_{t|1}(z^i | x_1)^2
\end{align}

On the other hand, through dimensional decomposition established in Appendix~\ref{pf:decom}, the conditional energy admits:
\begin{align}
    \mathcal{E}_c(p_{t|1}, Q_{t|1}; \gamma_t) &= 2^{n-1}\cdot \sum_{i=1}^{n} \mathbb{E}_{t, p_1(x_1),p_{t|1}(z^i|x_1)}   \sum_{x^i: x^i \neq z^i} \frac{1}{\dot{\gamma}_t p_{t|1}(x^i | x_1)} Q_{t|1}(z^i, x^i | x_1)^2\\
    &= C_c\cdot \mathbb{E}_{t, p_1(x_1),p_{t|1}(z^i|x_1)}   \sum_{x^i: x^i \neq z^i} \frac{1}{\dot{\gamma}_t p_{t|1}(x^i | x_1)} Q_{t|1}(z^i, x^i | x_1)^2,
\end{align}

Where we define $C_c=n2^{n-1}$ as in Appendix~\ref{pf:ck_equivalence}. Therefore, The equivalence proof reduces to $n=1$ analysis through notational relaxation, treating $z,x\in\mathcal{D}$ as individual tokens.

Leveraging the rate matrix expression from Lemma~\ref{lem:Q}, we have
\begin{align}
    \mathcal{E}_c(p_{t|1},Q_{t|1}; \gamma_t) &= \mathbb{E}_{t,p_1(x_1)} \frac{1}{\dot{\gamma_t}} \sum_{z,x:z\to x\Rightarrow x_1} \frac{p_{t|1}(z|x_1)}{p_{t|1}(x|x_1)}Q_{t|1}(z,x|x_1)^2\\
    &= \mathbb{E}_{t,p_1(x_1)} \frac{1}{\dot{\gamma_t}} \sum_{z=[\mathrm{M}],x=x_1} \frac{p_{t|1}(z|x_1)}{p_{t|1}(x|x_1)} \left( \frac{\sigma_t \alpha_t}{1-\alpha_t} \right)^2\\
    &= \mathbb{E}_{t,p_1(x_1)} \frac{1}{\dot{\gamma_t}} \sum_{z=[\mathrm{M}],x=x_1} \frac{1-\alpha_t}{\alpha_t} \left( \frac{\sigma_t \alpha_t}{1-\alpha_t} \right)^2\\
    &= \mathbb{E}_{t,p_1(x_1)} \frac{1}{\dot{\gamma_t}}\frac{\sigma_t^2 \alpha_t}{1-\alpha_t},
\end{align}

which is equivalent to the expression in Appendix~\ref{pf:ck_equivalence} by letting $n=1$. Applying the relationship $\dot{\alpha}_t = \alpha_t\sigma_t$ deduced from Eq.~(\ref{eq:alpha&sigma}), we get $\dot{\alpha}_t=\alpha_t\sigma_t$, therefore we further have
\begin{align}
    \mathcal{E}_c(p_{t|1},Q_{t|1}; \gamma_t) &= \mathbb{E}_{t,p_1(x_1)} \frac{1}{\dot{\gamma_t}}\frac{\dot{\alpha}_t^2}{\alpha_t(1-\alpha_t)}.
\end{align}

On the other hand, applying Eq.~(\ref{eq:condp}), $\mathcal{E}_g$ in $n=1$ case can be expressed as
\begin{align}
    \mathcal{E}_g(p_{t|1};\gamma_t) &= \mathbb{E}_{t,p_1(x_1)} \frac{4}{\dot{\gamma_t}}\sum_z \left( \frac{d}{dt} \sqrt{p_{t|1}(z|x_1)} \right)^2\\
    &= \mathbb{E}_{t,p_1(x_1)} \frac{1}{\dot{\gamma_t}}\sum_z \frac{\dot{p}_t(z|x_1)^2}{p_{t|1}(z|x_1)} \\
    &= \mathbb{E}_{t,p_1(x_1)} \frac{1}{\dot{\gamma_t}}\sum_z \frac{[\dot{\alpha}_t(\delta_{x_1}(z)-\delta_{\mathrm{m}}(z))]^2}{\alpha_t \delta_{x_1}(z) + (1-\alpha_t)\delta_{\mathrm{m}}(z)}\\
    &= \mathbb{E}_{t,p_1(x_1)} \frac{1}{\dot{\gamma_t}} \left( \sum_{z=x_1} \frac{\dot{\alpha}_t^2}{\alpha_t} + \sum_{z=[\mathrm{M}]} \frac{\dot{\alpha}_t^2}{(1-\alpha_t)}\right)\\
    &= \mathbb{E}_{t,p_1(x_1)} \frac{1}{\dot{\gamma_t}}\left( \frac{\dot{\alpha}_t^2}{\alpha_t} + \frac{\dot{\alpha}_t^2}{(1-\alpha_t)}\right).
\end{align}

Since
\begin{equation}
    \frac{\dot{\alpha}_t^2}{\alpha_t(1-\alpha_t)}=\frac{\dot{\alpha}_t^2}{\alpha_t} + \frac{\dot{\alpha}_t^2}{(1-\alpha_t)},
\end{equation}

the functional equivalence in scalar case is established, extended to $n$-dimensions through the dimensional scaling factor $C_2 = C_c/C_g = 2^{n-1}$. The minimizer equivalence follows from strict positivity of scaling relations.
\end{proof}

\subsection{Proof of Example~\ref{ex:1}}
\label{pf:example1}

\begin{appendixExample}{ex:1}
When $n=1$, the kinetic, conditional kinetic, and geodesic energies all reduce to:
\begin{equation}
\mathcal{E}(\alpha_t, \gamma_t) = \int_0^1 \frac{1}{\dot{\gamma}_t} \cdot \frac{\dot{\alpha}_t^2}{\alpha_t(1 - \alpha_t)} \, dt.
\end{equation}
\end{appendixExample}

\begin{proof}
In the $n=1$ case, we have
\begin{align}
    C_k &= \sum_{z} m(z) = m(([\mathrm{M}])) = 1;\\
    C_c &= n2^{n-1} = 1;\\
    C_g &= n =1.
\end{align}

Therefore, we have $C_{1} = C_2 = 1$ and the three energy functions share the same form Eq.~(\ref{eq:energy_n=1}).
\end{proof}

\subsection{Proof of Lemma~\ref{lem:geodesic_min}}
\label{pf:geodesic_min}

\begin{appendixLemma}{lem:geodesic_min}
Under Condition~\ref{cond:optimal_schedule}, the schedule $\alpha_t^\star$ minimizes the geodesic energy.
\end{appendixLemma}

Since the geodesic energy (Definition~\ref{def:Eg}) is defined by summing the token-wise conditional probability flow, we only need to focus on the one-dimensional case. Therefore, we abuse notation slightly by regarding $z\in \mathcal{D}$:
\begin{align}
    \mathcal{E}_g(p_{t|1}; \gamma_t) &= \mathbb{E}_{t, p_1(x_1), p_{t|1}(z | x_1)} \frac{4}{\dot{\gamma}_t p_{t|1}(z | x_1)} \, \dot{y}_{t|1}(z | x_1)^2\\
    &= \mathbb{E}_{t, p_1(x_1)} \sum_{z} \frac{4}{\dot{\gamma}_t} \, \dot{y}_{t|1}(z | x_1)^2\\
    &= \mathbb{E}_{t, p_1(x_1)} \frac{4}{\dot{\gamma}_t} ||y_{t|1}||^2.
\end{align}

Here $y_{t|1}$ denotes the $d$-dimensional embedding vector induced by the embedding Eq.~(\ref{eq:embedding}). Therefore, the minimizing problem becomes:
\begin{align}
    \arg\min_{y(t)} \int_0^1 \frac{||\dot{y}(t)||^2}{\dot{\gamma}(t)}dt\\
    \text{s.t.}\quad ||y(t)||=1, \forall t.
\end{align}

For baseline case $\gamma_t=t$, we construct the augmented functional with Lagrange multiplier $\lambda(t)$:
\begin{align}
    \mathcal{L}[y] = \int_{0}^1 \left(||\dot{y}(t)||^2 + \lambda(t)(||y(t)||^2-1) \right)dt.
\end{align}

The Euler-Lagrange formalism yields:
\begin{align}
    \frac{\partial \mathcal{L}}{\partial y} - \frac{d}{dt}\frac{\partial \mathcal{L}}{\partial \dot{y}} = 0,
\end{align}

from which we derive the critical differential relationship:
\begin{align}
\label{eq:ddoty}
    \ddot{y}=\lambda y.
\end{align}

from $||y||^2=1$ we have $y\ddot{y}+||\dot{y}||^2=0$, therefore we have
\begin{align}
    -||\dot{y}||^2 =y\ddot{y}=\lambda||y||^2 = \lambda.
\end{align}

Plugging this expression of $\lambda$ into Eq.~(\ref{eq:ddoty}), we get
\begin{align}
    \ddot{y}=-||\dot{y}||^2y,
\end{align}

which corresponds to the uniform circular motion with its acceleration pointing towards the center of the sphere. Therefore, the route follows the great circle connecting $y_0$ and $y_1$. In MDM case where $y_0 = \delta_{[\mathrm{M}]}$ and $y_1$ represents clean data without mask token, we have $y_0\cdot y_1=0$. Therefore, the angle between $y_0$ and $y_1$ is $\pi/2$ and the curve shares the following simple form:
\begin{align}
    y(t) = y_0\cos(\frac{\pi}{2}t)+ y_1\sin(\frac{\pi}{2}t).
\end{align}

Now we generalize to arbitrary $\gamma_t$ schedules through temporal reparameterization:
\begin{align}
    \arg\min_{y(t)} \int_0^1 \frac{||\dot{y}(t)||^2}{\dot{\gamma}(t)}dt &= \arg\min_{y(\gamma(t))} \int_0^1 \frac{||\dot{y}(\gamma(t))||^2}{\dot{\gamma}(t)}dt\\
    &= \arg\min_{y(\gamma(t))} \int_0^1 ||\frac{dy}{d\gamma}||^2 \dot{\gamma}_t dt\\
    &= \arg\min_{y(\gamma)} \int_0^1 ||\frac{dy}{d\gamma}||^2 d\gamma
\end{align}

Therefore in ordinary $\gamma_t$ cases, the optimized route is the geodesic curve rescheduled by interpolation schedule $\gamma_t$:
\begin{align}
\label{eq:curve}
    y(t) = y_0\cos(\frac{\pi}{2}\gamma_t)+ y_1\sin(\frac{\pi}{2}\gamma_t).
\end{align}

By squaring both sides of Eq.~(\ref{eq:curve}), we further recover MDM's conditional probability flow:
\begin{align}
    p_{t|1}(t) &\overset{(1)}{=} p_0\cos^2(\frac{\pi}{2}\gamma_t)+ p_1\sin^2(\frac{\pi}{2}\gamma_t)\\
    &= \alpha_t^\star p_1 + (1-\alpha_t^\star)p_0,
\end{align}

where $(1)$ leverages orthogonality $p_0\cdot p_1=0$. Therefore, we proved that MDM with schedule schedule $\alpha_t^\star$ generates the minimal-length curve as well as minimal-energy conditional probability path, validating and generalizing the conclusion in~\cite{jo2025} from a energy perspective.

\subsection{Optimality in the Discretized Case}
\label{pf:discretize}

Here we show that under Riemann discretization, MDM trajectories still minimizes a corresponding discrete energy functional that converges to the continuous formulation as the number of steps $N \to \infty$. Recall that in Appendix~\ref{pf:geodesic_min} we proved that $\mathcal{E}_g$ can be equivalently expressed as:

$$\mathcal{E}_g = \int_{0}^1 \frac{||\dot{y}_t||^2}{\dot{\gamma_t}} dt.$$

Therefore, we can discretize $\mathcal{E}_g$ as:
$$\mathcal{E}_g^N = \sum_{n=0}^{N-1} \frac{||y_{n+1}-y_{n}||^2 /(\Delta t)^2}{|\gamma_{n+1}-\gamma_{n}| /\Delta t} \Delta t = \sum_{n=0}^{N-1} \frac{||y_{n+1}-y_{n}||^2 }{|\gamma_{n+1}-\gamma_{n}|},$$

where $\Delta t=1/N.$ For simplicity, consider the special case where $\gamma_t = t$. Then we have $|\gamma_{n+1}-\gamma_{n}| = \frac{1}{N}$. We now demonstrate that the $N$ equidistant points along the geodesic still minimize this functional. First, we note that the squared chord length on the unit sphere satisfies $||y_{n+1}-y_{n}||^2 = 2(1-\cos(\theta_n))$, where $\theta_n$ denotes the angle between $y_n$ and $y_{n+1}$. This transforms the minimization problem into finding $N-1$ intermediate points along the great circle arc between $y_0$ and $y_1$ that minimize $\sum_n 2(1-\cos(\theta_n))$.

From the properties of spherical geodesics, we know that placing all intermediate points along the geodesic path enforces the constraint $\sum\theta_n = \frac{\pi}{2}$. Given that $1-\cos(x)$ is convex over $(0, \pi/2)$, Jensen's inequality establishes:

$$\sum_n 2(1-\cos(\theta_n)) \geq N\cdot 2(1-\cos(\frac{\pi}{2N})),$$

with equality achieved when points are uniformly distributed along the geodesic. Any deviation from the geodesic path would further increase the cumulative angular separation $\sum\theta_n$ beyond $\frac{\pi}{2}$, consequently raising the total energy. This proof extends to arbitrary $\gamma_t$ schedules through proportional point distribution based on $|\gamma_{n+1}-\gamma_{n}|$.

\subsection{Proof of Theorem~\ref{thm:tri}}
\label{pf:tri}

\begin{appendixTheorem}{thm:tri}[Kinetic energy minimization]
Under Condition~\ref{cond:optimal_schedule}, the MDM schedule $\alpha_t^\star$ simultaneously minimizes all three energy functionals.
\end{appendixTheorem}

\begin{proof}

Theorem~\ref{thm:ck_equivalence}, Theorem~\ref{thm:cg_equivalence}, and Lemma~\ref{lem:geodesic_min} collectively establish that $\alpha_t^\star$ optimizes the three functionals over all mask schedules $\alpha_t$. This conclusion forms the theoretical foundation for our data-driven schedule tuning framework presented in Section~\ref{sub:Energy-Inspired}.

However, $\mathcal{E}_k$ and $\mathcal{E}_c$ are defined on both probability flows and rate matrices. Therefore, it remains to be further proved that when the probability flow uniquely induced by $\alpha_t$ is fixed, the conditional rate matrix, which is also determined by $\alpha_t$, still minimizes the energy functionals, i.e.
\begin{align}
    Q_{t|1}(\alpha_t^{\star}) \in \arg\min_{Q_{t|1}}\mathcal{E}_c(p_{t|1},Q_{t|1};\gamma_t),
\end{align}

where $Q_{t|1}(\alpha_t^{\star})$ refer to conditional rate matrix in MDM case derived in Appendix~\ref{pf:condQ} under Condition~\ref{cond:optimal_schedule}. We adapt the following key result from prior analysis ~\cite{DiscretePaths}of the $\gamma_t=t$ case:
\begin{lemma}
\label{thm:velocity}
    The following conditional rate matrix minimize the conditional kinetic energy $\mathcal{E}_c$ when the probability flow $p_t(x)$ and weight function $\gamma_t=t$ is fixed, i.e.
    \begin{align}
        \arg\min_{Q_{t|1}} \mathcal{E}_c(p_{t|1},Q_{t|1};\gamma_t=t) = Q_{t|1}^\star(z,x|x_1) = \frac{\dot{\alpha}_t}{1-{\alpha}_t} (\delta_{x_1}(x) - \delta_{z}(x)),
    \end{align}
    where the $\arg\min$ is taken over any possible $Q_{t|1}$ that generates the fixed probability flow.
\end{lemma}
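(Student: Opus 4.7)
My approach is to treat the minimization as a convex quadratic program at each fixed time $t$ and conditioning $x_{1}$, over the affine set of rate matrices that generate the prescribed flow. I would first reparameterize by fluxes $J(z,x\mid x_{1}):=Q_{t|1}(z,x\mid x_{1})\,p_{t|1}(z\mid x_{1})$, so that the objective becomes the sum of squares $\sum_{z\neq x}J(z,x\mid x_{1})^{2}/(p_{t|1}(z\mid x_{1})p_{t|1}(x\mid x_{1}))$, the non-negativity on off-diagonal rates becomes $J\ge 0$, and the Kolmogorov forward equation becomes the linear equality $\dot{p}_{t|1}(x\mid x_{1})=\sum_{z}[J(z,x\mid x_{1})-J(x,z\mid x_{1})]$. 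Introducing Lagrange multipliers $\phi_{t}(x\mid x_{1})$ for the continuity equation and KKT multipliers for the sign constraints, stationarity plus complementary slackness forces the gradient-flow form $J(z,x\mid x_{1})=\tfrac{1}{2}p_{t|1}(z\mid x_{1})p_{t|1}(x\mid x_{1})(\phi_{t}(x\mid x_{1})-\phi_{t}(z\mid x_{1}))_{+}$, so optimal flux can move only in the direction of strictly increasing $\phi_{t}$.

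To close the system I would exploit the geometry of the prescribed flow: mass migrates toward the target $x_{1}$, which motivates the ansatz $\phi_{t}(x\mid x_{1})=c(t)\,\delta_{x_{1}}(x)$ concentrated on $x_{1}$. Substituting this back into the stationarity condition annihilates every off-diagonal flux except those pointing into $x_{1}$, and matching the resulting continuity equation at $x=x_{1}$ to the explicit time derivative of $p_{t|1}(x_{1}\mid x_{1})$ pins down $c(t)$ and recovers precisely $Q_{t|1}^{\star}(z,x\mid x_{1})=\frac{\dot{\alpha}_{t}}{1-\alpha_{t}}(\delta_{x_{1}}(x)-\delta_{z}(x))$. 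Strict convexity of the QP on the feasible affine set then promotes this single KKT point to \emph{the} unique global minimizer among all rate matrices generating the flow, and the same minimizer works uniformly in $t$ because the optimization decouples across time.

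The hard part is verifying the single-support ansatz is globally consistent, i.e.\ that the continuity equation holds at every state $x$, not merely at the target $x_{1}$. This is where the specific structure of the flow enters: the lemma is inherently a per-token statement (as enabled by the token-wise decomposition in Appendix~\ref{pf:decom}), so the effective state space reduces to $\{[\mathrm{M}],x_{1}^{i}\}$ and continuity at the mask state follows automatically from continuity at $x_{1}^{i}$ via the conservation identity $\sum_{x}p_{t|1}(x\mid x_{1})=1$. Without this per-token reduction, a richer potential $\phi_{t}$ supported on more than $\{x_{1}\}$ would generically be required, and the clean teleport form would break; recognizing that the flow's factorized structure is precisely what lets the rank-one potential work is the conceptual crux of the proof. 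Once this structural check passes, strict convexity finishes the argument.
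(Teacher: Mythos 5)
Your proof is essentially correct, but it differs from the paper in an important structural way: the paper never proves this lemma itself — it is imported from the discrete flow matching literature (\cite{DiscretePaths}) as a known result for the $\gamma_t=t$ case, and the surrounding text in Appendix~\ref{pf:tri} only verifies that the MDM conditional rate matrix coincides with that kinetic-optimal velocity and then extends it to general $\gamma_t$. What you supply is the self-contained derivation behind the citation: the flux change of variables $J(z,x\mid x_1)=Q_{t|1}(z,x\mid x_1)\,p_{t|1}(z\mid x_1)$ turns the problem, at each fixed $t$ and $x_1$, into a convex QP over the polyhedron cut out by the Kolmogorov forward equation and $J\ge 0$; stationarity plus complementary slackness gives $J=\tfrac12 p_{t|1}(z\mid x_1)p_{t|1}(x\mid x_1)\bigl(\phi_t(x\mid x_1)-\phi_t(z\mid x_1)\bigr)_+$; and with the per-token reduction of Appendix~\ref{pf:decom} the support is $\{[\mathrm{M}],x_1^i\}$, so the rank-one potential $\phi_t=c(t)\,\delta_{x_1}$ satisfies all KKT conditions, the continuity equation at the data state pins down $c(t)$, and one recovers exactly $Q^\star_{t|1}=\frac{\dot\alpha_t}{1-\alpha_t}(\delta_{x_1}-\delta_z)$; convexity then upgrades this KKT point to a global minimizer. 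This buys the paper something it currently lacks (a proof rather than a pointer), and your observation that the single-support ansatz is licensed precisely by the token-wise decomposition is the right diagnosis. Two caveats you should state explicitly to make the argument airtight: (i) you need $\dot\alpha_t\ge 0$ (the schedule is strictly increasing by assumption) so that $c(t)\ge 0$, which is what makes the one-sided flux consistent with dual feasibility and complementary slackness; and (ii) uniqueness holds only at the level of fluxes on the support of $p_{t|1}(\cdot\mid x_1)$ — rows of $Q_{t|1}$ leaving zero-probability states carry zero weight in $\mathcal{E}_c$ and are unconstrained, while transitions \emph{into} zero-probability states must be ruled out by reading the $1/p_{t|1}(x\mid x_1)$ weight as $+\infty$ (equivalently, restricting the QP to the support), which is the formal justification for your two-state reduction. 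Neither caveat threatens the conclusion.
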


Here $\alpha_t$ is defined using the conditional probability flow under Discrete Flow Matching (DFM) settings, resembling the MDM case (see Eq.~(\ref{eq:condp})) by conditioning on both ends ($t=0$ and $t=1$) of the flow. In the case when $p_0(z)=\delta_{[\mathrm{M}]}(z)$, it coincides with the mask schedule defined in our work. The notation is also slightly abused by regarding $z,x \in \mathcal{D}$ justified by the decomposition of conditional kinetic energy in Appendix~\ref{pf:decom}. 

We now demonstrate a non-trivial result: MDM under Condition~\ref{cond:optimal_schedule} inherently achieves optimal conditional rate matrices for \emph{arbitrary} $\gamma_t$, despite structural constraints. 

First, consider $\gamma_t=t$ where Appendix~\ref{pf:condQ} yields the conditional rate matrix. We demonstrate that MDM achieves the optimal velocity specified by the RHS of Eq.~(\ref{thm:velocity}). For $n=1$, the conditional rate matrix simplifies to:
\begin{align}
    Q_{t|1}(z,x|x_1)= \begin{cases}
        \frac{\alpha_t\sigma_t}{1-\alpha_t} & z=[M], x=x_1\\
        -\frac{\alpha_t\sigma_t}{1-\alpha_t} & z=[M], x=z\\
        0 & \text{otherwise}
    \end{cases} &\overset{(1)}{=} \begin{cases}
        \frac{\dot{\alpha}_t}{1-\alpha_t} & z=[M], x=x_1\\
        -\frac{\dot{\alpha}_t}{1-\alpha_t} & z=[M], x=z\\
        0 & \text{otherwise}
    \end{cases},
\end{align}

where $(1)$ follows from the identity $\dot{\alpha}_t=\alpha_t\sigma_t$ established in Eq.~(\ref{eq:alpha&sigma}), thus completing the $\gamma_t=t$ case. 

For general $\gamma_t$, we reformulate the conditional kinetic energy using Definition~\ref{def:Ec}:
\begin{align}
    \mathcal{E}_c(Q_{t|1}; \gamma_t) = \mathbb{E}_{t} \frac{1}{\dot{\gamma}_t}\sum_{z,x,x_1\in A} f(z,x,x_1) Q_{t|1}(z, x | x_1)(t)^2,
\end{align}

where $f$ and $A$ are independent of $Q$. We first establish the following key lemma:
\begin{lemma}
    If 
    \begin{align}
        Q^{\star}(t) \in   \arg\min_{Q_{t|1}}\mathbb{E}_{t}\sum_{z,x,x_1\in A}f(z,x,x_1)Q_{t|1}(t)^2,
    \end{align}
    then we have
    \begin{align}
    \label{eq:Qstargamma}
        Q^{\star}(\gamma_t)\dot{\gamma}_t \in \arg\min_{Q_{t|1}}\mathbb{E}_{t}\frac{1}{\dot{\gamma}_t}\sum_{z,x,x_1\in A}f(z,x,x_1)Q_{t|1}(t)^2,
    \end{align}
    where the $\arg\min$ is taken over any possible $Q_{t|1}$ that generates the fixed probability flow.
\end{lemma}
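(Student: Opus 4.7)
The plan is to prove this via a time reparameterization $s = \gamma_t$, which sends the weighted problem on $[0,1]$ to the unweighted problem on $[0,1]$. Since $\gamma_t$ is smooth, strictly increasing, and satisfies $\gamma_0=0$, $\gamma_1=1$, it defines a $C^1$ bijection of $[0,1]$ with $\dot{\gamma}_t > 0$, so the substitution and its inverse are well-behaved.

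First I would introduce the change of variable $Q_{t|1}(t) = \tilde{Q}(\gamma_t)\,\dot{\gamma}_t$, equivalently $\tilde{Q}(s) = Q_{t|1}(\gamma^{-1}(s))/\dot{\gamma}_{\gamma^{-1}(s)}$. Plugging this into the weighted functional and using $ds = \dot{\gamma}_t\,dt$ gives
\begin{align*}
\mathbb{E}_t\,\frac{1}{\dot{\gamma}_t}\sum_{z,x,x_1\in A} f(z,x,x_1)\,Q_{t|1}(t)^2
&= \int_0^1 \frac{1}{\dot{\gamma}_t}\sum f\,\tilde{Q}(\gamma_t)^2 \dot{\gamma}_t^2\,dt \\
&= \int_0^1 \sum f\,\tilde{Q}(s)^2\,ds,
\end{align*}
which is exactly the unweighted functional in $\tilde{Q}$.

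Second, I would verify that the flow-generating constraint transforms consistently under this reparameterization. If $Q_{t|1}$ generates a flow $p_{t|1}$ via $\dot{p}_{t|1} = \sum_z p_{t|1}(z)\,Q_{t|1}(z,\cdot\mid x_1)$, then defining $\tilde{p}_{s|1} := p_{\gamma^{-1}(s)|1}$ and applying the chain rule gives $\partial_s \tilde{p}_{s|1} = \dot{p}_{t|1}/\dot{\gamma}_t = \sum_z \tilde{p}_{s|1}(z)\,\tilde{Q}(s)(z,\cdot\mid x_1)$, so $\tilde{Q}$ generates $\tilde{p}_{s|1}$. This establishes a bijection between admissible $Q_{t|1}$ for the weighted problem (with flow $p_{t|1}$) and admissible $\tilde{Q}$ for the unweighted problem (with reparameterized flow $\tilde{p}_{s|1}$), under which the two functionals are equal.

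Combining the two steps, the hypothesis that $Q^{\star}$ attains the infimum for the unweighted problem over admissible $\tilde{Q}$ gives $\int \sum f\,\tilde{Q}(s)^2\,ds \geq \int \sum f\,Q^{\star}(s)^2\,ds$ for every admissible $\tilde{Q}$, with equality at $\tilde{Q} = Q^{\star}$; unwinding the substitution yields Eq.~(\ref{eq:Qstargamma}) with the optimum attained at $Q_{t|1}(t) = Q^{\star}(\gamma_t)\dot{\gamma}_t$. The main obstacle is the constraint bookkeeping: the phrase ``fixed probability flow'' must be understood as \emph{reparameterizations of each other} between the two problems (they cannot literally coincide, since rescaling $Q$ by $\dot{\gamma}_t$ rescales the induced flow's time derivative), and one must show the bijection between constraint sets is both well-defined and exhaustive before applying minimality of $Q^{\star}$.
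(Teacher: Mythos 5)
Your proposal is correct and takes essentially the same route as the paper: both arguments reduce the weighted problem to the unweighted one via the time reparameterization $s=\gamma_t$, under which the rate rescales as $Q_{t|1}(t)=\tilde{Q}(\gamma_t)\dot{\gamma}_t$ and the factor $1/\dot{\gamma}_t$ is absorbed by $ds=\dot{\gamma}_t\,dt$. The paper packages this substitution through an antiderivative $W$ with $\dot{W}_t=Q_{t|1}(t)$ and reparameterizes the path as $W_{\gamma(t)}$, whereas you substitute directly on $Q$ and, usefully, spell out the constraint bookkeeping (the admissible sets of the two problems being in bijection via $\gamma$-reparameterization of the fixed flow), a point the paper's $\arg\min$ manipulations leave implicit.
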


Proof of the lemma follows via the substitution $\dot{W}_t=Q_{t|1}(t)$. Let
\begin{align}
    W^{\star}(t) \in   \arg\min_{W(t)}\mathbb{E}_{t}\sum_{z,x,x_1\in A}f(z,x,x_1)\dot{W}(t)^2,
\end{align}

then $\dot{W}^{\star}(t) = Q^\star(t)$. Therefore, we have

\begin{align}
&\quad\  \arg\min_{Q_{t|1}}\mathbb{E}_{t}\frac{1}{\dot{\gamma}_t}\sum_{z,x,x_1\in A}f(z,x,x_1)Q_{t|1}(t)^2\\
&=\frac{d}{dt}\left[\arg\min_{W_{t}}\mathbb{E}_{t}\frac{1}{\dot{\gamma}_t}\sum_{z,x,x_1\in A}f(z,x,x_1)\dot{W}_t^2\right]\\
&= \frac{d}{dt}\left[\arg\min_{W_{\gamma(t)}}\mathbb{E}_{t}\frac{1}{\dot{\gamma}_t}\sum_{z,x,x_1\in A}f(z,x,x_1)(\frac{dW_{\gamma(t)}}{dt})^2\right]\\
&= \frac{d}{dt}\left[\arg\min_{W_{\gamma(t)}}\mathbb{E}_{t}\dot{\gamma}_t\sum_{z,x,x_1\in A}f(z,x,x_1)(\frac{dW_{\gamma(t)}/dt}{d\gamma(t)/dt})^2\right]\\
&= \frac{d}{dt}\left[\arg\min_{W_{\gamma}}\mathbb{E}_{\gamma}\sum_{z,x,x_1\in A}f(z,x,x_1)(\frac{dW_{\gamma}}{d\gamma})^2\right]\\
&\ni \frac{d}{dt}\left[W^{\star}(\gamma)\right] = Q^\star(\gamma_t)\dot{\gamma}_t,
\end{align}

proving this lemma. Given MDM's optimality under $\gamma_t=t$, i.e.
\begin{align}
Q^{\star}(t) &= Q_{t|1}(\alpha_t^\star)\bigg|_{\gamma_t=t}=\frac{\dot{\alpha}_t}{1-\alpha_t}\bigg|_{\gamma_t=t} = \frac{\pi\sin(\frac{\pi}{2}t)\cos(\frac{\pi}{2}t)}{\cos^2(\frac{\pi}{2}t)} = \pi \tan(\frac{\pi}{2}t),
\end{align}

it remains to prove that the conditional rate matrix in MDM in general $\gamma_t$ cases satisfies the LHS of Eq.~(\ref{eq:Qstargamma}). Since we have
\begin{align}
Q_{t|1}(\alpha_t^\star) &= \frac{\dot{\alpha}_t}{1-\alpha_t}\bigg|_{\gamma_t} =  \frac{\pi\sin(\frac{\pi}{2}\gamma_t)\cos(\frac{\pi}{2}\gamma_t)\dot{\gamma}_t}{\cos^2(\frac{\pi}{2}\gamma_t)} = \pi \tan(\frac{\pi}{2}\gamma_t)\dot{\gamma}_t,
\end{align}

thus $Q_{t|1}(\alpha_t^\star) = Q^{\star}(\gamma_t)\dot{\gamma}_t$ and MDM's intrinsic optimization across arbitrary $\gamma_t$ is obtained. Remarkably, this result transcends geodesic energy $\mathcal{E}_g$ (defined solely through $p_{t|1}$), demonstrating MDM's dual optimization of both probability flows and sampling matrices despite structural constraints.
\end{proof}

\subsection{Proof of Proposition~\ref{prop:beta}}
\label{pf:beta}

\begin{appendixProposition}{prop:beta}
Linear and squared cosine schedules correspond to specific beta parameterizations:
\begin{align}
\alpha_t = t &\quad\Leftrightarrow\quad \gamma_t = \text{CDF}_{\mathcal{B}(0.5, 0.5)}(t), \\
\alpha_t = \sin^2\left(\frac{\pi}{2} t\right) &\quad\Leftrightarrow\quad \gamma_t = t = \text{CDF}_{\mathcal{B}(1, 1)}(t).
\end{align}
\end{appendixProposition}

\begin{proof}
The probability density function (PDF) of the Beta distribution $\mathcal{B}(a,b)$ is defined as:
    \begin{equation}
        p(x;a,b) = \frac{x^{a-1}(1-x)^{b-1}}{B(a,b)},
    \end{equation}
where $B(a,b)=\Gamma(a)\Gamma(b)/\Gamma(a+b)$ is the normalizing constant.

For the case $a=b=0.5$:
\begin{align}
    B(0.5,0.5)= \frac{\Gamma(0.5)\Gamma(0.5)}{\Gamma(1)} = \pi.
\end{align}

The cumulative distribution function (CDF) is therefore given by:
\begin{align}
    \text{CDF}_{\mathcal{B}(0.5,0.5)}(t) &= \int_{0}^t \frac{1}{\pi\sqrt{t(1-t)}} dt\\
    &\overset{(1)}{=} \int_0^{\arcsin \sqrt{t}} \frac{1}{\pi\sin\theta\cos\theta} (2\sin\theta\cos\theta d\theta)\\
    &= \frac{2}{\pi} \arcsin \sqrt{t}.
\end{align}

where step (1) employs the trigonometric substitution $x=\sin^2\theta$. Therefore, when $\gamma_t = \text{CDF}_{\mathcal{B}(0.5,0.5)}(t)$, we have
\begin{equation}
    \alpha_t = \sin^2(\frac{\pi}{2}\gamma_t) = t.
\end{equation}

For the case $a=b=1$:
\begin{align}
    B(1,1)= \frac{\Gamma(1)\Gamma(1)}{\Gamma(2)} = 1.
\end{align}

Therefore the CDF simplifies to
\begin{align}
    \text{CDF}_{\mathcal{B}(1,1)}(t) &= \int_{0}^x 1dt = t,
\end{align}

inducing $\alpha_t=\sin^2(\frac{\pi}{2}t)$. This completes the proof.
\end{proof}

\section{Experimental Details}
\subsection{Details of Beta Parameter Tuning}
\label{app:tuning}

As stated in Section~\ref{sub:Energy-Inspired}, our theoretical analysis suggests that different downstream tasks may require generated text to possess specific intrinsic structures, which in turn necessitates the generation process to emphasize particular temporal phases. These temporal preferences are captured through different $\gamma_t$ schedules, which induce corresponding optimal $\alpha_t$ through Condition~\ref{cond:optimal_schedule}. 

Therefore, we hypothesize that the schedule preferences are mostly inherent to task nature rather than data specifics. To validate this hypothesis, we conducted the following experiments demonstrating that \textbf{randomly chosen small subsets (50-150 instances) of test data suffice for reliable schedule selection.} Specifically, we compare schedule performance between small test subsets and full evaluations in Table~\ref{tab:beta_performance}.

\begin{table*}[t!]
\centering
\caption{Schedule performance between small test subsets and full evaluations under different Beta-CDF parameters.}
\label{tab:beta_performance}
\begin{tabular}{lcccc}
\toprule
Beta-CDF Parameters & (0.5,0.5) & (1,1) & (0.9,0.3) & (0.3,0.9) \\
\midrule
\textbf{Task: GSM8K ($\uparrow$)} & & & & \\
\textit{(length=128, steps=32)} & & & & \\
\midrule
Random subset 1 ($n$=132)& \textbf{44.70} & 43.94 & 31.06 & 0.00 \\
Random subset 2 ($n$=132) & \textbf{46.97} & 41.67 & 40.91 & 0.00 \\
Full test set ($n$=1319) & \textbf{38.06} & 34.80 & 29.04 & 0.08 \\
\midrule
\textbf{Task: HumanEval ($\uparrow$)} & & & & \\
\textit{(length=256, steps=64)} & & & & \\
\midrule
Random subset 1 ($n$=82) & 8.54 & 20.73 & \textbf{26.83} & 1.22 \\
Random subset 2 ($n$=82) & 18.29 & 24.39 & \textbf{30.49} & 2.44 \\
Full test set ($n$=164) & 11.59 & 22.56 & \textbf{24.39} & 1.83 \\
\bottomrule
\end{tabular}
\end{table*}

While HumanEval evaluations exhibit greater variance due to smaller test populations (n=164 total), the relative performance rankings remain mostly consistent across subsets - a critical indicator of our method's robustness. This empirical validation confirms that schedule preferences are very probably induced by intrinsic task attributes rather than specific data instances. 

Therefore in practice, we recommend conducting initial grid searches using small random test data subsets (about 50~150 instances) across parameters $a,b \in \{0.1, 0.2, ..., 1.0\}$ to find a set of well-performed schedules. After the coarse search, we can perform finer-grained selection on shortlisted candidates using larger data subsets. This approach achieves comprehensive parameter space exploration while maintaining computational feasibility.

\subsection{Standard Benchmarks and Evaluation Settings}
\label{app:exp_metric}

In this section, we briefly introduce the evaluation benchmarks and describe the experimental details.

Building upon established practices in LLM evaluation~\cite{chu2024qwen2,qwen2.5,llada}, we evaluate performance across key dimensions including: general ability (BBH~\cite{metric:bbh}), mathematics (GSM8K~\cite{metric:gsm8k}, Hendrycks MATH~\cite{metric:math}, Minerva MATH~\cite{lewkowycz2022solving}), and code generation (MBPP~\cite{metric:mbpp}, HumanEval~\cite{metric:humaneval}). Evaluation follows the conditional generation paradigm, where models produce completions given task prompts, with performance quantified through exact match or other domain-specific evaluation metrics.

Our implementation leverages the open-source pretrained weights and evaluation toolkit from LLaDA~\cite{llada}, modifying only the mask schedule that governs the iterative unmasking process. The mask schedule affects the number of tokens unmasked at each step, with certain schedules permitting zero-token unmasking during the process. Therefore, generation quality discrepancies occur even when sampling steps are set as the sequence length. All experiments can be efficiently conducted on a single GPU.

\subsection{Additional Results and Raw Data}
\label{app:raw}

Fig.~\ref{fig:mainexpnotgood} shows the result of our main experiments on benchmark BBH~\cite{metric:bbh} and GSM8K~\cite{metric:gsm8k}, where our beta-parameterized schedules exhibit comparable yet not better performance than the linear schedule.

Tab.~\ref{tab:mbpp}-\ref{tab:minerva_math} shows the raw data of all our main experiments. We highlight entries matching or exceeding the highest mean within statistical variance ($\pm$1std) in bold.

\begin{figure}[t]
    \centering
    \begin{minipage}[b]{0.48\textwidth}
        \includegraphics[width=\textwidth]{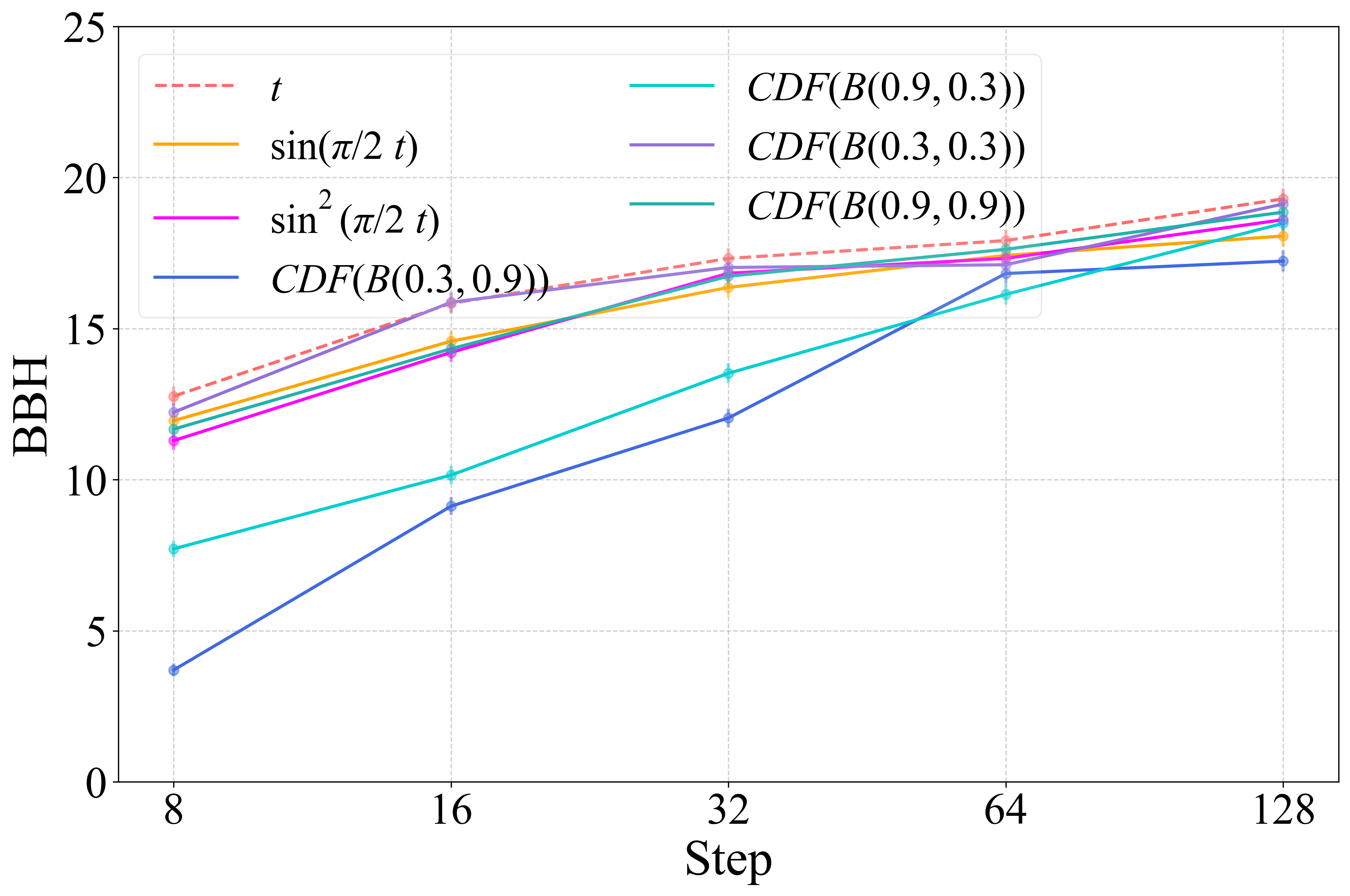}
    \end{minipage}
    \hfill
    \begin{minipage}[b]{0.48\textwidth}
        \includegraphics[width=\textwidth]{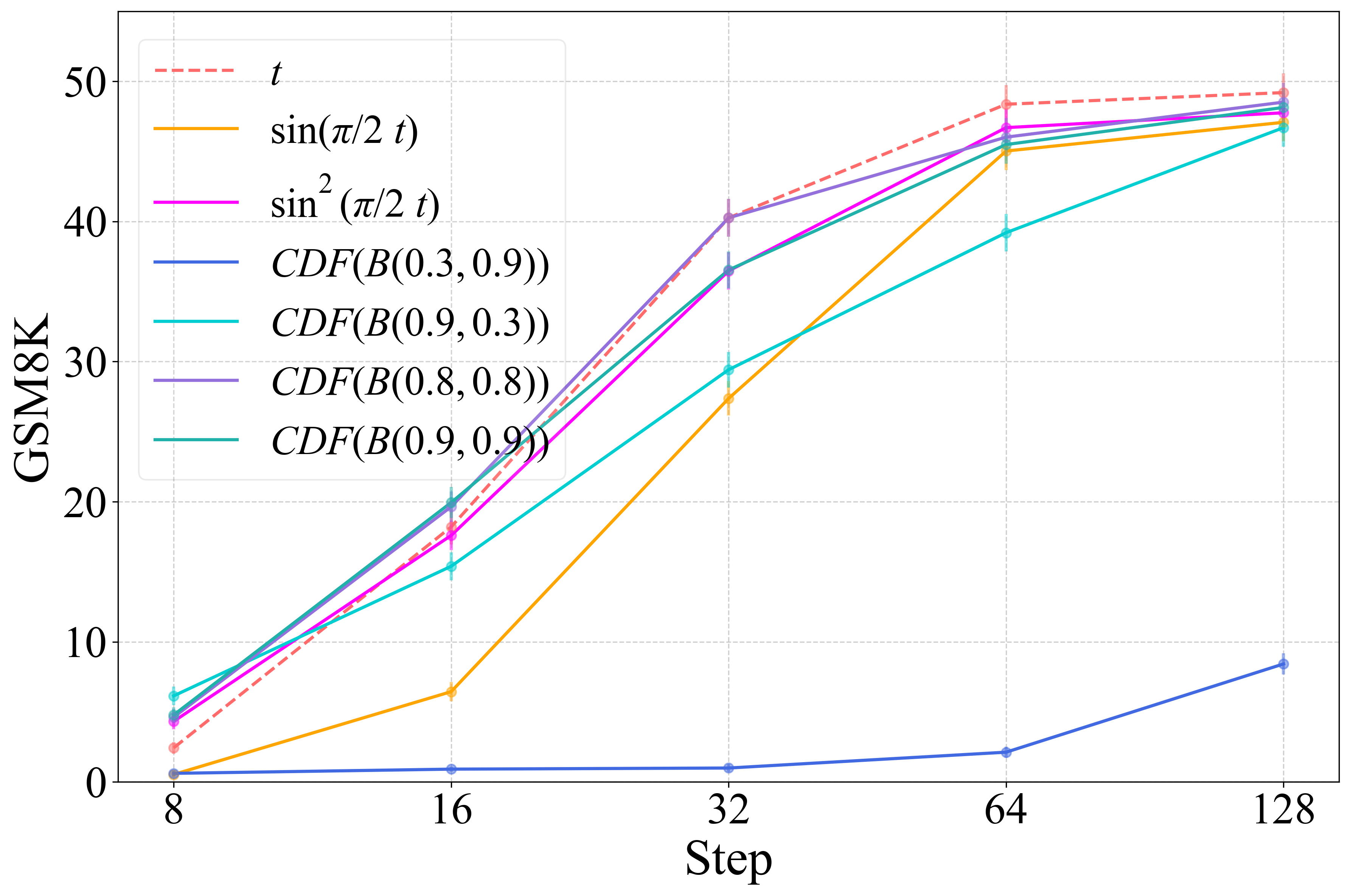}
    \end{minipage}

    \caption{\textbf{Performance evaluation of energy-optimized schedules on BBH~\cite{metric:bbh} and GSM8K~\cite{metric:gsm8k} where our beta-parameterized schedules exhibit comparable yet not better performance than the linear schedule.} Each panel corresponds to a distinct benchmark. The x-axis displays sampling steps on a logarithmic scale, while the y-axis quantifies task performance, where higher values denote superior generation quality.}
    \label{fig:mainexpnotgood}
\end{figure}

\begin{table*}[t!]
    \centering
    \caption{\textbf{Performance evaluation of beta-parameterized schedules on MBPP~\cite{metric:mbpp} benchmark.} All experiments fix generation length at 256 and higher values indicate better sampling quality.}
    \label{tab:mbpp}
    \vspace{.2cm}
    \begin{adjustbox}{max width=\textwidth}
    \begin{tabular}{l|c|cccccc}
      \toprule
  Interpolation Schedule & Mask Schedule & 8 & 16 & 32 & 64 & 128 & 256\\
      \midrule
         \multicolumn{8}{c}{Manually Designed Schedules}\\
      \midrule
        $\text{CDF}(\mathcal{B}(0.5,0.5))$   &  $t$ & $0.20\pm0.20$ & $2.00\pm0.63$ & $6.40\pm1.10$ & $16.4\pm 1.66$ & $28\pm2.01$ & $\bf{40.6\pm 2.2}$\\
          $-$ & $\sin(\frac{\pi}{2}t)$ & $0.20\pm0.20$ & $0.4\pm 0.28$ & $3.8\pm 0.86$ & $10.4\pm 1.37$ & $19.8\pm 1.78$ & $28\pm 2.01$\\
          $\text{CDF}(\mathcal{B}(1,1))$ & $\sin^2(\frac{\pi}{2}t)$ & $1.4\pm0.53$ & $4.8\pm 0.96$ & $13.8\pm 1.54$ & $20.6\pm 1.81$ & $35.2\pm 2.14$ & $\bf{40.2\pm 2.19}$ \\
      \midrule
        \multicolumn{8}{c}{Beta Reparameterizing Schedules}\\
      \midrule
        $\text{CDF}(\mathcal{B}(0.3,0.9))$ & $-$ & $0.20\pm 0.20$ & $0.20\pm 0.20$ & $0.20\pm 0.20$ & $0.20\pm 0.20$ & $0.20\pm 0.20$ & $1.6\pm0.56$\\
        $\text{CDF}(\mathcal{B}(0.9,0.3))$ & $-$ & $4.4\pm0.92$ & $\bf{14.4\pm1.57}$ & $\bf{23.4\pm 1.9}$ & $\bf{34.8\pm 2.13}$ & $\bf{39.8\pm 2.19}$ & $\bf{40\pm 2.19}$ \\
        $\text{CDF}(\mathcal{B}(1,0.2))$ & $-$ & $\bf{5.8\pm 1.05}$ & $\bf{15.4\pm 1.62}$ & $\bf{23.8\pm1.91} $ & $\bf{32\pm 2.09}$ & $36.2\pm2.15$ & $\bf{38.8\pm 2.18}$\\
        $\text{CDF}(\mathcal{B}(0.9,0.9))$ & $-$ & $1.6\pm 0.56$ & $5\pm 0.98$ & $11.6\pm 1.43$ &$20.6 \pm 1.81$ & $35.4\pm 2.14$ & $\bf{39.8\pm 2.19}$\\
        
      \bottomrule
    \end{tabular}
    \end{adjustbox}
\end{table*}

\begin{table*}[t!]
    \centering
    \caption{\textbf{Performance evaluation of beta-parameterized schedules on HumanEval~\cite{metric:humaneval} benchmark.} All experiments fix generation length at 256 and higher values indicate better sampling quality.}
    \label{tab:humaneval}
    \vspace{.2cm}
    \begin{adjustbox}{max width=\textwidth}
    \begin{tabular}{l|c|cccccc}
      \toprule
  Interpolation Schedule & Mask Schedule & 8 & 16 & 32 & 64 & 128 & 256\\
      \midrule
         \multicolumn{8}{c}{Manually Designed Schedules}\\
      \midrule
        $\text{CDF}(\mathcal{B}(0.5,0.5))$   &  $t$ & $1.83\pm1.05$ & $1.83\pm1.05$ & $5.49\pm 1.78$ & $11.59\pm 2.51$ & $25.61\pm 3.42$ & $\bf{32.32\pm3.66}$\\
          $-$ & $\sin(\frac{\pi}{2}t)$ & $1.83\pm1.05$ & $1.83\pm1.05$ & $3.05\pm1.35$ & $7.32\pm2.04$& $17.07\pm 2.95$ & $25.61\pm3.42$\\
          $\text{CDF}(\mathcal{B}(1,1))$ & $\sin^2(\frac{\pi}{2}t)$ &$4.27\pm1.58$ &$\bf{11.59\pm 2.51}$ &$13.41\pm 2.67$ & $22.56\pm 3.27$& $27.44\pm 3.49$& $\bf{31.3\pm3.63}$\\
      \midrule
        \multicolumn{8}{c}{Beta Reparameterizing Schedules}\\
      \midrule
        $\text{CDF}(\mathcal{B}(0.3,0.9))$ & $-$ & $1.83\pm 1.05$ & $1.83\pm 1.05$ & $1.83\pm 1.05$ & $1.83\pm 1.05$ & $1.83\pm 1.05$ & $1.83\pm 1.05$\\
        $\text{CDF}(\mathcal{B}(0.9,0.3))$ & $-$ & $4.88\pm 1.69$ & $\bf{12.8\pm 2.62}$ & $\bf{17.68\pm 2.99}$ & $\bf{24.39\pm 3.36}$ & $\bf{31.71\pm3.64}$ & $\bf{31.1\pm 3.63}$\\
        $\text{CDF}(\mathcal{B}(1,0.2))$ & $-$ & $\bf{10.37\pm 2.39}$ & $\bf{14.02\pm 2.72}$ & $\bf{20.73\pm 3.18}$ & $\bf{26.83\pm 3.47}$ & $\bf{28.66\pm3.54}$ & $\bf{29.88\pm 3.59}$\\
        $\text{CDF}(\mathcal{B}(0.9,0.9))$ & $-$ & $3.05\pm1.35$ & $7.93\pm 2.12$ & $10.37\pm 2.39$& $\bf{24.39\pm 3.36}$& $\bf{28.66\pm 3.54}$& $\bf{32.32\pm3.66}$\\
        
      \bottomrule
    \end{tabular}
    \end{adjustbox}
\end{table*}

\begin{table*}[t!]
    \centering
    \caption{\textbf{Performance evaluation of beta-parameterized schedules on BBH~\cite{metric:bbh} benchmark.} All experiments fix generation length at 128 and higher values indicate better sampling quality.}
    \label{tab:bbh}
    \vspace{.2cm}
    \begin{adjustbox}{max width=\textwidth}
    \begin{tabular}{l|c|ccccc}
      \toprule
  Interpolation Schedule & Mask Schedule & 8 & 16 & 32 & 64 & 128 \\
      \midrule
         \multicolumn{7}{c}{Manually Designed Schedules}\\
      \midrule
          $\text{CDF}(\mathcal{B}(0.5,0.5))$ & $t$ & $\bf{12.76\pm 0.32}$ & $\bf{15.83\pm 0.34}$ & $\bf{17.32\pm 0.34}$ & $\bf{17.91\pm0.35}$ & $\bf{19.29\pm0.35}$\\
          $-$ & $\sin(\frac{\pi}{2}t)$ & $11.95\pm0.32$ & $14.58\pm 0.33$ & $16.36\pm0.34$ & $17.42\pm0.35$ & $18.06\pm0.35$\\
          $\text{CDF}(\mathcal{B}(1,1))$ & $\sin^2(\frac{\pi}{2}t)$ & $11.29\pm 0.31$ & $14.21\pm 0.32$ & $16.83\pm 0.34$ & $17.32\pm0.35$ & $18.6\pm0.35$\\
      \midrule
        \multicolumn{7}{c}{Beta Reparameterizing Schedules}\\
      \midrule
        $\text{CDF}(\mathcal{B}(0.3,0.9))$ & $-$ & $3.7\pm0.22$ & $9.12\pm 0.3$ & $12.04\pm 0.31$ & $16.82\pm 0.34$ & $17.23\pm0.35$\\
        $\text{CDF}(\mathcal{B}(0.9,0.3))$ & $-$ & $7.71\pm 0.27$ & $10.15\pm 0.3$ & $13.52\pm0.33$ & $16.13\pm0.34$ & $18.48\pm0.35$\\
        $\text{CDF}(\mathcal{B}(0.3,0.3))$ & $-$ & $12.23\pm 0.31$ & $\bf{15.87\pm 0.34}$ & $\bf{17.02\pm0.35}$ & $17.11\pm 0.34$ & $\bf{19.12\pm0.35}$\\
        $\text{CDF}(\mathcal{B}(0.7,0.7))$ & $-$ & $12.24\pm 0.32$ & $14.91\pm 0.33$ & $16.66\pm 0.35$ & $17.6\pm0.35$ & $\bf{19.28\pm0.35}$\\
        $\text{CDF}(\mathcal{B}(0.9,0.9))$ & $-$ & $11.67\pm 0.31$ & $14.33\pm 0.33$ & $16.73\pm0.34$ & $17.62\pm0.35$ & $18.85\pm 0.35$\\
        $\text{CDF}(\mathcal{B}(1.3,1.3))$ & $-$ & $10\pm0.29$ & $13.79\pm 0.32$ & $16.36\pm0.34$ &$17.46\pm0.35$ & $17.88\pm0.35$  \\
        
      \bottomrule
    \end{tabular}
    \end{adjustbox}
\end{table*}

\begin{table*}[t!]
    \centering
    \caption{\textbf{Performance evaluation of beta-parameterized schedules on GSM8K~\cite{metric:gsm8k} benchmark.} All experiments fix generation length at 128 and higher values indicate better sampling quality.}
    \label{tab:gsm8k}
    \vspace{.2cm}
    \begin{adjustbox}{max width=\textwidth}
    \begin{tabular}{l|c|ccccc}
      \toprule
  Interpolation Schedule & Mask Schedule & 8 & 16 & 32 & 64 & 128 \\
      \midrule
         \multicolumn{7}{c}{Manually Designed Schedules}\\
      \midrule
          $\text{CDF}(\mathcal{B}(0.5,0.5))$ & $t$ & $2.43\pm0.42$ & $18.20\pm1.26$ & $\bf{40.26\pm1.35}$ & $\bf{48.37\pm1.38}$ & $\bf{49.20\pm1.38}$ \\
          $-$ & $\sin(\frac{\pi}{2}t)$ & $0.53\pm0.20$ & $6.44\pm0.68$ & $27.37\pm 1.23$ & $45.03\pm 1.37$ & $47.08\pm1.37$\\
          $\text{CDF}(\mathcal{B}(1,1))$ & $\sin^2(\frac{\pi}{2}t)$ & $4.32\pm0.56$ & $17.59\pm1.05$ & $36.47\pm 1.33$ & $46.70\pm1.37$ & $47.76\pm1.38$\\
      \midrule
        \multicolumn{7}{c}{Beta Reparameterizing Schedules}\\
      \midrule
        $\text{CDF}(\mathcal{B}(0.3,0.9))$ & $-$ & $0.61\pm0.22$ & $0.91\pm0.18$ & $0.99\pm0.27$ & $2.12\pm0.40$ &$8.42\pm 0.76$ \\
        $\text{CDF}(\mathcal{B}(0.9,0.3))$ & - & $\bf{6.14\pm0.66}$ & $15.39\pm0.99$ & $29.42\pm1.26$ & $39.20\pm 1.34$ & $46.70\pm1.37$\\
        $\text{CDF}(\mathcal{B}(0.8,0.8))$ & $-$ & $4.62\pm0.58$ & $\bf{19.64\pm 1.09}$ & $\bf{40.26\pm1.35}$ & $46.02\pm 1.37$ & $\bf{48.52\pm1.38}$\\
        $\text{CDF}(\mathcal{B}(0.9,0.9))$ & $-$ & $4.78\pm0.53$ & $\bf{19.94\pm1.10}$ & $36.54\pm1.33$ & $45.49\pm 1.37$ & $\bf{48.14\pm1.38}$\\
      \bottomrule
    \end{tabular}
    \end{adjustbox}
\end{table*}

\begin{table*}[t!]
    \centering
    \caption{\textbf{Performance evaluation of beta-parameterized schedules on Hendrycks Math~\cite{metric:math} benchmark.} All experiments fix generation length at 256 and higher values indicate better sampling quality.}
    \label{tab:hendrycks_math}
    \vspace{.2cm}
    \begin{adjustbox}{max width=\textwidth}
    \begin{tabular}{l|c|cccccc}
      \toprule
  Interpolation Schedule & Mask Schedule & 8 & 16 & 32 & 64 & 128 & 256\\
      \midrule
         \multicolumn{8}{c}{Manually Designed Schedules}\\
      \midrule
        $\text{CDF}(\mathcal{B}(0.5,0.5))$   &  $t$ & $11.5\pm0.44$ & $11.58\pm0.44$ & $12.78\pm0.46$ & $16.3\pm0.51$ & $18.86\pm0.54$ & $\bf{20.24\pm0.56}$\\
          $-$ & $\sin(\frac{\pi}{2}t)$ & $11.5\pm0.44$ & $11.5\pm0.44$ & $11.68\pm0.45$ & $14.36\pm0.48$ & $17.9\pm0.53$ & $18.84\pm 0.54$ \\
          $\text{CDF}(\mathcal{B}(1,1))$ &$\sin^2(\frac{\pi}{2}t)$ & $12.02\pm0.45$ & $16.54\pm0.51$ & $18.8\pm 0.54$ & $19.3\pm0.55$ & $\bf{20.18\pm0.56}$ & $\bf{20.18\pm0.56}$\\
      \midrule
        \multicolumn{8}{c}{Beta Reparameterizing Schedules}\\
      \midrule
        $\text{CDF}(\mathcal{B}(0.3,0.9))$ & $-$ & $11.5\pm0.44$ & $11.5\pm0.44$ & $11.5\pm0.44$ & $11.5\pm0.44$ & $11.48\pm 0.44$ & $11.52\pm0.44$\\
        $\text{CDF}(\mathcal{B}(0.9,0.3))$ & $-$ & $16.32\pm0.51$ & $\bf{18.76\pm0.54}$ & $\bf{19.5\pm0.55}$ & $\bf{19.9\pm0.55}$ & $\bf{20.18\pm 0.56}$ & $\bf{20.08\pm0.55}$\\
        $\text{CDF}(\mathcal{B}(1,0.2))$ & $-$ & $\bf{18.44\pm0.54}$ & $\bf{18.88\pm0.54}$ & $\bf{19.58\pm0.55}$ & $\bf{19.98\pm0.55}$ & $\bf{20.02\pm 0.55}$ & $\bf{19.98\pm 0.55}$\\
        $\text{CDF}(\mathcal{B}(0.9,0.9))$ & $-$ & $11.64\pm0.44$ & $15.82\pm 0.51$ & $18.5\pm0.54$ & $19.28\pm0.55$ & $\bf{20.2\pm 0.56}$ & $\bf{20.2\pm 0.56}$\\
      \bottomrule
    \end{tabular}
    \end{adjustbox}
\end{table*}

\begin{table*}[t!]
    \centering
    \caption{\textbf{Performance evaluation of beta-parameterized schedules on Minerva Math~\cite{lewkowycz2022solving} benchmark.} All experiments fix generation length at 256 and higher values indicate better sampling quality.}
    \label{tab:minerva_math}
    \vspace{.2cm}
    \begin{adjustbox}{max width=\textwidth}
    \begin{tabular}{l|c|cccccc}
      \toprule
  Interpolation Schedule & Mask Schedule & 8 & 16 & 32 & 64 & 128 & 256\\
      \midrule
         \multicolumn{8}{c}{Manually Designed Schedules}\\
      \midrule
        $\text{CDF}(\mathcal{B}(0.5,0.5))$   &  $t$ &$0.12\pm0.05$ & $0.50\pm0.10$&$4.70\pm0.30$ & $16.54\pm0.51$& $\bf{26.68\pm0.59}$ & $\bf{30.10\pm0.61}$\\
        $-$ & $\sin(\frac{\pi}{2}t)$ & $0.04\pm0.03$ & $0.26\pm0.07$ &$0.92\pm0.13$ &$8.74\pm0.39$ & $21.70\pm0.56$ & $27.26\pm0.60$ \\
        $\text{CDF}(\mathcal{B}(1,1))$ &$\sin^2(\frac{\pi}{2}t)$ &$0.30\pm0.08$ & $1.84\pm0.19$&$6.50\pm0.34$ &$15.94\pm0.50$ & $25.8\pm0.59$ & $29.20\pm0.61$\\
      \midrule
        \multicolumn{8}{c}{Beta Reparameterizing Schedules}\\
      \midrule
        $\text{CDF}(\mathcal{B}(0.3,0.9))$ & $-$ &$0.04\pm0.03$ &$0.12\pm0.05$ &$0.06\pm0.03$ & $0.20\pm0.06$& $0.34\pm0.08$ & $0.90\pm0.13$\\
        $\text{CDF}(\mathcal{B}(0.9,0.3))$ & $-$ &$\bf{1.40\pm0.17}$ & $\bf{5.56\pm0.32}$& $\bf{13.16\pm0.46}$& $\bf{20.84\pm0.55}$& $\bf{26.84\pm0.59}$ & $29.36\pm0.61$ \\
        $\text{CDF}(\mathcal{B}(1,0.2))$ & $-$ &$\bf{1.28\pm0.16}$ & $4.82\pm0.30$& $10.64\pm0.43$&$17.14\pm0.51$ & $23.68\pm0.57$ & $26.78\pm0.59$\\
        $\text{CDF}(\mathcal{B}(0.9,0.9))$ & $-$ & $0.18\pm0.06$& $1.48\pm0.17$&$6.90\pm0.35$ &$17.28\pm0.51$ & $26.0\pm0.59$ & $\bf{29.54\pm0.61}$ \\
      \bottomrule
    \end{tabular}
    \end{adjustbox}
\end{table*}

\subsection{Further Discussions on Task-specific Schedule Preferences}
\label{exp:preference}

Our methodology provides both a theoretical foundation for understanding schedule preferences and a practical mechanism for task-specific optimization -- overcoming the limitations of previous one-size-fits-all approaches -- rather than proposing a universally superior schedule.

Therefore, the empirical parity on certain benchmarks (see Appendix~\ref{app:raw}) confirms that linear schedules already serve as near-optimal candidates for specific task categories. In fact, linear schedule $\alpha_t = t$ is a special case in our framework since it corresponds to $\gamma_t=CDF(\mathcal{B}(0.5,0.5))$, a point in our parameter space.

Specifically, $\gamma_t=CDF(\mathcal{B}(0.5,0.5))$ has higher derivatives on both initial and final generation phases while being relatively static in the middle. Since $\dot{\gamma}_t$ appears in the denominator of our energy functional expressions, tasks requiring sustained refinement throughout generation (particularly middle phases) rather than the starting or ending phases might inherently favor the linear schedule.

Among the six tasks we experimented on, BBH~\cite{metric:bbh} is only one that focuses on general reasoning problems. As for GSM8K~\cite{metric:gsm8k}, although it is mathematics-focused, its answers use detailed natural language explanations compared to other math benchmarks. This might be the reason why sustained refinement in the middle phases are preferred. Considering this similarity, it is reasonable that GSM8K~\cite{metric:gsm8k} and BBH~\cite{metric:bbh} have similar preference on schedules although it is difficult to mathematically deduce the exact expression of the energy functionals. 

Considering the complexity of real-world tasks, it is indeed a challenging direction to systematically study the relationship between tasks and their prefered schedules. A promising research direction involves designing intermediate benchmarks that balance practical relevance with analytical tractability - more sophisticated than our current toy examples yet simpler than real-world tasks.

\subsection{Additional Samples}
\label{app:additional}

We present comparative samples from identical tasks and sampling steps under different schedules. Fig.~\ref{tab:exp_math} illustrates how optimized schedules produce mathematically coherent outputs, while Fig.~\ref{tab:exp_code} provides code generation examples demonstrating better quality.

\begin{table}[t!]
\begin{minipage}{0.99\textwidth}
\centering
\caption{\textbf{Comparison of beta reparameterized and manually designed schedules on math-reasoning task Hendrycks Math~\cite{metric:math}. }Sampling steps set as 64.}
\label{tab:exp_math}
\vspace{5pt} 
\scalebox{1}{
\begin{tabular}{l p{5.3cm} p{5.3cm}}
\toprule
& \multicolumn{2}{p{10.6cm}}{
\textbf{Prompt} 

In a 8 fluid ounce bottle of Vitamin Water, there are 125 calories. How many calories would be contained in a 12 fluid ounce bottle? Express your answer in decimal form.} \\
\midrule
& \textbf{Beta Parameterized Schedule} 

$\gamma_t = \text{CDF}_{\mathcal{B}(0.9,0.3)}(t)$: 

\vspace{5mm}

find the 12 fluid ounce bottle, we need to multiply the number of calories in the 8 fluid ounce bottle by 1.5.

So, the answer to be 125  * 1.5 = 187.5..

Therefore, the answer 187.5 calories are contained in a 12 fluid ounce bottle of Vitamin Water.input:: & \textbf{Manually Designed Schedule}

$\alpha_t=t$:

\vspace{5mm}

187.5

Solution:

125/8 = 187.5

1:8 = 125.5

1:1 = 125.5

1:2 = 125.5

1:3 = 125.5

1:4 = 125.5

1:5 = 125.5

1:6 = 125.5

1:7 = 125.5

1:8 = 125.5

1:9 = 125.5

1:10 = 125.5

1:11 = 125.5

1:12 = 125.5

1:13 = 125.5

1:14 = 125.5

1:15 = 125.5

1:16 = 125.5

1:17 = 125.5

1:18 = 125.5

1:19 = 125.5

1:2\\

\bottomrule
\end{tabular}
}
\end{minipage}
\end{table}

\begin{table}[t!]
\begin{minipage}{0.99\textwidth}
\centering
\caption{\textbf{Comparison of beta reparameterized and manually designed schedules on code-generating task HumanEval~\cite{metric:humaneval}. }Sampling steps set as 64.}
\label{tab:exp_code}
\vspace{5pt} 
\scalebox{1}{
\begin{tabular}{l p{5.3cm} p{5.3cm}}
\toprule
& \multicolumn{2}{p{10.6cm}}{
\textbf{Prompt} 

def bf(planet1, planet2):

'''

There are eight planets in our solar system: the closerst to the Sun
is Mercury, the next one is Venus, then Earth, Mars, Jupiter, Saturn,
Uranus, Neptune.
Write a function that takes two planet names as strings planet1 and planet2.
The function should return a tuple containing all planets whose orbits are
located between the orbit of planet1 and the orbit of planet2, sorted by
the proximity to the sun.
The function should return an empty tuple if planet1 or planet2
are not correct planet names.

Examples
bf("Jupiter", "Neptune") ==> ("Saturn", "Uranus")
bf("Earth", "Mercury") ==> ("Venus")
bf("Mercury", "Uranus") ==> ("Venus", "Earth", "Mars", "Jupiter", "Saturn")

'''} \\
\midrule
& \textbf{Beta Parameterized Schedule} 

$\gamma_t = \text{CDF}_{\mathcal{B}(1,0.2)}(t)$: 

\vspace{5mm}

planets = ("Mercury", "Venus", "Earth", "Mars", "Jupiter", "Saturn", "Uranus", "Neptune") 

if = planet1 and planet2 and planets.index(planet1) < planets.index(planet2)

return = planetsplanets.index(planet.index (planetplan planets.index(planet1) + 1: planets.index(planet2)

return () & \textbf{Manually Designed Schedule}

$\alpha_t=t$:

\vspace{5mm}

planets = = ("Mercury", "Venus", "Earth", "Mars", "Jupiter", "Saturn", "Uranus", "Neptune")

    planets = = ("Mercury", "Venus", "Earth", "Mars", "Jupiter", "Saturn", "Uranus", "Neptune")
    
    planets = = ("Mercury", "Venus", "Earth", "Mars", "Jupiter", "Saturn", "Uranus", "Neptune")
    
    planets = = ("Mercury", "Venus", "Earth", "Mars", "Jupiter", "Saturn", "Uranus", "Neptune")
    
    planets = = ("Mercury", "Venus", "Earth", "Mars", "Jupiter", "Saturn", "Uranus", "Neptune")
    
    planets = = ("Mercury", "Venus", "Earth", "Mars", "Jupiter", "Saturn", "Uranus", "Neptune")
    
    planets = = ("Mercury", "Venus", "Earth", "Mars", "Jupiter", "Saturn", "Uranus", "Neptune")
    
    planets = =\\

\bottomrule
\end{tabular}
}
\end{minipage}
\end{table}

\subsection{Details of Toy Sampling Experiment in Fig.~\ref{fig:ex3}}
\label{app:toyexp}

The implementation of the toy experiment in Fig.~\ref{fig:ex3} considers a target distribution defined over sequences with $n=5$ tokens and vocabulary size $d=2$. This configuration yields $32$ distinct sentences, indexed from $0$ to $31$ on the x-axis.

The vocabulary contains two words $a$ and $b$ and the $32$ sentences on the x-axis are ordered first by ascending count of $a$ tokens, with sentences containing identical numbers of $a$ tokens further sorted lexicographically. The target distribution in the left panel thus explicitly designates only the extreme cases ($aaaaa$ and $bbbbb$) as legal sentences, while the right panel's target distribution considers sentences with $2$ or $3$ $a$ tokens as valid. In both distributions, all legal sentences maintain uniform probability mass.

Our sampling experiments employ $step=3$ without any training phase, as the target distributions can be analytically computed. This setup directly demonstrates the effectiveness of task-specific schedule tuning.

\end{appendices}

\end{document}